\documentclass{article}

\usepackage[preprint, nonatbib]{neurips_2026}
\usepackage{lineno}

\usepackage[utf8]{inputenc} \usepackage[T1]{fontenc}    \usepackage{hyperref}       \usepackage{url}            \usepackage{booktabs}       \usepackage{amsfonts}       \usepackage{nicefrac}       \usepackage{microtype}      \usepackage{xcolor}         

\usepackage{titletoc}

\usepackage[sort&compress]{natbib}
\usepackage{graphicx}
\usepackage{caption}
\usepackage{wrapfig}

\usepackage{algorithm}
\usepackage{algpseudocode}

\usepackage{enumitem}
\usepackage{empheq}

\usepackage{xcolor}

\usepackage{amsmath,amssymb,mathtools}
\usepackage{bbm}
\usepackage{xfrac}

\newcommand{\EE}{\mathbb{E}}

\newcommand{\NN}{\mathbb{N}}

\newcommand{\RR}{\mathbb{R}}

\newcommand{\Iin}{\mathbf{1}}

\newcommand{\OW}{{\rm OW}}
\newcommand{\ET}{{\rm ET}}
\newcommand{\Pe}{{\rm Pe}}

\newcommand{\GSen}{\Cc}

\newcommand{\EmpKly}{{\rm Emp_{Kly}}}

\newcommand{\Aa}{\mathcal{A}}

\newcommand{\Cc}{\mathcal{C}}

\newcommand{\Ff}{\mathcal{F}}
\newcommand{\Gg}{\mathcal{G}}

\newcommand{\Oo}{\mathcal{O}}

\newcommand{\Ss}{\mathcal{S}}

\newcommand{\Zz}{\mathcal{Z}}

\newcommand{\goal}{{\rm goal}}
\newcommand{\skill}{{\rm skill}}

\newcommand{\tS}[1]{_{#1}}

\usepackage{aliascnt}
\usepackage{amsthm}

\newtheorem{mytheor}{Theorem}[section]

\newtheorem{mylemma}{Lemma}[section]
\newtheorem{myprop}{Proposition}[section]

\newtheorem{mycorrol}{Corollary}[section]

\definecolor{mygrey}{RGB}{125, 125, 125}
\definecolor{mygrey2}{RGB}{100, 100, 100}
\definecolor{mygrey3}{RGB}{50, 50, 50}
\definecolor{mygreen}{RGB}{0, 120, 0}
\definecolor{myred}{RGB}{120, 0, 0}
\definecolor{myred2}{RGB}{200, 0, 0}
\definecolor{myblue}{RGB}{0, 0, 120}
\definecolor{myblue2}{RGB}{0, 0, 180}
\definecolor{mypurple}{HTML}{862990}

\hypersetup{
    colorlinks,
linkcolor={myred},
citecolor={mygrey2},
    urlcolor={blue!80!black}
}

\usepackage{pifont}

\newcommand{\cbl}[1]{\textcolor{blue}{#1}}

\newcommand{\rtcite}[1]{ref. \cite{#1}}

\usepackage{setspace}

\newcommand{\appropto}{\mathrel{\vcenter{
  \offinterlineskip\halign{\hfil$##$\cr
    \propto\cr\noalign{\kern2pt}\sim\cr\noalign{\kern-2pt}}}}}

\newenvironment{exmode}{\par\color{myblue}}{\par \vspace{0.3cm}}

\usepackage{pgfplotstable}
\usepackage{booktabs}
\pgfplotsset{compat=1.18} 

\title{Unifying Goal-Conditioned RL and Unsupervised Skill Learning via Control-Maximization}

\author{Alireza Modirshanechi$^{1,2,*}$,
    $\,\,$Benjamin Eysenbach$^{3}$,
    $\,\,$Peter Dayan$^{2,4,\dagger}$,
    $\,\,$Eric Schulz$^{1,\dagger}$
    \vspace{0.1cm}\\
    \small{$^1$ Helmholtz Munich, Germany} $\quad$
    \small{$^2$ Max Planck Institute for Biological Cybernetics, Germany}\\
    \small{$^3$ Princeton University, USA} $\quad$
    \small{$^4$ University of Tübingen, Germany
    \vspace{0.1cm}}\\
    \small{$^*$ \texttt{alireza.modirshanechi@helmholtz-munich.de}}\\
    \small{$^\dagger$ These authors contributed equally to this work}
    \vspace{-0.3cm}
}

\begin{document}

\maketitle
\begin{abstract}
    Unsupervised pretraining has driven empirical advances in goal-conditioned reinforcement learning (GCRL), but its theoretical foundations remain poorly understood. In particular, an influential class of methods, mutual information skill learning (MISL), discovers behaviorally diverse skills that can later be used for downstream goal-reaching. However, it remains a theoretical mystery why skills learned through MISL should support goal-reaching. A subtle challenge is that both GCRL and MISL are umbrella terms: different GCRL tasks use distinct criteria for measuring goal-reaching performance, while different MISL methods optimize distinct notions of behavioral diversity. We address this challenge and unify GCRL and MISL as instances of control maximization. We identify three canonical GCRL formulations and prove that they are fundamentally inequivalent: they can induce incompatible optimal policies even in the same environment. Nevertheless, they all share a common interpretation: a well-performing goal-conditioned policy is one whose future trajectory is highly sensitive to the commanded goal, with the precise notion of sensitivity determined by the GCRL formulation. Noting that MISL objectives can be understood as measures of skill-sensitivity akin to goal-sensitivity, we show that MISL objectives are bounded by formulation-specific downstream goal-sensitivities. These bounds establish a precise correspondence between MISL methods and downstream GCRL tasks: for every GCRL formulation, there exists a matching MISL objective for which more diverse skills afford greater downstream goal sensitivity. Our results thus lay a theoretical foundation for RL pretraining and have important practical implications, such as suggesting which pretraining objectives to use when a user cares about a specific class of downstream tasks.
\end{abstract}

\section{Introduction}

Much of the past success of reinforcement learning (RL), from mastering Atari games~\citep{mnih2015human} to defeating human champions in Go~\citep{silver2016mastering}, has focused on solving single-objective tasks.
However, a burgeoning recent line of work explores the multi-task setting, where a single agent must learn solutions to several diverse tasks, akin to the success of multi-task pretraining that has driven advances in natural language processing~\citep{brown2020language, wei2021finetuned} and computer vision~\citep{radford2021learning, he2022masked}.
In particular, \textit{goal-conditioned RL} (GCRL) addresses an important special case where agents must learn to be able to pursue arbitrary goals, whether internally generated or externally commanded~\citep{kaelbling1993learning, sutton2011horde, schaul2015universal, veeriah2018many, florensa2018automatic, park2025dual, bortkiewicz2024accelerating, wang2023optimal}.
GCRL has seen striking empirical success in recent years, largely driven by pretraining methods that learn reusable skills and representations prior to downstream goal-reaching~\citep{florensa2018automatic, eysenbach2020c, eysenbach2022contrastive, mendonca2021discovering, ma2022offline, park2023metra}.
Among these methods, \textit{mutual-information skill learning} (MISL) has emerged as an influential framework for discovering diverse behaviors, where diversity is measured by the mutual information (MI) between skills and their behavioral consequences~\citep{gregor2016variational, achiam2018variational, eysenbach2018diversity, sharma2019dynamics, zheng2024can} (see~\cite{park2023metra, park2022lipschitz} for alternatives).
MISL skills have been empirically shown to support downstream GCRL, yet why this is the case remains theoretically puzzling.
Specifically, from an information-theoretic perspective, MISL trains agents to learn skills that are maximally \textit{communicable} through behavior: an observer can reliably infer which skill an agent is executing simply by watching it act.
But why should the skills that are useful for this communication problem also be useful for goal-reaching?

A central obstacle in addressing this question is the multiplicity of MISL methods and GCRL formulations.
On the MISL side, different methods rely on different quantifications of behavioral diversity (\autoref{fig:general}, right).
For example, some methods consider two skills to be distinct if they lead the agent to different terminal states~\citep{sharma2019dynamics, levy2023hierarchical} or different state occupancies~\citep{eysenbach2021information}, while others require the entire behavioral trajectory to differ~\citep{achiam2018variational, laskin2022cic}.
On the GCRL side, different tasks rely on distinct criteria to measure goal-reaching performance (\autoref{fig:general}, left).
For example, in some settings the goal \textit{persistently} exists, and the agent is rewarded repeatedly for revisiting it~\citep{eysenbach2020c, eysenbach2022contrastive, mendonca2021discovering, ma2022offline}; in others, reward is given only if the agent reaches the goal \textit{at an exact time}~\citep{ghosh2019learning, pong2019skew, warde2018unsupervised}; in others still, reward is given if the goal is reached at any point \textit{within an opportunity window}~\citep{schaul2015universal, veeriah2018many, florensa2018automatic, park2025dual}.
While these differences are often treated as minor modeling choices, we show that they define genuinely distinct optimization problems with fundamentally different optimal policies.
This raises a critical question.
\textit{How can we link MISL to GCRL theoretically when both terms refer to distinct families of problems?}
Specifically, \textit{is there a theoretical account to identify which MISL objective is best suited to which GCRL formulation?}

We answer these questions by observing that all MISL methods and GCRL formulations, despite their apparent differences, share a common interpretation: they all seek to maximize the agent's degree of \textit{control} over its environment.
This shared perspective allows us to derive systematic bounds linking MISL objectives to controllability and, in turn, to downstream GCRL performance.
The practical takeaway is clear: no MISL objective is optimal for \textit{every} flavor of goal-reaching, and the appropriate pretraining objective should be chosen to match the goal-reaching criterion expected at test time.
Accordingly, our work bridges GCRL and MISL through four main contributions (\autoref{fig:general}); \textit{all standing assumptions and limitations are explicitly discussed in \autoref{sec:prelim} and \autoref{sec:conc}.}

\begin{figure}[t!]
    \centering 
    \includegraphics[width=0.9\textwidth]{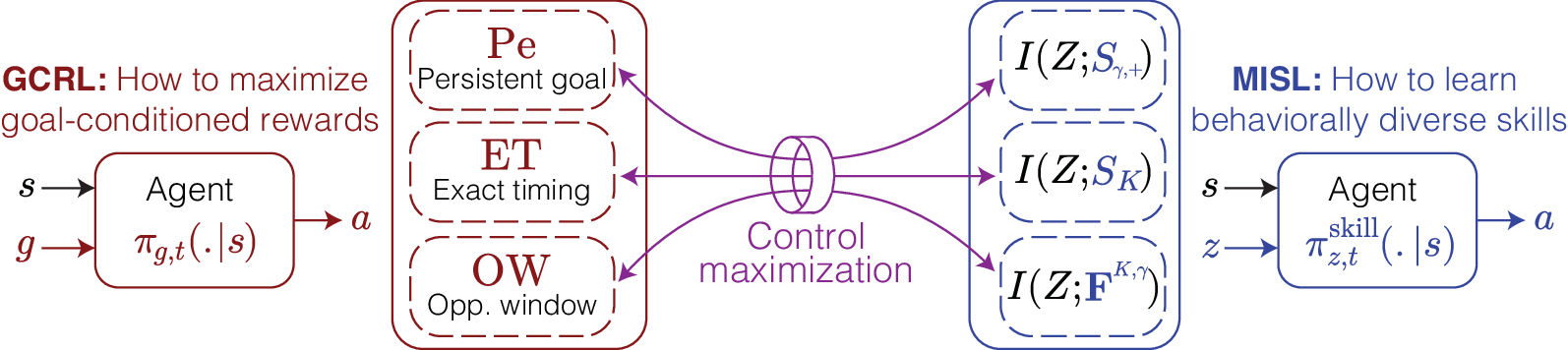}
    \caption{We unify GCRL and MISL as control-maximization problems and prove a correspondence between distinct GCRL formulations and MISL objectives.}
    \label{fig:general}
\end{figure}

\textbf{Contribution 1. Distinct GCRL formulations are generally inequivalent} (\autoref{prop:settings_contradicting}, \autoref{fig:formulations}).
We identify three canonical GCRL formulations and show that they generally induce different optimal policies.
Hence, algorithms designed for one formulation may not perform well in another.

\textbf{Contribution 2. Distinct GCRL formulations share the same optimal policy under special conditions} (\autoref{prop:stationary}-\ref{prop:OWETwaiting}, \autoref{fig:taxonomy}).
We identify conditions under which algorithms designed for one formulation can be safely transferred to another.

\textbf{Contribution 3. GCRL objectives are equivalent to maximizing formulation-specific goal sensitivity} (\autoref{theor:decomp}, \autoref{fig:Emp}).
Goal sensitivity is a novel measure of an agent's degree of control that reflects both the environment's intrinsic controllability and the agent's competence at goal-directed behavior.
This serves as a unifying lens across all three GCRL formulations.

\textbf{Contribution 4. Each GCRL formulation has a matching MISL objective} (\autoref{theor:MI_empowerment}, \autoref{prop:MIZunifvsf}, \autoref{fig:GSenEmp}-\ref{fig:GSenEmpApp}).
We show that formulation-specific goal sensitivities bound distinct MISL objectives, revealing which MISL objective is most suitable for each downstream GCRL formulation.

\section{Related work} \label{sec:related-work}

We will build our theory upon the idea that control maximization is the key to bridging GCRL and MISL, but we should first ask what it means for an agent to be `in control.'
Informally, an agent is in control if it can steer the course of events in accordance with its `intentions.'
We argue that GCRL optimizes quantities that intuitively align with this notion of control, while MISL objectives are formal variants of existing measures that explicitly quantify it.
After briefly reviewing this related work, the subsequent sections show that these intuitive connections can be made precise.

\textbf{GCRL performance as an implicit measure of controllability.}
GCRL algorithms aim to train agents that can reach arbitrary goals in their environment~\citep{kaelbling1993learning, sutton2011horde, bortkiewicz2024accelerating, agarwal2023f, schaul2015universal}.
A well-performing agent must therefore be able to steer the course of events toward its goal.
This is possible only if the agent has sufficient control over its future trajectory, a perspective that aligns with recent theories that define control as preparedness for performing diverse tasks~\citep{modirshanechi2025integrative}.
From this perspective, modern GCRL methods can be seen as different approaches to control maximization, whether through training on difficult-to-reach goals~\citep{florensa2018automatic}, replaying past experiences~\citep{andrychowicz2017hindsight, mendonca2021discovering}, or learning contrastive representations of goals~\citep{eysenbach2020c, eysenbach2022contrastive}.
Yet despite sharing this intuition, these methods rely on different goal-reaching formulations without explicitly distinguishing or relating them.
We make this control maximization view precise and show when these differences have serious consequences.

\textbf{Explicit accounts of control and empowerment.}
Explicit mathematical definitions of control generally fall into two categories: control-theoretic~\citep{bansal2017hamilton, chilakamarri2025reachability, bansal2021deepreach, abate2008probabilistic, thorpe2019model, thorpe2021approximate, sontag2013mathematical, ogata2020modern} and information-theoretic~\citep{klyubin2005empowerment, salge2014empowerment, jung2011empowerment, leibfried2019unified}.
RL algorithms most commonly draw on the latter, with Klyubin empowerment as the most representative example: the maximum MI between actions and future states~\citep{klyubin2005empowerment}.
Informally, Klyubin empowerment measures the degree to which an agent can steer the course of events through its choice of actions.
This notion and its variants have been used in RL in various ways, from intrinsic rewards that guide agents toward controllable parts of the environment~\citep{leibfried2019unified, bharadhwaj2022information, jung2011empowerment, becker2021exploration, gruaz2024merits} to learning signals that help agents build controllable representations~\citep{cao2025towards, burda2018large, pathak2017curiosity}.
However, both the proper multi-step definition of empowerment and its relationship to RL objectives remain debated~\citep{klyubin2005empowerment, capdepuy2011informational, myers2024learning, abel2025plasticity, modirshanechi2025integrative}.
We show that our precise control-maximization view of GCRL closely relates to existing empowerment-like measures.

\textbf{MISL objectives as variants of empowerment.}
Alternatively, we can quantify control not through the choice of actions, but through the choice of \textit{skills}, where a skill describes a potentially complex action policy.
This leads to variants of Klyubin empowerment, defined as the MI between skills and future states.
This is precisely how MISL objectives quantify behavioral diversity~\citep{gregor2016variational, achiam2018variational, eysenbach2018diversity, sharma2019dynamics, zheng2024can}, but existing methods differ substantially in the specifics: they may consider different variables as the agent's `future state'~\citep{sharma2019dynamics, levy2023hierarchical, eysenbach2021information, achiam2018variational, laskin2022cic}, optimize marginal or conditional MI~\citep{eysenbach2018diversity, gregor2016variational, zheng2024can, sharma2019dynamics, levy2023hierarchical}, use fixed or infinite horizons~\citep{sharma2019dynamics, zheng2020can}, and rely on different variational bounds~\citep{choi2021variational, mohamed2015variational, poole2019variational, laskin2022cic}.
Which variant works best is often decided empirically, based on both qualitative measures (e.g, skill interpretability to a human inspector) and quantitative ones (e.g., skill classification accuracy)~\cite{achiam2018variational}.
A particularly important quantitative measure is the downstream GCRL performance when using the learned skills~\cite{achiam2018variational, zheng2024can, gregor2016variational, eysenbach2018diversity}.

Here, we ask why maximizing an MISL objective should necessarily result in good downstream GCRL performance.
This is a fundamental open question in RL pretraining, and prior work has provided only partial answers for specific formulation-objective pairs: some works show that, under a one-to-one goal-state correspondence, a specific variational bound on an MISL objective equals to a specific GCRL performance~\citep{choi2021variational, levy2023hierarchical}, while others show how learned skill distributions may minimize certain notions of regret~\citep{eysenbach2021information} or reflect ground-truth environmental structure~\citep{reizinger2025skill}.
Our contribution is complementary: without restricting to a single formulation-objective pair or imposing a strict skill-to-goal correspondence, we establish that each MISL objective is theoretically linked to the downstream GCRL performance of its matching formulation via precise information-theoretic bounds. 
\section{Notation and preliminaries}
\label{sec:prelim}
We consider an agent that interacts with an environment with the state space $\Ss$, a state-dependent action space $\Aa(s)$ at state $s$, and the probability $p(s'|s,a)$ for the transition $(s,a) \to s'$.
We assume that $\Ss$ and $\Aa(s)$ are countable and finite, with $N_s := |\Ss|$ and $N_a(s) := |\Aa(s)|$.
We use capital letters to represent random variables, but we omit the capital letter notation when no ambiguity arises.

\paragraph{Goal-conditioned policy and value.}
In its most general case, we assume that, at time $t \in \NN$, given a goal $g \in \Gg$, the agent follows a \textit{goal-conditioned}, \textit{non-stationary policy} $\pi_{g,t}$ (\autoref{fig:general}, left):
\begin{linenomath*} \begin{equation} \begin{aligned} \label{eq:GCpolicy}
    \pi_{g,t}(a|s) := 
p^{\pi_{g,t}}(A_t = a|S_t = s).
\end{aligned} \end{equation} \end{linenomath*}
We use $\pi_{\{.,.\}} := \{ \pi_{g,t} \}_{g \in \Ss, t \in \NN}$ to denote the set of all \textit{non-stationary}, \textit{goal-conditioned} policies.
As special cases, we denote a goal-conditioned \textit{stationary} policy by $\pi_{\{g,-\}}$, and a non-stationary but \textit{goal-independent} policy by $\pi_{\{-,t\}}$.
Given $\pi_{\{g,.\}}$, we define $p^{\pi_{\{g,.\}}}(\tau_t | s_0)$ as the probability of the agent's trajectory $\tau_t := (a_{0}, s_{1}, \dots, a_{t-1}, s_{t})$, starting from $s_{0}$.
The agent's objective will be defined in terms of a goal-dependent reward function $R_t(s; g): \NN \times \Ss \times \Gg \to \RR$ and discount factor $\gamma_t(s; g) : \NN \times \Ss \times \Gg \to [0,1]$, with $\gamma_0(s; g) := 1$. 
While these terms are typically constant over time, our analysis shows that making them time-dependent enables us to describe different GCRL formulations with the same unified notation (\autoref{sec:divergence});
given $g \in \Gg$, the GCRL objective is
\begin{linenomath*} \begin{equation} \begin{aligned} 
    \label{eq:general_value}
    J(s, g, \pi_{\{.,.\}}) &:= 
    \EE^{\pi_{\{g,.\}}} \left[ 
    \sum_{t=1}^\infty  R_t(S_t; g) \prod_{k=0}^{t-1} \gamma_k(S_k; g)
    \Big| S_0 = s\right].
\end{aligned} \end{equation} \end{linenomath*}

\paragraph{Uninformative goal distribution for the test-time performance.}
We consider a task-agnostic, general-purpose evaluation setting. 
We assume the space of goals is the same as the state space (i.e., $\Gg = \Ss$), and posit a uniform goal distribution, i.e., $G \sim p_\goal = {\rm Uniform}(\Ss)$.
Then, the test-time performance measures how well the agent performs for reaching a uniformly sampled goal state, i.e.,
\begin{linenomath*} \begin{equation} \begin{aligned} 
    \label{eq:general_expvalue}
    J(s, \pi_{\{.,.\}}) &:= 
    \underbrace{\EE_{\cbl{G} \sim p_\goal}}_{\text{averaging over all goals}} 
    \Big[
    \underbrace{J(s, \cbl{G}, \pi_{\{.,.\}})}_{\text{performance for reaching $\cbl{G}$}}
    \Big] =
    \frac{1}{N_s} \sum_{\cbl{g} \in \Ss} J(s, \cbl{g}, \pi_{\{\cbl{g},.\}}).
\end{aligned} \end{equation} \end{linenomath*}
An optimal policy $\pi^*_{\{.,.\}}$ is one that maximizes the expected return $J(s, g, \pi^*_{\{.,.\}})$ for reaching any goal $g$ from any state $s$, i.e., $J(s, g, \pi^*_{\{.,.\}}) = J^*(s, g) := \max_{\pi_{\{.,.\}}} J(s, g, \pi_{\{.,.\}})$~\citep{puterman2014markov}.
As a result, $\pi^*_{\{.,.\}}$ also maximizes the test-time performance: $J(s, \pi^*_{\{.,.\}}) = J^*(s) := \max_{\pi_{\{.,.\}}} J(s, \pi_{\{.,.\}})$.
\textit{Almost all our results also hold for non-uniform goal distributions} $p_\goal$ (see \autoref{append:goal_dist}).

\paragraph{Unsupervised skill discovery and MISL.}
Unsupervised skill discovery assumes a skill set $\Zz$ and a skill-conditioned policy $\pi_{z,t}^\skill (a |s)$ for each skill $z \in \Zz$ (notation analogous to \autoref{eq:GCpolicy}).
We assume that $\Zz$ is finite and countable, with $N_z := |\Zz|$.
MISL typically assumes the fixed uniform skill prior, $Z \sim {\rm Unif}(\Zz)$, and learns diverse skill-conditioned policies by maximizing MI between skills $Z$ and a behavioral variable $S'$ (e.g., $S_K$~\citep{sharma2019dynamics, gregor2016variational} or $\tau_K$~\citep{achiam2018variational, laskin2022cic}) that is defined based on the agent's trajectory~\citep{eysenbach2018diversity, gregor2016variational, achiam2018variational, zheng2024can, sharma2019dynamics}:
\begin{linenomath*} \begin{equation} \begin{aligned} \label{eq:MISLobj}
    \text{Skill-behavior MI:} \quad
    J_{\rm MISL}(s, \pi_{\{.,.\}}^\skill; S')
    :=
    I^{\pi_{\{.,.\}}^\skill}_{Z \sim {\rm Unif}(\Zz)}
    \!\left(Z; S' \mid S_0=s \right),
\end{aligned} \end{equation} \end{linenomath*}
where skills are sampled uniformly, and $S'|Z$ is sampled by running $\pi_{\{Z,.\}}^\skill$.
Intuitively, maximizing this objective results in skills that are maximally distinguishable based on $S'$.
A successful choice~\citep{eysenbach2021information} of objective is $S' = S_{\gamma,+}$ which is defined as $S_K$ for $K \sim {\rm Geom}(1-\gamma)$ with $\gamma \in [0,1)$. 
The distribution of $S_{\gamma,+}$ is given by the normalized discounted state occupancy~\citep{eysenbach2020c, eysenbach2022contrastive, dayan1993improving}.

Pre-training via MISL provides a skill-conditioned policy that likely induces different behaviors across skills.
The pre-trained policy can then be used for downstream GCRL by learning (or assuming) a downstream mapping $f: \Ss \to \Zz$ from goals to skills~\citep{zheng2020can, park2023metra, park2022lipschitz}.
This provides a goal-conditioned policy:
\begin{wrapfigure}{r}{0.4\textwidth}
  \centering
  \includegraphics[width=\linewidth]{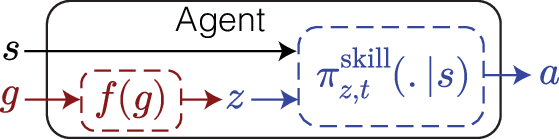}
\end{wrapfigure}
\begin{linenomath*} \begin{equation} \begin{aligned} \label{eq:skillpolicy}
    \pi_{g,t}(a|s) := \pi_{f(g),t}^\skill(a|s).
\end{aligned} \end{equation} \end{linenomath*}
Consistent with prior work~\citep{zheng2020can, park2023metra, park2022lipschitz, choi2021variational}, we assume that the mapping $f$ is deterministic.

\paragraph{Summary of standing assumptions.}
Unless otherwise stated, Sections~\ref{sec:divergence}-\ref{sec:empowerment} assume (i) finite state, action, and skill spaces, (ii) state-goal equivalence ($\mathcal{G}=\mathcal{S}$), (iii) a uniform test-time goal distribution, (iv) a uniform MISL skill prior, and (v) a deterministic goal-to-skill map $f:\mathcal{S}\to\mathcal{Z}$. 
\autoref{append:goal_dist} relaxes the uniform-goal assumption; the remaining restrictions are discussed in \autoref{sec:conc}. 
\section{Different GCRL formulations are incompatible} 
\label{sec:divergence}

We first ask how different studies specify GCRL through the choice of $R_t(s; g)$ and $\gamma_t(s; g)$ in \autoref{eq:general_value}.
Typically, $R_t(s; g)$ decreases with the `distance' between $s$ and $g$ so that the states closer to the goal receive higher reward~\cite{choi2021variational, pong2018temporal, nair2018visual}.
We focus on sparse rewards, i.e., $R_t(s; g)=0$ for all $s \neq g$, and consider three representative formulations (\autoref{fig:formulations}, left).

\textbf{1. Persistent goal}, $\Pe(\gamma)$, models infinite-horizon settings with a persistent goal~\cite{eysenbach2020c, eysenbach2022contrastive, mendonca2021discovering, ma2022offline}.
At $t=0$, the agent is given a random goal $g \sim p_\goal$.
It then receives a reward $1-\gamma \in (0,1]$ whenever it visits $g$, with future rewards discounted by $\gamma$.
In the limit $\gamma \to 1$, $\Pe(\gamma)$ recovers target-occupancy optimization~\cite{krass2002achieving, dufour2022maximizing} (\autoref{prop:stationary}; \autoref{fig:taxonomy}).
Using \autoref{eq:general_value}, $\Pe(\gamma)$ is defined, for $t>1$, by
$R_t(s; g) := (1 - \gamma) \Iin_{s=g}$ and
$\gamma_t(s; g) := \gamma$,
which yields
\begin{linenomath*} \begin{equation} \begin{aligned}
    \label{eq:Pe_J}
    J(s, g, \pi_{\{.,.\}}) &=
    J_\Pe(s, g, \pi_{\{.,.\}}, \gamma) =
    p^{\pi_{\{g,.\}}}(S_{\gamma,+} = g | S_0 = s).
\end{aligned} \end{equation} \end{linenomath*}
\textbf{2. Exact timing}, $\ET(K)$, models settings in which the agent must reach the goal in exactly $K \in \NN$ steps~\cite{ghosh2019learning, pong2019skew, warde2018unsupervised}.
At $t=0$, the agent is given a random goal $g \sim p_\goal$ and receives reward 1 only if it reaches $g$ at $t=K$.
Using \autoref{eq:general_value}, $\ET(K)$ is defined, for $t>1$, by
$R_t(s; g) := \Iin_{s=g}\Iin_{t=K}$ and
$\gamma_t(s; g) := 1$,
yielding
\begin{linenomath*} \begin{equation} \begin{aligned}
    \label{eq:ET_J}
    J(s, g, \pi_{\{.,.\}})
    = J_\ET(s, g, \pi_{\{.,.\}}, K)
    = p^{\pi_{\{g,.\}}}(S_K = g | S_0 = s).
\end{aligned} \end{equation} \end{linenomath*}
If the agent does not know $K$ and instead assumes $K \sim {\rm Geom}(1-\gamma)$, then $\ET(K)$ becomes equivalent to $\Pe(\gamma)$ (\autoref{pro:PeETbar}; \autoref{fig:taxonomy}).

\textbf{3. Opportunity window}, $\OW(K,\gamma)$, models settings in which the agent must reach the goal \emph{within} $K \in \NN$ steps~\cite{schaul2015universal, veeriah2018many, florensa2018automatic, park2025dual}.
At $t=0$, the agent is given a random goal $g \sim p_\goal$ and receives reward 1 if it reaches $g$ at some time $t \leq K$.
To favor faster goal reaching, rewards within the window are discounted by $\gamma \in [0,1]$.
Thus, $\OW(K,\gamma)$ generalizes stochastic shortest-path and maximum reward-rate objectives~\cite{kaelbling1993learning, wang2023optimal} (\autoref{prop:min_lenght}; \autoref{fig:taxonomy}).
Using \autoref{eq:general_value}, $\OW(K,\gamma)$ is defined, for $t>1$, by
$R_t(s; g) := \Iin_{s=g}\Iin_{t \leq K}$ and
$\gamma_t(s; g) := \gamma \Iin_{s \neq g}$.
Let $T_g := \min \{t \geq 1 : S_t = g\}$.
Then
\begin{linenomath*} \begin{equation} \begin{aligned}
    \label{eq:OW_J}
    J(s, g, \pi_{\{.,.\}})
    = J_\OW(s, g, \pi_{\{.,.\}}, K, \gamma)
    = \EE^{\pi_{\{g,.\}}}\!\left[
    \gamma^{T_g-1} \Iin_{T_g \leq K}
    | S_0 = s \right].
\end{aligned} \end{equation} \end{linenomath*}
These formulations cover much of the GCRL literature, and are natural, application-relevant, and theoretically distinct. 
However, we note that they are also not exhaustive; e.g., one could define a variant of $\OW$ in which the goal remains persistent but only \textit{within} the opportunity window~\cite{agarwal2023f}.

Our first theoretical result is that the three formulations can induce different optimal policies.
\begin{myprop}
    \label{prop:settings_contradicting}
    Consider the three formulations $\OW(K, \gamma)$, $\Pe(\gamma)$, and $\ET(K)$.
    There exists an environment $p(.|.,.)$, a horizon $K \in \NN$, and a discount factor $\gamma \in [0,1)$ such that the optimal policies under each formulation are different.
\end{myprop}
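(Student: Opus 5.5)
The plan is to exhibit one small explicit environment and a single fixed goal $g$. Since an optimal policy $\pi^*_{\{.,.\}}$ must maximize $J(s,g,\cdot)$ for every goal, it suffices to show that for this $g$ and the start state $s_0$ the three formulations have unique, pairwise different maximizing actions at $s_0$. The constraint driving the construction is this. If the goal can be visited at most once along any trajectory, then $J_\Pe(s_0,g,\pi,\gamma) = (1-\gamma)\,\EE[\gamma^{T_g-1}]$. This is proportional to $J_\OW$ with an infinite window. So $\Pe$ and $\OW$ can differ only through the truncation at $K$, while $\ET$ and $\OW$ differ only through reaching $g$ before $K$. I will therefore use routes that differ in the hitting time $T_g$ and in the probability of hitting at all.

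\textbf{Construction.} I will use the states $s_0, b_1, c_1, c_2, g, x$, where $x$ is an absorbing trap. From $g$ the only action leads to $x$, so $g$ is visited at most once and the behaviour after the first hit is policy-independent. Fix $p\in(0,1)$. At $s_0$ there are three actions:
\begin{itemize}
\item $a_A$ moves to $g$ with probability $p$ and to $x$ otherwise, giving $T_g=1$ with probability $p$;
\item $a_B$ moves deterministically to $b_1$, from which the single action moves to $g$ with probability $p$ and to $x$ otherwise, giving $T_g=2$ with probability $p$;
\item $a_C$ moves deterministically along $c_1\to c_2\to g$, giving $T_g=3$ with probability $1$.
\end{itemize}
All other states have a single action, so a policy for goal $g$ from $s_0$ is determined by its action at $t=0$. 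I take $K=2$ and choose $\gamma\in(0,1)$ with $\gamma^2>p$, for instance $p=1/4$ and $\gamma=0.9$.

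\textbf{Computing the values.} For $\OW(2,\gamma)$, \autoref{eq:OW_J} gives $p$, $\gamma p$ and $0$ for $a_A$, $a_B$, $a_C$ respectively, so $a_A$ is uniquely optimal. For $\ET(2)$, \autoref{eq:ET_J} gives $p^{\pi}(S_2=g)$, which is $0$, $p$ and $0$ respectively. Here $a_A$ yields $S_2=x$ because $g$ is left immediately, so $a_B$ is uniquely optimal. For $\Pe(\gamma)$, \autoref{eq:Pe_J} with a single visit at time $T_g$ gives $(1-\gamma)p$, $(1-\gamma)\gamma p$ and $(1-\gamma)\gamma^2$. Since $\gamma^2>p$, $a_C$ is uniquely optimal. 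Hence the three optimal policies differ at $(s_0,t=0)$ for goal $g$, which proves the claim. The remaining goals and states do not matter, because the maximizer must be optimal for every goal and we have already shown a disagreement for one.

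\textbf{Main obstacle.} The main obstacle is the observation above: the dynamics out of $g$ cannot depend on how the agent arrived. So one cannot make $\Pe$ prefer a route that lingers at $g$ while $\OW$ prefers an early transient visit; lingering would then be available to every route. Making $g$ lead only to a trap removes this issue. Separating $\Pe$ from $\OW$ is then achieved purely through the window truncation, via the late but certain route $a_C$. The only routine checks left are confirming the unique argmax in each formulation and confirming that the reward and discount conventions at $t\le 1$ in \autoref{eq:general_value} match the closed forms in \autoref{eq:Pe_J}--\autoref{eq:OW_J}.
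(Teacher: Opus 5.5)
Your construction is correct and follows essentially the same route as the paper's proof. The paper also uses $K=2$ and three routes to $g$: an immediate risky jump, a delayed risky jump, and a slow but certain bridge, with $\Pe(\gamma)$ preferring the bridge, $\ET(2)$ the delayed jump, and $\OW(2,\gamma)$ the immediate jump. The one difference is the mechanism separating $\ET$ from $\OW$: the paper keeps $g$ absorbing and uses unequal success probabilities $\epsilon_1<\epsilon_2$ with $\gamma\epsilon_2<\epsilon_1$, giving the window $\max\{\sqrt{\epsilon_1},\epsilon_2\}<\gamma<\epsilon_1/\epsilon_2$, whereas your forced exit from $g$ to a trap makes early arrival worthless under $\ET(2)$, so equal probabilities and the single condition $\gamma^2>p$ suffice.
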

The proof is provided in \autoref{append:proofs}, but the intuition is that incompatibility becomes important when the agent needs to choose to reach the goal either quickly or reliably.
For example, consider an agent that must cross a river to reach $g$ from $s_1$ (\autoref{fig:formulations}, right).
Suppose jumping has a low success rate (e.g., 8\% from $s_1$), while taking the bridge is slower.
Then, depending on $\gamma$ and $K$, the formulations disagree on whether or when to jump (\autoref{fig:formulations}, right).

\begin{figure}[t!]
    \centering 
    \includegraphics[width=1\textwidth]{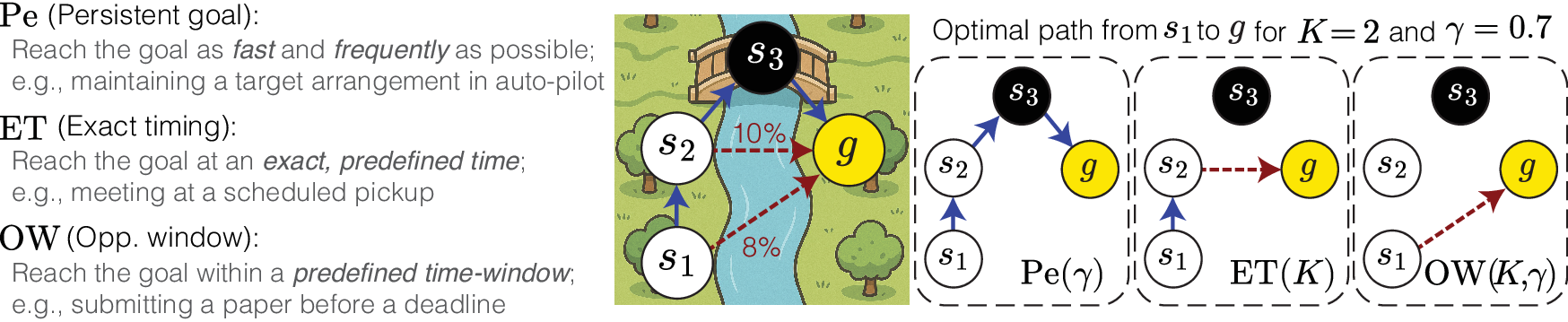}
    \caption{Different GCRL formulations yield incompatible optimal policies (\autoref{prop:settings_contradicting}).
    }
    \label{fig:formulations}
\end{figure}
 
\subsection{Why the identified theoretical incompatibility matters in practice}

The small differences in these objectives often translate into large differences in algorithms for each setting.
For example, C-Learning~\cite{eysenbach2020c}, contrastive RL~\cite{eysenbach2022contrastive}, LEXA~\cite{mendonca2021discovering}, and GoFAR~\cite{ma2022offline} either exploit the stationary, recursive Bellman structure of $\Pe$ or directly rely on the equivalence between its objective and discounted state occupancy (\autoref{eq:Pe_J}).
Because $\ET$ and $\OW$ do not share these properties, those algorithms do not transfer to these finite-horizon formulations without non-trivial changes.
Conversely, many goal-sampling and hindsight-relabeling methods are designed for finite-horizon episodic settings~\cite{andrychowicz2017hindsight, pong2019skew, nair2018visual, ghosh2019learning, florensa2018automatic}.
For instance, the simplest relabeling rule in Hindsight Experience Replay (HER)~\cite{andrychowicz2017hindsight} replaces the commanded goal with the final state of a length-$K$ trajectory, inducing a terminal-state bias closely aligned with $\ET(K)$.
By contrast, Goal GAN~\cite{florensa2018automatic} is built around the probability of reaching a goal within $K$ steps and therefore aligns more naturally with $\OW$.

In short, many GCRL algorithms are coupled to a particular formulation.
The practical lesson of \autoref{prop:settings_contradicting} is therefore straightforward: slight mismatches between the formulations assumed by the learning algorithms and those used for test-time evaluation may result in substantial performance loss. 
To help avoid such mismatches in practice, \autoref{sec:empowerment} establishes a correspondence between MISL methods and GCRL formulations that guides the choice of MISL objectives.

\subsection{Equivalence conditions: When general incompatibility does not matter in practice} \label{sec:equivalence}

Despite the general incompatibility, there are various special cases in which the formulations, or the optimal GCRL policies they induce, are identical. 
For example, consider the one-step case in which the agent is rewarded only if it reaches the goal in a single step, as soon as the goal is commanded.
This is a special case of all three formulations, meaning that methods developed for any of the three apply equally well to the others in this case (\autoref{prop:one_step_case}; \autoref{fig:taxonomy}).
\begin{figure}[t!]
    \centering
    \includegraphics[width=1\textwidth]{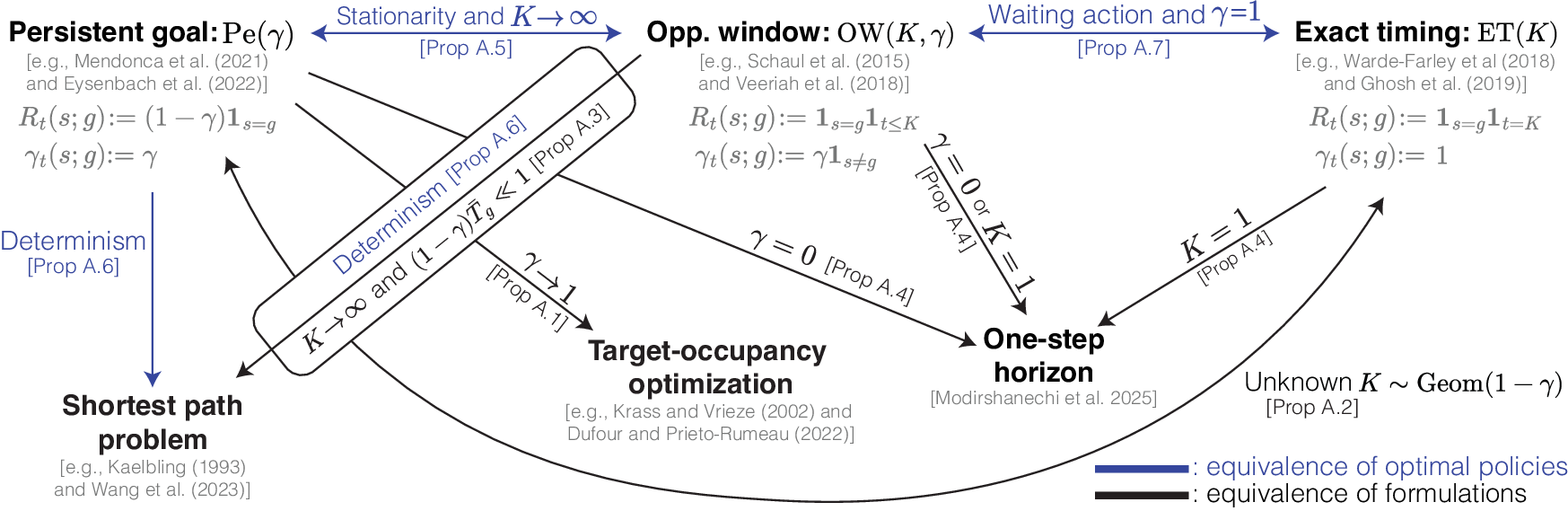}
    \caption{Equivalence conditions. Black edges indicate identical policy orderings; blue edges indicate shared optimal policies. See \autoref{append:props}.}
    \label{fig:taxonomy}
\end{figure}

We have identified several natural conditions under which different formulations share optimal policies (blue edges in \autoref{fig:taxonomy}).
For example, \autoref{prop:OWPeinfK} shows that if the opportunity window is arbitrarily long ($K \to \infty$), then $\OW(K, \gamma)$ and $\Pe(\gamma)$ share the same stationary optimal policy.
An immediate consequence is that policies learned by methods developed for $\Pe$ to repeatedly visit a goal state (e.g., contrastive RL~\cite{eysenbach2022contrastive}) can also be used for $\OW(\infty, \gamma)$ to quickly reach the goal, even though there is no reward for re-visitation of the goal.
\autoref{prop:OWPeDeterm} shows that, if the environment is deterministic, then this remains true even for finite opportunity windows.
In other words, in deterministic environments (which is the case for some canonical GCRL benchmarks~\cite{yu2020meta, bortkiewicz2024accelerating}), both formulations boil down to finding the shortest path to the goal, and neither the duration of the opportunity window nor whether the goal is persistent matters.
A similar result holds for $\OW(K,\gamma = 1)$ and $\ET(K)$ when the environment provides `waiting' actions that allow the agent to stay at its current state (\autoref{prop:OWETwaiting}).
In simple terms, if the agent gets to a goal state as fast as it can and `waits' there, then its behavior is optimal for both $\OW(K,\gamma = 1)$ and $\ET(K)$.
Together, these results clarify when policies learned for one formulation can be applied to another (\autoref{fig:taxonomy}).
 
\section{A precise control-maximization account of GCRL}
\label{sec:control}

In the previous section, we showed that GCRL comprises incompatible formulations.
Yet, across all formulations, a well-performing agent must \textit{intuitively} have a high degree of control over its future trajectory to pursue its desired goal.
Here, we make this intuition precise.

Regardless of the formulation, a well-performing goal-conditioned policy $\pi_{\{.,.\}}$ should achieve a high value $J(s, g, \pi_{\{g,.\}})$ when conditioned on the correct goal $g$.
Motivated by this, we call a goal-conditioned policy $\pi_{\{.,.\}}$ \textit{consistent} if the policy is better at achieving goal $g$ when aiming for goal $g$ than when aiming for another goal \textcolor{myred2}{$g'$}, i.e.,
\begin{linenomath*} \begin{equation} \begin{aligned} \label{def:consistency}
    \text{consistency condition:}
    \underbrace{J(s, g, \pi_{\{g,.\}})}_{\text{pursuing $g$ and being rewarded by $g$}}
    \geq
    \underbrace{J(s, g, \pi_{\{\textcolor{myred2}{g'},.\}})}_{\text{pursuing \textcolor{myred2}{$g'$} while being rewarded by $g$}}
\end{aligned} \end{equation} \end{linenomath*}
for every $s,g,\textcolor{myred2}{g'} \in \Ss$ (see \autoref{append:assump} for attainability of this condition).
To measure the degree of consistency, we can compute the average performance gain from conditioning on the commanded goal rather than on a randomly chosen one:
\begin{linenomath*} \begin{equation} \begin{aligned} \label{def:control}
    \boxed{\text{goal-sensitivity:} \quad
    \GSen(s, \pi_{\{.,.\}}) := \frac{1}{N_s^2} \sum_{g, \textcolor{myred2}{g'} \in \Ss}
    \Big( J(s, g, \pi_{\{g,.\}}) - J(s, g, \pi_{\{\textcolor{myred2}{g'},.\}}) \Big).}
\end{aligned} \end{equation} \end{linenomath*}
This difference, which we will refer to as \textbf{goal sensitivity}, formalizes the intuitive notion of controllability discussed in prior work (see~\autoref{sec:related-work}): it measures how much the agent's intention to pursue a particular goal influences the critical parts of its future trajectory. 
Goal-sensitivity is bounded and, for consistent policies, non-negative:
\begin{linenomath*} \begin{equation} \begin{aligned} \label{eq:optimal_sensitivity}
    0 \leq \GSen(s, \pi_{\{.,.\}}) \leq \GSen^*(s) := \max_{\pi_{\{.,.\}}} \GSen(s, \pi_{\{.,.\}}) < \infty,
\end{aligned} \end{equation} \end{linenomath*}
where $\GSen^*(s)$ is the maximal goal-sensitivity that an agent can afford in state $s$.
In \autoref{append:props_snes}, we show that $\GSen^*(s)$ can be precisely linked to common measures of the environment's \emph{objective controllability}, including Klyubin empowerment~\cite{klyubin2005empowerment}.
Hence, we can view the policy-dependent quantity $\GSen(s, \pi_{\{.,.\}})$ as the agent's subjective \emph{degree of control} over its future trajectory.
An agent can increase its degree of control through learning (\autoref{fig:Emp}B vs. C), but it remains fundamentally bounded by the environment's objective controllability $\GSen^*(s)$ (\autoref{fig:Emp}A).

\begin{figure}[t!]
    \centering
    \includegraphics[width=1\textwidth]{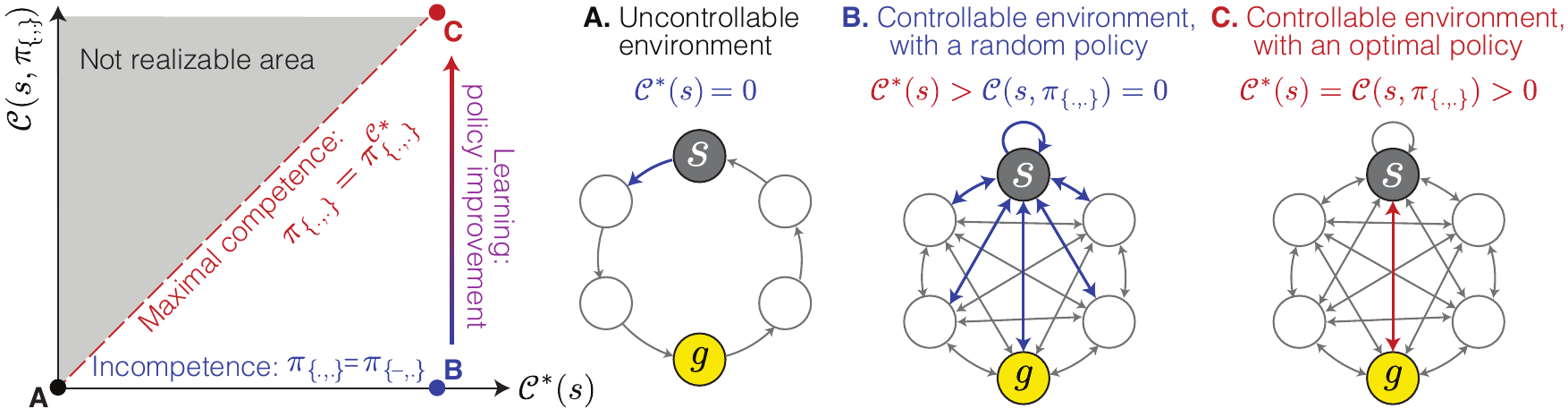}
    \caption{Goal sensitivity $\GSen(s,\pi_{\{.,.\}})$ reflects both objective controllability $\GSen^*(s)$ and agent competence. 
    In uncontrollable environments, it is zero (A); in fully controllable environments, it depends on whether the policy ignores goals (B) or reliably selects goal-reaching actions (C).}
\label{fig:Emp}
\end{figure}

Accordingly, alongside the optimal policy $\pi^{*}$ that maximizes $J$ in \autoref{eq:general_expvalue}, we can define a maximally in-control policy $\pi^{\GSen*}$ that maximizes $\GSen$ in \autoref{def:control}.
But does maximizing control in the sense of $\GSen$ always help GCRL performance?
\autoref{theor:decomp} shows that this is indeed the case: $\GSen(s, \pi_{\{.,.\}})$ is monotonically equivalent to GCRL performance for $\Pe$ and $\ET$ and provides a tight lower bound for $\OW$ or any GCRL formulation with non-negative rewards.
This implies that the maximally in-control policy $\pi^{\GSen*}_{\{.,.\}}$ is optimal for $\Pe$ and $\ET$ and tightly bounds the regret for $\OW$ (proof in \autoref{append:proofs}).
\begin{mytheor} \label{theor:decomp}
    Consider a GCRL formulation defined by $R_t(s; g)$ and $\gamma_t(s; g)$. Then,
    \begin{enumerate}[leftmargin=2.5em, itemsep=0pt]
        \item For $\Pe(\gamma)$ and $\ET(K)$, we have
        $J(s, \pi_{\{.,.\}}) = \GSen(s, \pi_{\{.,.\}}) + \frac{1}{N_s}$.
        As a result, a policy $\pi^{\GSen*}_{\{.,.\}}$ is maximally in-control iff it is optimal.

        \item For $\OW(K, \gamma)$ and any formulation with non-negative rewards (i.e., $R_t(s; g) \geq 0$), we have
        $J(s, \pi_{\{.,.\}})  \geq \frac{N_s}{N_s - 1}\GSen(s, \pi_{\{.,.\}}).$
        Equality holds iff $J(s, g, \pi_{\{g',.\}}) = 0$ for all $g' \neq g$.

        \item For $\OW(K, \gamma)$, there exists an environment $p(.|.,.)$, $K \in \NN$, and $\gamma \in [0,1]$ such that a maximally in-control policy is not optimal.
        Nevertheless, for $\OW(K, \gamma)$ in any environment, we have
        $0 \le J^*(s) - J(s, \pi^{\GSen*}_{\{.,.\}}) \le 1 - \frac{N_s}{N_s - 1}\GSen^*(s)$,
        so larger $\GSen^*(s)$ yields a lower regret.
    \end{enumerate}
\end{mytheor}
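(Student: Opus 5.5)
The approach is to expand the definition of goal sensitivity and separate its two terms. Write
\[
\GSen(s,\pi_{\{.,.\}}) = J(s,\pi_{\{.,.\}}) - \frac{1}{N_s^2}\sum_{g,g'} J(s,g,\pi_{\{g',.\}}).
\]
The first term equals $J(s,\pi_{\{.,.\}})$ because the outer average over $g'$ is trivial. Everything then depends on the double sum $B := \frac{1}{N_s^2}\sum_{g'}\sum_{g} J(s,g,\pi_{\{g',.\}})$, which we handle by fixing the pursued goal $g'$ and summing over the rewarding goal $g$.

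\textbf{Part 1.} For $\Pe(\gamma)$, \autoref{eq:Pe_J} gives $\sum_g J(s,g,\pi_{\{g',.\}}) = \sum_g p^{\pi_{\{g',.\}}}(S_{\gamma,+}=g\mid S_0=s) = 1$, since this is a sum of a probability distribution over $\Ss$. For $\ET(K)$, the same holds with $S_K$ in place of $S_{\gamma,+}$, using \autoref{eq:ET_J}. Hence $B = \frac{1}{N_s^2}\cdot N_s = \frac{1}{N_s}$, which yields $J = \GSen + \frac{1}{N_s}$. Because the additive constant does not depend on the policy, the argmax sets of $J$ and $\GSen$ coincide, which gives the iff statement.

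\textbf{Part 2.} With non-negative rewards every $J(s,g,\pi_{\{g',.\}})$ is non-negative, so dropping the off-diagonal terms can only decrease $B$. We keep only the diagonal $g'=g$, which gives $B \ge \frac{1}{N_s^2}\sum_g J(s,g,\pi_{\{g,.\}}) = \frac{1}{N_s}J(s,\pi_{\{.,.\}})$. Substituting, $\GSen \le J\,(1-\frac{1}{N_s})$, that is, $J \ge \frac{N_s}{N_s-1}\GSen$. Equality holds iff every dropped term vanishes, which is exactly the stated condition. For $\OW$, rewards are indicators and hence non-negative, so $\OW$ is covered.

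\textbf{Part 3.} The regret bound follows from $J^*(s)\le 1$ for $\OW$, since the return $\gamma^{T_g-1}\Iin_{T_g\le K}$ lies in $[0,1]$. Applying Part 2 to $\pi^{\GSen*}$ gives $J(s,\pi^{\GSen*}) \ge \frac{N_s}{N_s-1}\GSen^*(s)$. The lower bound $0\le J^*-J(\pi^{\GSen*})$ is immediate from the definition of $J^*$.

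The main obstacle is the counterexample showing that a maximally in-control policy need not be optimal. Maximizing $\GSen$ rewards two things: high diagonal values, and low off-goal values $J(s,g,\pi_{\{g',.\}})$. So the task is to find an environment where reaching $g'$ quickly forces passing through $g$ within $K$ steps. The construction I would try first is a deterministic chain $s\to a\to b$ with $K=2$ and $\gamma=1$. Pursuing $b$ optimally passes through $a$, which inflates $J(s,a,\pi_{\{b,.\}})$. Suppose there is also a slower or riskier alternative route to $b$ that avoids $a$, for instance a stochastic jump with success probability $q$. A $\GSen$-maximizer may then prefer the avoiding route whenever the loss of $1-q$ in diagonal value, weighted by $1/N_s^2$, is smaller than the gain from driving the cross term to zero. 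I would tune $q$ and state counts so that this trade-off is strict, and check it by direct enumeration of the few deterministic policies. Since the reward-structure decomposition is exact, the comparison reduces to a small finite computation.
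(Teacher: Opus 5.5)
Your Parts 1 and 2, and the regret bound in Part 3, follow the paper's proof essentially verbatim. They use the same split of $\GSen$ into $J(s,\pi_{\{.,.\}})$ minus the double sum, the normalization $\sum_g J(s,g,\pi_{\{g',.\}})=1$ for $\Pe$ and $\ET$, the dropping of the non-negative off-diagonal terms, and $J^*(s)\le 1$.

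\textbf{The gap is the existence claim in Part 3.} You leave it as a plan; the paper exhibits a small environment, also with $K=2$ and $\gamma=1$. The criterion you state for your candidate is wrong. Because $\GSen$ separates over the pursued goal, the contribution of $\pi_{\{g',.\}}$ to $N_s^2\,\GSen$ is $(N_s-1)\,J(s,g',\pi_{\{g',.\}})-\sum_{g\neq g'}J(s,g,\pi_{\{g',.\}})$. This has two consequences. First, a diagonal loss is weighted by $(N_s-1)/N_s^2$, not $1/N_s^2$. Second, the off-goal mass of the jump route cannot be zero. $S_1$ is always visited, so $\sum_g J(s,g,\pi_{\{g',.\}})\ge 1$, and a jump that succeeds only with probability $q$ leaves at least $1-q$ of off-goal mass wherever it fails to. Read literally, your criterion (loss $(1-q)/N_s^2$ versus gain $1/N_s^2$) accepts any $q>0$. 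That is false: with $N_s=4$ and $q=1/2$, walking still maximizes $\GSen$. The correct comparison for goal $b$ is $N_s-2$ (walk) against $N_s q-1$ (jump), so the $\GSen$-maximizer jumps iff $q>1-1/N_s$.

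With this correction your construction does go through. Take $\Ss=\{s,a,b,x\}$. At $s$, either step to $a$, or jump, landing in $b$ with probability $q$ and in an absorbing trap $x$ otherwise. The move $a\to b$ is forced, $b$ and $x$ are absorbing, and set $K=2$, $\gamma=1$, $q\in(3/4,1)$. For goal $b$, the objective $(N_s-1)J(s,b,\cdot)-\sum_{g\neq b}J(s,g,\cdot)$ equals $2+\alpha(4q-3)$ for jump probability $\alpha$. Hence every maximally in-control policy jumps and gets $J(s,b,\cdot)=q$, while the optimal policy walks and gets $1$. For goals $a$, $x$, $s$, the $\GSen$-maximizing choice is also $J$-optimal, so the regret is exactly $(1-q)/4>0$. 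The failure state must not allow reaching $b$ within the window; otherwise the jump would dominate in $J$ as well and the example collapses.
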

Hence, maximizing goal-sensitivity is (approximately) sufficient for solving the underlying GCRL problem.
This result makes our control-maximization interpretation of GCRL precise and provides novel insights into why learning behaviorally diverse skills or accurate goal representations can benefit GCRL.
The next section further formalizes this connection.
 
\section{Different GCRL formulations need different MISL objectives}
\label{sec:empowerment}

We now return to our original question: \textit{why does MISL help downstream goal-reaching?}
To answer this, we study the goal-conditioned policy $\pi_{g,t}$ built from a skill-conditioned policy $\pi_{z,t}^\skill$ via the goal-to-skill mapping $z = f(g)$ as in \autoref{eq:skillpolicy}.
We proceed in two steps.
\textbf{First}, we derive an information-theoretic approximation of the goal-sensitivity of $\pi_{g,t}$ (\autoref{theor:MI_empowerment}; \autoref{fig:GSenEmp}, middle).
\textbf{Second}, we show that this approximation tightly bounds the MISL objective (\autoref{prop:MIZunifvsf}; \autoref{fig:GSenEmp}, right).

\subsection{Information-theoretic approximation of goal-sensitivity}

Let $S'$ be one of the behavioral variables commonly used in MISL objectives (e.g., $S_K$ or $S_{\gamma,+}$; \autoref{eq:MISLobj}).
We can quantify how sensitively $S'$ depends on the commanded goal $G$ via their MI,
\begin{linenomath*} \begin{equation} \begin{aligned} \label{eq:MIGS}
    \text{Goal-behavior MI:} \quad
    I^{\pi_{\{.,.\}}}
    \!\left(G; S' \mid S_0=s \right),
\end{aligned} \end{equation} \end{linenomath*}
where $G$ is sampled uniformly and $S'|G$ is obtained by running $\pi_{\{G,.\}}$.
This goal-behavior MI closely resembles the skill-behavior MI in \autoref{eq:MISLobj}, but its explicit dependence on goals allows us to link it to goal-sensitivity: we show that different goal-behavior MIs are tightly bounded by the goal-sensitivity of their matching GCRL formulation.
Specifically, for $\Pe(\gamma)$ and $\ET(K)$, the goal-sensitivity provides tight bounds on the goal-behavior MI with $S' = S_{\gamma,+}$ and $S' = S_K$, respectively.
On the other hand, $\OW(K,\gamma)$ exposes a gap in the existing MISL objective; our bound points to a novel behavioral variable: the discounted first-visit vector $S' = \mathbf{F}^{K,\gamma}$, an $N_s$-dimensional vector whose $g$th entry encodes the discounted reward of first reaching state $g$.
The closest existing candidate in the literature is the full trajectory $\tau_K$~\citep{achiam2018variational, laskin2022cic}, which yields only a loose upper bound (\autoref{corr:trajstate}).

The precise statement is given in \autoref{theor:MI_empowerment} (proof in \autoref{append:proofs}; summary in \autoref{fig:GSenEmp}--\ref{fig:GSenEmpApp}).
\begin{mytheor}
\label{theor:MI_empowerment}
    Let
    $\GSen_{\Pe}(s,\pi_{\{.,.\}},\gamma)$,
    $\GSen_{\ET}(s,\pi_{\{.,.\}},K)$, and
    $\GSen_{\OW}(s,\pi_{\{.,.\}},K,\gamma)$
    be the goal sensitivities associated with
    $\Pe(\gamma)$, $\ET(K)$, and $\OW(K,\gamma)$ for policy $\pi_{\{.,.\}}$. 
    \begin{enumerate}[leftmargin=1.5em, itemsep=0pt]
    \item For $\Pe(\gamma)$ and $\ET(K)$, we have tight lower bounds,
    \begin{linenomath*}
    \begin{empheq}[box=\fbox]{equation}
    \begin{aligned}
    I^{\pi_{\{.,.\}}}\!\left(G;\textcolor{mypurple}{S_{\gamma,+}}\mid S_0=s\right)
    &\ge
    \Phi^{\rm down}_{N_s}\!\Big(
    N_s^{-1} + 
    \GSen_{\textcolor{mypurple}{\Pe}}(s,\pi_{\{.,.\}},\gamma)
    \Big),\\
    I^{\pi_{\{.,.\}}}\!\left(G;\textcolor{mypurple}{S_K}\mid S_0=s\right)
    &\ge
    \Phi^{\rm down}_{N_s}\!\Big(
    N_s^{-1} + 
    \GSen_{\textcolor{mypurple}{\ET}}(s,\pi_{\{.,.\}},K)
    \Big),
    \end{aligned} \end{empheq} \end{linenomath*}
    where $\Phi^{\rm down}_N(x):=\log N-h(x)-(1-x)\log(N-1)$, with $h$ the binary entropy.
    $\Phi^{\rm down}_{N_s}(x)$ is increasing for $x = 1/N_s + \GSen \in [1/N_s, 1]$, which is always the case given \emph{consistency} (\autoref{def:consistency}).
    
    \item For $\Pe(\gamma)$ and $\ET(K)$, if $\pi_{\{.,.\}}$ is \emph{consistent} (\autoref{def:consistency}), then we have tight upper bounds,
    \begin{linenomath*}
    \begin{empheq}[box=\fbox]{equation}
    \begin{aligned}
    I^{\pi_{\{.,.\}}}\!\left(G;\textcolor{mypurple}{S_{\gamma,+}}\mid S_0=s\right)
    &\le
    \Phi^{\rm up}_{N_s}\!\Big(
    N_s^{-1} + 
    \GSen_{\textcolor{mypurple}{\Pe}}(s,\pi_{\{.,.\}},\gamma)
    \Big),\\
    I^{\pi_{\{.,.\}}}\!\left(G;\textcolor{mypurple}{S_K}\mid S_0=s\right)
    &\le
    \Phi^{\rm up}_{N_s}\!\Big(
    N_s^{-1} + 
    \GSen_{\textcolor{mypurple}{\ET}}(s,\pi_{\{.,.\}},K)
    \Big),
    \end{aligned} \end{empheq} \end{linenomath*}
    where
    $\Phi^{\rm up}_N(x):=\log N
    -\big(\lceil x^{-1}\rceil x-1\big)\lfloor x^{-1}\rfloor\log\lfloor x^{-1}\rfloor
    -\big(1-\lfloor x^{-1}\rfloor x\big)\lceil x^{-1}\rceil\log\lceil x^{-1}\rceil$, with $\lfloor\cdot\rfloor$ denoting the floor and $\lceil\cdot\rceil := \lfloor\cdot\rfloor + 1$.
    $\Phi^{\rm up}_{N_s}(x)$ is increasing for $x = 1/N_s + \GSen \in [1/N_s, 1]$, which is always the case given \emph{consistency} (\autoref{def:consistency}).
    
    \item For $\OW(K,\gamma)$, let
    $\mathbf{F}^{K,\gamma}:=(F^{K,\gamma}_g)_{g\in\Ss}$, where
    $F^{K,\gamma}_g:=\gamma^{T_g-1}\Iin_{T_g\le K}$ and
    $T_g:=\min\{t\ge1:S_t=g\}$. 
    Then
    \begin{linenomath*}
    \begin{empheq}[box=\fbox]{equation}
    I^{\pi_{\{.,.\}}}\!\left(G;\textcolor{mypurple}{\mathbf{F}^{K,\gamma}}\mid S_0=s\right)
    \ge
    2\,\GSen_{\textcolor{mypurple}{\OW}}(s,\pi_{\{.,.\}},K,\gamma)^2.
    \end{empheq}
    \end{linenomath*}
    The inverse bound is possible with additional assumptions, e.g., as in \autoref{prop:OW_upper_bound_F}.
    \end{enumerate}
\end{mytheor}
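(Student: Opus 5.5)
The plan is to reduce all three parts to classical relations between mutual information and decoding or testing performance. The bridge is that in each formulation the per-goal value is an expectation of a function of $S'$. For $\Pe(\gamma)$ and $\ET(K)$, \autoref{eq:Pe_J} and \autoref{eq:ET_J} give
\[
J(s,g,\pi_{\{g',.\}}) = p(S'=g\mid G=g', S_0=s),
\]
with $S'=S_{\gamma,+}$ or $S'=S_K$ respectively. For $\OW(K,\gamma)$, \autoref{eq:OW_J} gives
\[
J(s,g,\pi_{\{g',.\}}) = \EE[F^{K,\gamma}_g\mid G=g'],
\]
where $F^{K,\gamma}_g\in[0,1]$. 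Throughout, $G$ is uniform and all quantities are conditional on $S_0=s$.

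\textbf{Part 1 (lower bounds via Fano).} Consider the trivial decoder $\hat G := S'$, which is well defined because $\Gg=\Ss$. Its average success probability is $P_c = \frac{1}{N_s}\sum_g p(S'=g\mid G=g)$. By \autoref{theor:decomp}(1), $P_c = J(s,\pi_{\{.,.\}}) = N_s^{-1}+\GSen$. Fano's inequality then gives
\[
H(G\mid S') \le h(1-P_c) + (1-P_c)\log(N_s-1).
\]
Since $h(1-x)=h(x)$ and $I=\log N_s - H(G\mid S')$, this is exactly $I \ge \Phi^{\rm down}_{N_s}(P_c)$. Monotonicity on $[1/N_s,1]$ follows from the derivative
\[
\frac{d}{dx}\Phi^{\rm down}_{N}(x) = \log\frac{x(N-1)}{1-x},
\]
which is $\ge 0$ iff $x\ge 1/N$. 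Consistency gives $\GSen\ge 0$, which keeps $x$ in this range.

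\textbf{Part 2 (upper bounds via reverse Fano).} The key observation is that consistency makes the trivial decoder a MAP decoder. Consistency (\autoref{def:consistency}) says $p(S'=g\mid G=g)\ge p(S'=g\mid G=g')$ for all $g'$. With a uniform prior, this means the posterior $p(G=\cdot\mid S'=g)$ is maximized at $g$. Hence the MAP success probability equals $P_c = N_s^{-1}+\GSen$.

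I would then invoke the Feder--Merhav bound. For a posterior with maximal mass $q$, its entropy is at least $\phi(q)$, where $\phi$ is the piecewise-linear function interpolating the points $(1/k,\log k)$. On $[1/(k+1),1/k]$ with $k=\lfloor q^{-1}\rfloor$, the interpolation reads
\[
\phi(q) = (\lceil q^{-1}\rceil q - 1)\,k\log k + (1-kq)\,\lceil q^{-1}\rceil\log\lceil q^{-1}\rceil .
\]
This is extremized by a posterior with $k$ atoms of mass $q$ plus one remainder. Since $\phi$ is convex and decreasing, Jensen's inequality gives $H(G\mid S') = \EE_{S'}[H(G\mid S'=\cdot)] \ge \phi(\EE[q(S')]) = \phi(P_c)$. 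Therefore $I \le \log N_s - \phi(P_c) = \Phi^{\rm up}_{N_s}(P_c)$, and monotonicity is inherited from $\phi$ decreasing.

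\textbf{Part 3 (Pinsker).} Let $P_g$ be the law of $\mathbf F:=\mathbf F^{K,\gamma}$ given $G=g$, and $\bar P=\frac1{N_s}\sum_{g'}P_{g'}$ its marginal. Averaging over $g'$ in \autoref{def:control} gives
\[
\GSen_{\OW}=\frac1{N_s}\sum_g\big(\EE_{P_g}F_g-\EE_{\bar P}F_g\big).
\]
Because $F_g\in[0,1]$, each term is at most ${\rm TV}(P_g,\bar P)$. Pinsker's inequality bounds this by $\sqrt{{\rm KL}(P_g\|\bar P)/2}$. Jensen's inequality (concavity of $\sqrt{\cdot}$) then gives
\[
\GSen_{\OW}\le\sqrt{\tfrac1{2N_s}\textstyle\sum_g{\rm KL}(P_g\|\bar P)}=\sqrt{I(G;\mathbf F)/2},
\]
so $I(G;\mathbf F)\ge 2\,\GSen_{\OW}^2$, with mutual information measured in nats.

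\textbf{Main obstacle.} I expect the main work to be Part 2. It requires showing that consistency indeed yields the MAP property for every realized value of $S'$; values that no goal reaches contribute trivially. It also requires verifying that the stated closed form $\Phi^{\rm up}$ coincides with $\log N_s-\phi$, including at the breakpoints $x=1/k$, where the floor/ceiling convention $\lceil\cdot\rceil=\lfloor\cdot\rfloor+1$ matters. Tightness in Parts 1 and 2 would be exhibited by posteriors meeting the Fano and Feder--Merhav extremal shapes.
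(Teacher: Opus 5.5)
Your proposal is correct and follows the paper's proof essentially step for step: Fano's inequality for the lower bounds; the observation that consistency makes the identity decoder $\hat G = S'$ a MAP decoder, combined with the reverse-Fano (Feder--Merhav) bound, for the upper bounds; and the total-variation/Pinsker/Jensen chain for $\OW$. Two small points: your choice to apply Fano directly to the identity decoder is cleaner than the paper's detour through the Bayes decoder, whose extra monotonicity step for $h(p)+p\log(N_s-1)$ is only valid when $\GSen\ge 0$; and in Part 3 you should bound $|\GSen_{\OW}|$ before squaring (the same total-variation argument gives this), so that $I\ge 2\,\GSen_{\OW}^2$ also covers inconsistent policies with negative goal sensitivity.
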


\begin{figure}[t!]
    \centering 
    \includegraphics[width=1\textwidth]{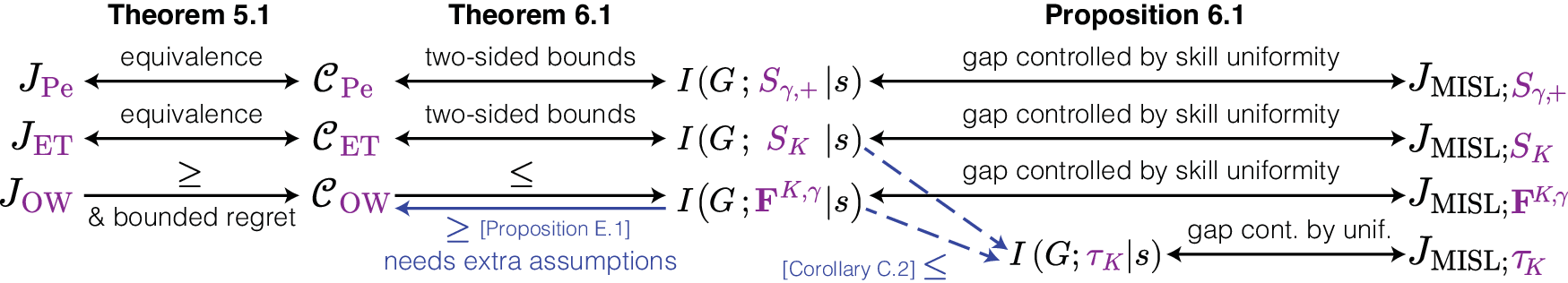}
    \caption{The precise correspondence of the MISL objectives to the downstream GCRL performance; see also \autoref{fig:GSenEmpApp} in \autoref{append:props_emp}.
    }
    \label{fig:GSenEmp}
\end{figure}

\subsection{Goal-behavior MIs closely match their skill-based counterparts}

The final step is to connect the goal-behavior MIs (\autoref{eq:MIGS}) to their skill-based counterparts in the MISL objectives (\autoref{eq:MISLobj}).
Since the goal-to-skill mapping $z = f(g)$ is deterministic, standard information-theoretic identities~\cite{cover1999elements} give
\begin{linenomath*} \begin{equation} \begin{aligned} \label{eq:MIGSZS}
    I^{\pi_{\{.,.\}}}
    \!\left(\textcolor{myred2}{G}; S' \mid S_0=s \right)
    =
    I^{\pi_{\{.,.\}}}
    \!\left(\textcolor{myblue2}{Z}; S' \mid S_0=s \right),
\end{aligned} \end{equation} \end{linenomath*}
where the right-hand side is evaluated by sampling $\textcolor{myred2}{G}$ uniformly, setting $\textcolor{myblue2}{Z} = f(\textcolor{myred2}{G})$, and sampling $S'|\textcolor{myblue2}{Z}$ by running $\textcolor{myblue2}{\pi_{\{Z,.\}}^\skill}$.
The key subtlety is that this generative process does not necessarily yield a uniform distribution over skills, as assumed by MISL (\autoref{eq:MISLobj}).
Instead, the downstream skill distribution is determined jointly by the goal distribution $p_\goal$ and the mapping $f$:
\begin{linenomath*} \begin{equation} \begin{aligned} \label{eq:downspf}
    \text{Downstream skill distribution:} \quad
    p_f(z)
    =
    \sum_{g \in \Ss} p_\goal(g)\, \Iin_{z=f(g)}
    =
    \frac{1}{N_s} \sum_{g \in \Ss} \Iin_{z=f(g)}.
\end{aligned} \end{equation} \end{linenomath*}
Hence, the skill-based MI in \autoref{eq:MIGSZS} is \textit{not} necessarily equal to the MISL objective in \autoref{eq:MISLobj}, but we show that the gap between the two is controlled entirely by uniformity of $p_f$ (proof in \autoref{append:proofs}).
\begin{myprop} \label{prop:MIZunifvsf}
    Consider $\pi_{\{.,.\}}^\skill$ and a \emph{deterministic} goal-to-skill mapping $f$, and let $\pi_{\{.,.\}}$ and $p_f$ be as defined in \autoref{eq:skillpolicy} and \autoref{eq:downspf}, respectively.
    Suppose $S'$ takes at most $N_{s'}$ values, e.g., $N_{s'} = N_s$ when $S' = S_K$.
    Then, if $N_z \leq N_{s'}$,
    \begin{linenomath*} \begin{equation} \begin{aligned}
        \Bigl|
        J_{\rm MISL}(s, \pi_{\{.,.\}}^\skill; S')
        -
        I^{\pi_{\{.,.\}}}\!\left(G; S' \mid S_0=s \right)
        \Bigr|
        \leq
        h(\delta)
        +
        \delta \log \big( N_{s'}^2 (N_{s'}-1) \big),
    \end{aligned} \end{equation} \end{linenomath*}
    with $\delta := || p_{f} - {\rm Unif}(\Zz) ||_{\rm TV}$ the total variation distance and $h$ the binary entropy function.
    If $p_\goal$ is uniform and $f$ partitions $\Ss$ into equal-size preimages, then $p_f$ is also uniform, and the gap is zero.
\end{myprop}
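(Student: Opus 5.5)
The plan is to reduce the statement to a continuity estimate for mutual information with respect to the \emph{input} distribution when the channel is held fixed. First, I would invoke \autoref{eq:MIGSZS}, which holds because $f$ is deterministic ($G \to Z \to S'$ is Markov and $Z=f(G)$). It replaces $I^{\pi_{\{.,.\}}}(G;S'\mid S_0=s)$ by $I(Z;S'\mid S_0=s)$ evaluated with $Z\sim p_f$. In both $J_{\rm MISL}$ and this quantity, $S'\mid Z=z$ is generated by running $\pi^\skill_{\{z,.\}}$ from $s$. So the conditional law $W(s'\mid z)$ is the same object in both. The two quantities are $I(q;W)$ and $I(p_f;W)$ with $q={\rm Unif}(\Zz)$, and I only need to bound $|I(q;W)-I(p_f;W)|$ in terms of $\delta=\|p_f-q\|_{\rm TV}$.

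Second, I would decompose each mutual information as $I(p;W)=H(pW)-\sum_z p(z)H(W(\cdot\mid z))$, where $pW$ denotes the induced marginal on $S'$. The conditional-entropy term is linear in $p$. Since each $H(W(\cdot\mid z))\le\log N_{s'}$, it changes by at most $\sum_z|p_f(z)-q(z)|\,\log N_{s'}=2\delta\log N_{s'}$. For the marginal term, the data-processing inequality for total variation gives $\|p_fW-qW\|_{\rm TV}\le\delta$. Both marginals live on at most $N_{s'}$ points, so I would apply the sharp continuity bound for Shannon entropy (Fannes--Audenaert/Zhang form): $|H(P)-H(Q)|\le h(\epsilon)+\epsilon\log(N_{s'}-1)$ for $\|P-Q\|_{\rm TV}=\epsilon$. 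Some care is needed here because this bound is increasing in $\epsilon$ only for $\epsilon\le 1-1/N_{s'}$.

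This monotonicity issue is where I expect the main obstacle, and where I would try to use $N_z\le N_{s'}$. Since $q$ is uniform on $N_z$ points, $\delta\le 1-1/N_z\le 1-1/N_{s'}$, which lets me replace $\epsilon$ by $\delta$ in the entropy continuity bound. Summing the two contributions then gives
\[
h(\delta)+\delta\log(N_{s'}-1)+2\delta\log N_{s'}=h(\delta)+\delta\log\big(N_{s'}^2(N_{s'}-1)\big),
\]
which is exactly the claimed bound.

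Finally, for the zero-gap case I would use the definition of $p_f$ in \autoref{eq:downspf}. If the preimages $f^{-1}(z)$ partition $\Ss$ into sets of equal size $N_s/N_z$ for every $z\in\Zz$, then $p_f(z)=1/N_z$. Hence $\delta=0$, and $h(0)=0$ makes the right-hand side vanish.
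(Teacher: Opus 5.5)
Your proposal is correct and follows essentially the same route as the paper. The paper also reduces to the skill-based MI under prior $p_f$ via the deterministic $f$, and writes each MI as marginal entropy minus average conditional entropy. It then bounds the conditional-entropy term by $2\delta\log N_{s'}$, and bounds the marginal-entropy term by combining TV contraction with the sharp entropy-continuity bound (it cites Sason's Theorem 3), using $\delta\le 1-1/N_z\le 1-1/N_{s'}$ to justify the monotonicity step exactly as you do.
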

\autoref{prop:MIZunifvsf}, together with \autoref{theor:decomp} and \autoref{theor:MI_empowerment}, establishes that \textit{MISL objectives theoretically bound downstream GCRL performance} (\autoref{fig:GSenEmp}).
Consequently, a skill-conditioned policy trained via MISL is likely to perform well on downstream GCRL tasks, \textit{provided that the choice of $S'$ in MISL matches the downstream GCRL formulation identified in \autoref{fig:GSenEmp}}.
 
\section{Conclusion}
\label{sec:conc}
We started by asking why unsupervised pretraining facilitates downstream GCRL, but we found the answer to be much more subtle than expected: GCRL itself comprises multiple distinct, incompatible formulations, and the relationship between pretraining and downstream tasks depends critically on which formulation is in play.
To address this, we developed a precise control-maximization framework that applies across all GCRL formulations, identified conditions under which different formulations coincide, and derived systematic bounds linking each MISL pretraining objective to its matching GCRL formulation.
The central message is clear: no universally optimal MISL objective exists that theoretically benefits all GCRL formulations; rather, the right pretraining objective must be chosen to match the downstream GCRL setting.
Overall, our work provides a theoretical foundation that connects GCRL and MISL under a common control-maximization perspective (\autoref{fig:general}).
We thus hope this will serve as a basis for both theoretical and algorithmic advances in GCRL and RL pretraining.

\textbf{Limitations.}
Our framework relies on several main assumptions: most are standard in both theoretical and empirical studies of GCRL (e.g., the equivalence of goals and states and deterministic goal-to-skill mappings), but two remain practically limiting: (i) the uniform goal distribution at test time and (ii) finite state/action spaces.
We show that all our conclusions hold beyond the uniform goal distribution (\autoref{append:goal_dist}), but relaxing the finite-state/action-space assumption is less straightforward.
Specifically, while generalizing the control-maximization view of GCRL to continuous spaces is feasible under suitable continuity assumptions, extending the GCRL-MISL links appears more challenging.
Finally, a further limitation of our work is that, unlike for $\Pe$ and $\ET$, our bound in \autoref{theor:MI_empowerment} for $\OW$ is one-directional in full generality; while \autoref{append:upper} provides an example inverse bound, identifying the most appropriate MISL objective for $\OW$ remains an open question.

\newpage

\section*{Acknowledgement}

This research was supported by the U.S. National Science Foundation (BE; Award No. 2441665), the Max Planck Society (PD), the Humboldt Foundation (PD), and the European Research Council (ES).

\newpage
\appendix

\startcontents[appendix]
\section*{Contents of the appendices}
\printcontents[appendix]{}{1}[2]{}

\vspace{1cm}
\paragraph{LLM usage:}
Beyond writing, editing, or formatting purposes, we acknowledge the use of large language models (LLMs) in this work for (i) brainstorming, (ii) helping with the theoretical analyses, (iii) finding relevant references, and (iv) generating the original graphic in \autoref{fig:formulations} (i.e., the sketch of the river and the trees).
In all of these cases, we independently evaluated the LLMs' output and verified all formal statements and proofs.
We take the full responsibility for the originality and the rigor of our results.

\newpage
\section{Precise statements of the equivalences in \autoref{sec:divergence}} \label{append:props}
\subsection{Equivalence of formulations}
\begin{myprop}[Steady-state target-occupancy optimization]
    \label{prop:stationary}
    Consider the problem formulation $\Pe(\gamma)$.
    Then
    \begin{linenomath*} \begin{equation} \begin{aligned} 
        \lim_{\gamma \to 1} J_\Pe(s, g, \pi_{\{.,.\}}, \gamma) = \rho^{\pi_{\{g,.\}}}(g|s)
    \end{aligned} \end{equation} \end{linenomath*}
    where $\rho^{\pi_{\{g,.\}}}(g|s)$ is the stationary probability of staying at $g$, starting from $s$ and following $\pi_{\{g,.\}}$.
    If the MDP is communicating, then  $\rho^{\pi_{\{g,.\}}}(g|s)$ is independent of $s$.
    Hence, $\Pe(\gamma)$ with $\gamma \to 1$ can be viewed as a steady-state target-occupancy optimization, a special case of the problem formulations in~\cite{krass2002achieving, dufour2022maximizing}.
\end{myprop}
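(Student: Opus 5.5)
The argument is an Abel-summation (Tauberian) computation on the discounted occupancy. By \autoref{eq:Pe_J} and the definition of $S_{\gamma,+}$ as $S_K$ with $K\sim{\rm Geom}(1-\gamma)$, we have
\begin{linenomath*} \begin{equation*}
J_\Pe(s,g,\pi_{\{.,.\}},\gamma)=(1-\gamma)\sum_{t\ge 0}\gamma^t\, p_t(g\mid s),
\end{equation*} \end{linenomath*}
where $p_t(g\mid s):=p^{\pi_{\{g,.\}}}(S_t=g\mid S_0=s)$. The indexing offset (starting at $t=0$ or $t=1$) only introduces a term multiplied by $(1-\gamma)$, which vanishes in the limit. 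So the claim reduces to showing that the Abel mean of the sequence $p_t(g\mid s)$ converges as $\gamma\to1$, and identifying the limit with the stationary occupancy $\rho^{\pi_{\{g,.\}}}(g\mid s)$.

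\textbf{Step 1.} I would restrict to the case in which $\pi_{\{g,.\}}$ is stationary, or more generally induces a time-homogeneous finite Markov chain $P_g$ on $\Ss$. This is the setting in which ``the stationary probability'' is well defined, and I would state this interpretation explicitly. Then $p_t(g\mid s)=(P_g^t)_{sg}$. Since $\Ss$ is finite, the Cesàro limit $P_g^*:=\lim_{T\to\infty}\frac1T\sum_{t<T}P_g^t$ exists; this is the standard ergodic theorem for finite chains (e.g.\ via the Jordan decomposition of $P_g$, whose eigenvalues of modulus one are roots of unity). I define $\rho^{\pi_{\{g,.\}}}(g\mid s):=(P_g^*)_{sg}$.

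\textbf{Step 2.} I would pass from the Cesàro limit to the Abel limit. The classical Abelian theorem says that if the Cesàro means of a bounded sequence converge, then $(1-\gamma)\sum_t\gamma^t a_t$ converges to the same value as $\gamma\to1$. Alternatively, one can argue directly with the resolvent: $(1-\gamma)(I-\gamma P_g)^{-1}\to P_g^*$. This follows from the Laurent expansion $(1-\gamma)(I-\gamma P_g)^{-1}=P_g^*+O(1-\gamma)$, which holds because $P_g^*$ is the spectral projector onto the eigenvalue $1$, and that eigenvalue is semisimple for a stochastic matrix. Either route gives the displayed limit.

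\textbf{Step 3 (the main obstacle).} I must show that the limit is independent of $s$ when the MDP is communicating. The subtlety is that ``communicating'' is a property of the MDP, not of the particular chain $P_g$. An arbitrary policy may induce several recurrent classes, in which case $(P_g^*)_{sg}$ genuinely depends on $s$. I would therefore handle this step through optimality. In a communicating MDP, the maximal long-run average reward, here with reward $\Iin_{S=g}$, is constant across initial states (the standard result in \citep{puterman2014markov}). Moreover, there is an optimal stationary policy that is unichain on the relevant states, or that can be modified to steer every state into an optimal recurrent class, since every state can reach every other. For such a policy, $P_g^*$ has identical rows $\rho_g$, so $(P_g^*)_{sg}=\rho_g(g)$ regardless of $s$.

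I would phrase the independence claim for such (optimal or unichain) policies. I expect this qualification to be the delicate part. The final remark, that the problem is a special case of \citep{krass2002achieving,dufour2022maximizing}, then follows by observing that maximizing $(P_g^*)_{sg}$ is exactly maximizing the long-run occupancy of the target state $g$.
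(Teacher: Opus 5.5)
Your proposal is correct and follows the same route as the paper. The paper's proof is a one-line appeal to standard average-reward MDP theory \cite{puterman2014markov}, and your Abel/Ces\`aro (or resolvent Laurent-expansion) argument, together with gain constancy in communicating MDPs, is exactly what is being cited. Your two qualifications are genuinely necessary, and the paper omits them. For a non-stationary $\pi_{\{g,.\}}$ (e.g., staying at and away from $g$ in blocks of doubling length) the limit need not exist. In a communicating MDP an arbitrary stationary policy can be multichain: always staying put in a two-state MDP with stay/move actions gives $\rho^{\pi_{\{g,.\}}}(g|s)=\Iin_{s=g}$. So $s$-independence holds only for optimal or unichain policies, i.e., for the optimal value.
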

\begin{exmode}
\textit{Proof:}
Consequence of the well-known results of average-reward MDP; see~\cite{puterman2014markov}.
$\hfill \square$
\end{exmode}

\begin{myprop}[Equivalence of $\Pe$ with Geometric $\ET$] \label{pro:PeETbar}
    Let $\gamma \in [0,1)$ and let $K \sim {\rm Geom}(1-\gamma)$ with support on $\NN$, i.e., $p(K=t) = (1-\gamma)\gamma^{t-1}$.
    Suppose that the realization of $K$ is not observed by the agent.
    Define the \emph{`averaged exact-timing'} objective by
    \begin{linenomath*} \begin{equation} \begin{aligned}
        J_{\overline{\ET}}(s, g, \pi_{\{.,.\}}, \gamma)
        :=
        \EE_{K}\left[
        J_\ET(s, g, \pi_{\{.,.\}}, K)
        \right].
    \end{aligned} \end{equation} \end{linenomath*}
    Then, the problem formulation $\overline{\ET}(\gamma)$ (i.e., exact timing with unknown but geometrically distributed $K$) is equivalent to $\Pe(\gamma)$.
\end{myprop}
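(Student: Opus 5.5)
The plan is to show that the two objectives coincide pointwise: for every policy $\pi_{\{.,.\}}$, every start state $s$ and every goal $g$, we will prove $J_{\overline{\ET}}(s,g,\pi_{\{.,.\}},\gamma) = J_\Pe(s,g,\pi_{\{.,.\}},\gamma)$. Pointwise equality of the objectives gives the same ordering on policies, hence the same optimal policies and the same averaged test-time performance \autoref{eq:general_expvalue}. That is what ``equivalent'' means here, in the sense of the black edges of \autoref{fig:taxonomy}.

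The key point is that $K$ is not observed. The agent's policy $\pi_{\{g,.\}}$ therefore cannot depend on $K$, so the trajectory law $p^{\pi_{\{g,.\}}}(\cdot\mid s_0)$ is the same whatever value $K$ takes. We then compute directly:
\begin{linenomath*}
\begin{equation*}
J_{\overline{\ET}}(s,g,\pi_{\{.,.\}},\gamma)
= \sum_{t=1}^\infty (1-\gamma)\gamma^{t-1}\, p^{\pi_{\{g,.\}}}(S_t=g\mid S_0=s).
\end{equation*}
\end{linenomath*}
This uses \autoref{eq:ET_J} for each fixed $K=t$. The right-hand side is exactly $p^{\pi_{\{g,.\}}}(S_{\gamma,+}=g\mid S_0=s)$, because $S_{\gamma,+}=S_K$ with $K\sim{\rm Geom}(1-\gamma)$ drawn independently of the trajectory. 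By \autoref{eq:Pe_J}, this equals $J_\Pe(s,g,\pi_{\{.,.\}},\gamma)$.

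Two further checks complete the argument. First, we confirm from the reward specification that \autoref{eq:Pe_J} holds, i.e.\ $\EE[\sum_{t\ge1}(1-\gamma)\gamma^{t-1}\Iin_{S_t=g}]$. Here the product of discounts $\prod_{k=0}^{t-1}\gamma_k$ equals $\gamma^{t-1}$ because $\gamma_0=1$. Second, we justify exchanging the expectation over $K$ with the trajectory expectation. This follows from Tonelli, since all terms are non-negative and bounded, with total mass $\sum_t(1-\gamma)\gamma^{t-1}=1$.

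The main obstacle is making the independence of $K$ from the trajectory precise. The argument requires that the non-stationary policy $\pi_{g,t}$ is chosen without access to $K$. We would state this explicitly as the formal content of the assumption that the realization of $K$ is not observed. We would also note the contrast: if $K$ were observed, the agent could pick a $K$-dependent policy, and then $\EE_K[\max]\neq\max\EE_K$ in general, so the equivalence would fail.
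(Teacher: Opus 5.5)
Your proposal is correct and follows essentially the same route as the paper. You expand $\EE_K$ against the geometric weights $(1-\gamma)\gamma^{t-1}$, apply \autoref{eq:ET_J} for each $K=t$, recognize the sum as $p^{\pi_{\{g,.\}}}(S_{\gamma,+}=g\mid S_0=s)$, and conclude via \autoref{eq:Pe_J} that the two objectives coincide for every policy; your added remarks on Tonelli and on the policy not depending on the unobserved $K$ are sound details that the paper leaves implicit.
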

\begin{exmode}
\textit{Proof:}
By definition of $J_{\overline{\ET}}$ and $J_\ET$, we have
\begin{linenomath*} \begin{equation} \begin{aligned}
    J_{\overline{\ET}}(s, g, \pi_{\{.,.\}}, \gamma)
    &=
    \EE_{K}\left[
    p^{\pi_{\{g,.\}}}(S_K = g | S_0 = s)
    \right]
    \\
    &=
    (1-\gamma)\sum_{t=1}^\infty \gamma^{t-1}
    p^{\pi_{\{g,.\}}}(S_t = g | S_0 = s) = p^{\pi_{\{g,.\}}}(S_{\gamma,+} = g | S_0 = s).
\end{aligned} \end{equation} \end{linenomath*}
As a result, using \autoref{eq:Pe_J}, we have
\begin{linenomath*} \begin{equation} \begin{aligned}
    J_{\overline{\ET}}(s, g, \pi_{\{.,.\}}, \gamma)
    =
    J_\Pe(s, g, \pi_{\{.,.\}}, \gamma)
\end{aligned} \end{equation} \end{linenomath*}
for every policy $\pi_{\{.,.\}}$. 
Therefore, the proof is complete.
$\hfill \square$
\end{exmode}

\begin{myprop}[Shortest path problem]
    \label{prop:min_lenght}
    Consider the problem formulation $\OW(\infty,\gamma)$ with $\gamma = 1 - \epsilon$.
    If $\epsilon \EE^{\pi_{\{g,.\}}} \left[ T_g | S_0 = s\right] \ll 1$ for all $g \in \Ss$ for which $\min_{\pi_{\{g,.\}}} \EE^{\pi_{\{g,.\}}} \left[ T_g | S_0 = s\right] < \infty$, then we have
    \begin{linenomath*} \begin{equation} \begin{aligned} 
        J_\OW(s, g, \pi_{\{.,.\}}, \infty, \gamma) \approx 
        1 + \epsilon - \epsilon \EE^{\pi_{\{g,.\}}} \left[ 
        T_g
        | S_0 = s\right],
    \end{aligned} \end{equation} \end{linenomath*}
    which is equivalent to the stochastic shortest path (max reward-rate) problem formulation of~\cite{kaelbling1993learning, veeriah2018many}.
\end{myprop}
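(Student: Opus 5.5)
The plan is to write the $\OW(\infty,\gamma)$ value as a function of the first-hitting time and then Taylor-expand in $\epsilon$. From \autoref{eq:OW_J} with $K=\infty$, the indicator $\Iin_{T_g\le K}$ reduces to $\Iin_{T_g<\infty}$, so
\begin{linenomath*} \begin{equation*}
J_\OW(s,g,\pi_{\{.,.\}},\infty,\gamma)=\EE^{\pi_{\{g,.\}}}\!\left[(1-\epsilon)^{T_g-1}\Iin_{T_g<\infty}\mid S_0=s\right].
\end{equation*} \end{linenomath*}

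First, we separate goals by reachability. If $\min_{\pi}\EE[T_g]=\infty$, the approximation is read with the convention that such goals are excluded (or yield value $\approx 0$). For these goals the statement makes no claim, so we restrict to goals $g$ with finite minimal expected hitting time. For the policies of interest we then assume $\EE^{\pi_{\{g,.\}}}[T_g\mid S_0=s]<\infty$, which gives $T_g<\infty$ almost surely and lets us drop the indicator. This assumption is implicit in the hypothesis $\epsilon\,\EE[T_g]\ll1$.

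Second, we expand pathwise using $(1-\epsilon)^{n}\ge 1-n\epsilon$ (Bernoulli) together with $(1-\epsilon)^{n}\le 1-n\epsilon+\binom{n}{2}\epsilon^2$. Taking expectations with $n=T_g-1$ gives
\begin{linenomath*} \begin{equation*}
1-\epsilon\,\EE[T_g-1]\;\le\;J_\OW\;\le\;1-\epsilon\,\EE[T_g-1]+\tfrac{\epsilon^2}{2}\EE[(T_g-1)(T_g-2)].
\end{equation*} \end{linenomath*}
The leading term equals $1+\epsilon-\epsilon\EE[T_g]$, which is exactly the claimed expression. Because the correction is one-sided (the lower bound is exact), the approximation has a rigorous lower side.

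Third, we control the remainder, and this is the main obstacle. The hypothesis bounds only the first moment, whereas the remainder involves $\epsilon^2\EE[T_g^2]$. As a first attempt, we would argue that in a finite MDP, under a stationary (or eventually stationary) policy with finite expected hitting time, $T_g$ has a geometrically decaying tail, so $\EE[T_g^2]\le C\,\EE[T_g]^2$ for a constant depending on the chain. The remainder is then $O\big((\epsilon\EE[T_g])^2\big)$, which is negligible relative to the first-order term when $\epsilon\EE[T_g]\ll1$. Failing that, we would state the approximation as the first-order expansion with an explicit $\epsilon^2\EE[T_g^2]/2$ error term.

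Finally, we interpret the result. Since $J_\OW\approx 1+\epsilon-\epsilon\EE[T_g]$ is an affine, decreasing function of the expected hitting time, maximizing it is equivalent to minimizing $\EE^{\pi_{\{g,.\}}}[T_g\mid S_0=s]$. That minimization is the stochastic shortest-path problem with unit step cost, equivalently the max reward-rate criterion of~\cite{kaelbling1993learning, veeriah2018many}. This also yields an ordering of policies to first order.
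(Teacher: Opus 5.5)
Your proposal is correct and follows the same route as the paper. The paper's entire proof is the expansion $(1-\epsilon)^n = 1 - n\epsilon + \Oo(\epsilon^2)$ applied to $n = T_g - 1$, and your Bernoulli and second-order (Bonferroni) bounds make that one-liner explicit; the paper does not treat the remainder at all. Your geometric-tail argument for the remainder does not cover general non-stationary policies, and you can drop it: since $0 \le (1-\epsilon)^{n} - 1 + n\epsilon \le n\epsilon$, dominated convergence gives $J_\OW = 1 + \epsilon - \epsilon\,\EE^{\pi_{\{g,.\}}}[T_g \mid S_0 = s] + o(\epsilon)$ whenever $\EE^{\pi_{\{g,.\}}}[T_g \mid S_0 = s] < \infty$, with no second-moment assumption.
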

\begin{exmode}
\textit{Proof:}
Using the Taylor expansion $(1-\epsilon)^n = 1 - n \epsilon + \Oo(\epsilon^2)$.
$\hfill \square$
\end{exmode}

\begin{myprop}[One-step horizon case]
    \label{prop:one_step_case}
    The problem formulations $\Pe(0)$, $\ET(1)$, $\OW(1,\gamma)$, and $\OW(K,0)$ are equivalent to each other and to the one-step goal-seeking case of~\cite{modirshanechi2025integrative}.
\end{myprop}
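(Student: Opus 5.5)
The plan is to compute each of the four objectives explicitly from the closed forms in \autoref{eq:Pe_J}, \autoref{eq:ET_J} and \autoref{eq:OW_J}, and show that each reduces to the same quantity: the one-step probability $p^{\pi_{\{g,0\}}}(S_1 = g \mid S_0 = s)$. This quantity depends only on the first-step policy $\pi_{g,0}$ and the transition kernel. Once all four values coincide for every $s$, $g$ and policy $\pi_{\{.,.\}}$, the formulations induce identical policy orderings and identical optimal policies. They also coincide with the one-step goal-seeking objective of~\cite{modirshanechi2025integrative}, which is this same probability.

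The four reductions, in order:
\begin{enumerate}[leftmargin=2.5em, itemsep=0pt]
\item \textbf{$\ET(1)$.} This is immediate from \autoref{eq:ET_J} with $K=1$.
\item \textbf{$\Pe(0)$.} The convention $\gamma^0 = 1$ and $0^{t-1} = 0$ for $t>1$ means that $K \sim {\rm Geom}(1)$ equals $1$ almost surely. Hence $S_{0,+} = S_1$, and \autoref{eq:Pe_J} gives the one-step probability. Equivalently, invoke \autoref{pro:PeETbar} at $\gamma=0$ together with the $\ET(1)$ case.
\item \textbf{$\OW(1,\gamma)$.} The indicator $\Iin_{T_g \le 1}$ forces $T_g = 1$. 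The weight is then $\gamma^{0} = 1$ for any $\gamma$, so the expectation equals $p(S_1 = g \mid S_0 = s)$.
\item \textbf{$\OW(K,0)$.} The weight $0^{T_g-1}$ vanishes unless $T_g = 1$, and $T_g = 1 \le K$ for any $K \ge 1$. This again yields $p(S_1 = g)$.
\end{enumerate}

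The only delicate point is the $0^0 = 1$ convention, together with the unified definitions' condition ``for $t>1$'' on $R_t$ and $\gamma_t$. I would check directly from \autoref{eq:general_value} that the $t=1$ term is handled consistently in each formulation: $R_1(s;g) = \Iin_{s=g}$ with the prefactor $\gamma_0 = 1$, and for $\Pe(0)$ the reward factor $1-\gamma = 1$. This confirms that the closed forms above hold at the boundary parameter values.
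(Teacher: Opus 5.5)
Your proposal is correct and follows the paper's approach. The paper's proof only says the equivalence is ``a direct consequence of the formulation''. You make that substitution explicit: for every $s$, $g$ and policy, each of the four objectives reduces to $p^{\pi_{\{g,.\}}}(S_1 = g \mid S_0 = s)$. Your handling of the $0^0=1$ and $t=1$ boundary terms, including reading the paper's ``for $t>1$'' as $t \ge 1$, is exactly what makes the closed forms valid at these parameter values.
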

\begin{exmode}
\textit{Proof:}
The statement is the direct consequence of the formulation.
$\hfill \square$
\end{exmode}

\subsection{Equivalence of optimal policies}

\begin{myprop}[$\OW$-$\Pe$ equivalence in $K \to \infty$] \label{prop:OWPeinfK}
    A goal-conditioned stationary policy $\pi_{\{.,-\}}$ is optimal for the problem formulation $\Pe(\gamma)$ if and only if it is also optimal for the setting $\OW(\infty, \gamma)$.
\end{myprop}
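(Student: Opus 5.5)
The plan is to relate the two objectives for a fixed \emph{stationary} policy through a first-passage (renewal) decomposition. Once $\Pe(\gamma)$-value is written as a monotone function of $\OW(\infty,\gamma)$-values, optimality transfers in both directions. Since $\pi_{\{g,-\}}$ affects only the objective for goal $g$, I will fix $g$ and treat each goal as a separate MDP. Throughout, $\gamma\in[0,1)$, as required for $\Pe(\gamma)$.

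\textbf{Step 1 (renewal identity).} Fix a stationary $\pi_{\{g,-\}}$ and write $a(s):=J_\OW(s,g,\pi_{\{.,.\}},\infty,\gamma)=\EE[\gamma^{T_g-1}\Iin_{T_g<\infty}\mid S_0=s]$. Using \autoref{eq:Pe_J}, we have $J_\Pe(s,g)=(1-\gamma)\EE[\sum_{t\ge1}\gamma^{t-1}\Iin_{S_t=g}\mid S_0=s]$. Split the sum at the first hitting time $T_g$. Because the policy is stationary, the strong Markov property at $T_g$ gives
\begin{linenomath*}\begin{equation*}
J_\Pe(s,g)=a(s)\bigl(1-\gamma+\gamma J_\Pe(g,g)\bigr).
\end{equation*}\end{linenomath*}
Setting $s=g$ and solving yields $J_\Pe(g,g)=(1-\gamma)a(g)/(1-\gamma a(g))$. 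Substituting back gives
\begin{linenomath*}\begin{equation*}
J_\Pe(s,g,\pi_{\{.,.\}},\gamma)=a(s)\,\frac{1-\gamma}{1-\gamma\,a(g)}.
\end{equation*}\end{linenomath*}
Here $T_g\ge1$ even when starting at $g$, so $a(g)$ is the discounted return-time value. The factor $c:=(1-\gamma)/(1-\gamma a(g))$ is strictly positive and nondecreasing in $a(g)$.

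\textbf{Step 2 (existence of stationary optima).} Both problems are standard discounted or absorbing MDPs. $\Pe(\gamma)$ is a discounted MDP with reward $(1-\gamma)\Iin_{s=g}$. $\OW(\infty,\gamma)$ is a discounted MDP in which hitting $g$ pays $1$ and then terminates. In each, a stationary deterministic policy achieves $J^*(s,g)$ simultaneously for all $s$, including $s=g$, among all non-stationary policies \citep{puterman2014markov}. It therefore suffices to compare against stationary policies.

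\textbf{Step 3 (both directions).} ($\Leftarrow$) Suppose $\pi$ is $\OW(\infty,\gamma)$-optimal. Then it maximizes $a(s)$ for every $s$ and also maximizes $a(g)$, so it maximizes both factors in Step 1 at once. Hence $J_\Pe(s,g,\pi)\ge J_\Pe(s,g,\pi')$ for every stationary $\pi'$, and by Step 2 this means $\pi$ is $\Pe(\gamma)$-optimal.

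($\Rightarrow$) Suppose $\pi$ is $\Pe(\gamma)$-optimal but $a^\pi(s)<a^*(s)$ for some $s$. Take a stationary $\OW$-optimal $\pi'$. Then
\begin{linenomath*}\begin{equation*}
J_\Pe(s,g,\pi)=a^\pi(s)c^\pi<a^*(s)c^\pi\le a^*(s)c^{\pi'}=J_\Pe(s,g,\pi').
\end{equation*}\end{linenomath*}
The strict inequality uses $c^\pi>0$, and the last inequality uses $a^{\pi'}(g)=a^*(g)\ge a^\pi(g)$. This contradicts the $\Pe$-optimality of $\pi$.

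\textbf{Main obstacle.} The delicate step is Step 1. It requires a careful strong-Markov argument at $T_g$, correct handling of $T_g=\infty$ (which contributes $0$), and the convention $T_g\ge1$ at $s=g$. The argument also needs the self-consistent solution for $J_\Pe(g,g)$, whose denominator $1-\gamma a(g)>0$ because $\gamma<1$. This is exactly where stationarity is essential: under a non-stationary policy, the continuation after $T_g$ would depend on the hitting time itself.
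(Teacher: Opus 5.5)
Your proposal is correct and follows essentially the same route as the paper. Both use a first-passage decomposition at $T_g$ under a stationary policy to obtain $J_\Pe(s,g)=(1-\gamma)J_\OW(s,g,\infty,\gamma)/\bigl(1-\gamma J_\OW(g,g,\infty,\gamma)\bigr)$, then argue by monotonicity in each direction. Your Step 2 (existence of uniformly optimal stationary policies, so comparing against stationary policies suffices) and your contradiction argument for the $\Pe\Rightarrow\OW$ direction make rigorous what the paper leaves terse. Your identity $J_\Pe(s,g)=a(s)\bigl(1-\gamma+\gamma J_\Pe(g,g)\bigr)$ is also correct; the paper's intermediate display drops the factor $\gamma$ before $J_\Pe(g,g)$, though its next line agrees with yours.
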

\begin{exmode}
\textit{Proof:}
Consider the $\Pe(\gamma)$ setting with a goal-conditioned stationary policy $\pi_{\{.,-\}}$; then we have
\begin{linenomath*} \begin{equation} \begin{aligned} 
    &\frac{1}{1-\gamma} J_\Pe(s, g, \pi_{\{.,-\}}, \gamma) 
    = 
    \EE^{\pi_{\{g,-\}}} \left[ \sum_{t=1}^\infty \gamma^{t-1} \Iin_{S_t = g} | S_0 = s \right]\\
    &= 
    \EE^{\pi_{\{g,-\}}} \left[ \EE^{\pi_{\{g,-\}}} \left[  \sum_{t=1}^\infty \gamma^{t-1} \Iin_{S_t = g} \Big| T_g, S_0 = s \right] \Big| S_0 = s \right]\\
    &= 
    \sum_{\tau=1}^\infty 
    \left(
    p^{\pi_{\{g,-\}}}(T_g = \tau | S_0 = s) \EE^{\pi_{\{g,-\}}} \left[ \gamma^{\tau-1} + \sum_{t=\tau + 1}^\infty \gamma^{t-1} \Iin_{S_t = g} \Big| S_\tau = g \right] 
    \right)
    \\
    &= 
    \underbrace{
    \left( 
    \sum_{\tau=1}^\infty p^{\pi_{\{g,-\}}}(T_g = \tau | S_0 = s) 
    \gamma^{\tau-1}
    \right)
    }_{= J_\OW(s, g, \pi_{\{.,-\}}, \infty, \gamma)}
     \cdot
    \underbrace{
    \left( 
    \EE^{\pi_{\{g,-\}}} \left[ 1 + \sum_{t= 1}^\infty \gamma^{t-1} \Iin_{S_t = g} \Big| S_0 = g \right]
    \right)
    }_{= 1 + \frac{1}{1-\gamma}J_\Pe(g, g, \pi_{\{.,-\}}, \gamma) }
\end{aligned} \end{equation} \end{linenomath*}
which implies that
\begin{linenomath*} \begin{equation} \begin{aligned} 
    J_\Pe(s, g, \pi_{\{.,-\}}, \gamma) 
    &= 
    J_\OW(s, g, \pi_{\{.,-\}}, \infty, \gamma) \cdot
    \Big( 
    1 - \gamma + 
    \gamma J_\Pe(g, g, \pi_{\{.,-\}}, \gamma)  
    \Big).
\end{aligned} \end{equation} \end{linenomath*}
Solving this for the case of $s = g$ and after few lines of algebra, we have
\begin{linenomath*} \begin{equation} \begin{aligned} \label{eq:proof:peowinf}
    J_\Pe(s, g, \pi_{\{.,-\}}, \gamma) 
    &= 
    \frac
    {(1-\gamma) J_\OW(s, g, \pi_{\{.,-\}}, \infty, \gamma)}
    {1 - \gamma J_\OW(g, g, \pi_{\{.,-\}}, \infty, \gamma)}.
\end{aligned} \end{equation} \end{linenomath*}

The optimal policy for $\OW(\infty, \gamma)$ maximizes $J_\OW(s, g, \pi_{\{.,-\}}, \infty, \gamma)$ and $J_\OW(g, g, \pi_{\{.,-\}}, \infty, \gamma)$ for all $s,g \in \Ss$; hence, according to \autoref{eq:proof:peowinf}, it also maximizes $J_\Pe(s, g, \pi_{\{.,-\}}, \gamma)$ for all $s,g \in \Ss$.
Therefore, any optimal policy for $\OW(\infty, \gamma)$ is also an optimal policy for $\Pe(\gamma)$.

At the same time, the optimal policy for $\Pe(\gamma)$ maximizes $J_\Pe(s, g, \pi_{\{.,-\}}, \gamma)$ and $J_\Pe(g, g, \pi_{\{.,-\}}, \gamma)$ for all $s,g \in \Ss$; hence, according to \autoref{eq:proof:peowinf} and its variant for $s = g$, it also maximizes $J_\OW(s, g, \pi_{\{.,-\}}, \infty, \gamma)$ for all $s,g \in \Ss$.
Therefore, any optimal policy for $\Pe(\gamma)$ is also an optimal policy for $\OW(\infty, \gamma)$.

Hence, the proof is complete.
$\hfill \square$
\end{exmode}

\begin{myprop}[$\OW$-$\Pe$ equivalence in deterministic environments] \label{prop:OWPeDeterm}
    Suppose that the environment is deterministic, i.e., for every $s \in \Ss$, there exists a transition function $h(s,.): \Aa(s) \to \Ss$ such that $p_h(s'|s,a) = \Iin_{s' = h(s,a)}$ for every $a \in \Aa(s)$.
    Suppose $\pi^*_{\{.,-\}}$ is the deterministic shortest-path policy induced by $h$, i.e., $\pi^*_{\{g,-\}}$ minimizes the time to reach $g$ from $s$.
    Then, $\pi^*_{\{.,-\}}$ is optimal for $\Pe(\gamma)$ and $\OW(K, \gamma)$, for all $\gamma \in (0,1)$ and $K \in \NN$.
\end{myprop}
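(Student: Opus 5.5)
The plan is a pathwise dominance argument, carried out separately for each formulation and for every fixed goal $g$. Fix $g$ and a start state $s$, and let $d(s,g)\in\NN\cup\{\infty\}$ be the minimal number of steps needed to reach $g$ from $s$ under $h$. (For $s=g$ this is the length of the shortest cycle back to $g$, since the relevant times are $t\ge 1$.) Every policy, even a non-stationary and randomized one, induces a distribution over trajectories. Each realized trajectory is a deterministic path in the graph $(\Ss,h)$. It therefore suffices to show that the path generated by $\pi^*_{\{g,-\}}$ has pathwise return at least as large as that of every feasible path. Taking expectations then gives $J(s,g,\pi^*)\ge J(s,g,\pi)$ for all $\pi_{\{.,.\}}$.

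\textbf{OW step.} For $\OW(K,\gamma)$ the pathwise return is $\gamma^{T_g-1}\Iin_{T_g\le K}$. This quantity is nonincreasing in $T_g$ because $\gamma\in(0,1)$. Every path satisfies $T_g\ge d(s,g)$. The shortest-path policy attains $T_g=d(s,g)$, so it maximizes the return for every $K$ simultaneously. If $d(s,g)=\infty$, every policy has return $0$ and there is nothing to show.

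\textbf{Pe step.} For $\Pe(\gamma)$ I use the decomposition in the proof of \autoref{prop:OWPeinfK}, but I apply it pathwise rather than only for stationary policies. Any path's discounted visit count $\sum_{t\ge1}\gamma^{t-1}\Iin_{S_t=g}$ splits into the first visit at time $T_g\ge d(s,g)$ plus subsequent visits. By the Markov (deterministic) structure, the subsequent visits form a path started from $g$. Consecutive visits are separated by gaps of at least $c:=d(g,g)$, the minimal return time. Hence the return is bounded by
\[
\gamma^{d(s,g)-1}\sum_{j\ge0}\gamma^{jc}=\frac{\gamma^{d(s,g)-1}}{1-\gamma^{c}}.
\]
The policy $\pi^*_{\{g,-\}}$ attains this bound. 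It reaches $g$ in $d(s,g)$ steps, and from $g$ it follows a shortest return cycle of length $c$. Because $\pi^*$ is stationary and deterministic, it repeats that cycle forever. Multiplying by $1-\gamma$ gives optimality for $J_\Pe$. The case $c=\infty$ gives a single visit and is also consistent.

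\textbf{Main obstacle.} The subtle point is that $\pi^*_{\{g,-\}}$ must be shortest-path from every state simultaneously, including from $g$ itself in the sense of $T_g$ with $t\ge1$. The statement's phrase ``minimizes the time to reach $g$ from $s$'' should therefore be read with $T_g=\min\{t\ge1:S_t=g\}$. With that reading, the action taken at $g$ is a first step on a shortest cycle through $g$. I will check that such a stationary choice exists by the standard Bellman argument on the distances $d(\cdot,g)$: at each $s$, choose an action $a$ with $d(h(s,a),g)=d(s,g)-1$, where $d(g,g)$ is set to $0$ for successor states. This choice is consistent at every state, including $g$.
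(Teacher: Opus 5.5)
Your proof is correct. It rests on the same key observation as the paper's, but it compares against a broader class of policies.

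\textbf{The paper's route.} The paper restricts attention to deterministic stationary policies. For these, the trajectory from any state is a fixed path. It writes closed forms $J_\OW=\gamma^{T(s,g)-1}\Iin_{T(s,g)\le K}$ and $J_\Pe\propto\gamma^{T(s,g)-1}/(1-\gamma^{T(g,g)})$ in terms of the hitting time $T(s,g)$ and the return time $T(g,g)$. It then notes that both values are maximized when these times are minimized, i.e., by the shortest-path policy. (The paper's $\Pe$ closed form drops the factor $1-\gamma$, which you correctly include; this does not affect the argmax.)

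\textbf{Your route.} You use the same monotonicity in hitting and return times, but as a pathwise upper bound valid for every feasible trajectory. Your comparison class is therefore all non-stationary, randomized goal-conditioned policies, which is what the paper's definition $J^*(s,g)=\max_{\pi_{\{.,.\}}}J(s,g,\pi_{\{.,.\}})$ actually requires. The difference matters mainly for $\OW(K,\gamma)$ with finite $K$. That problem is time-dependent because of the window $\Iin_{t\le K}$, so restricting competitors to stationary policies is not covered by standard discounted-MDP theory, and your argument supplies that justification. For $\Pe(\gamma)$ the paper's restriction is harmless, since a deterministic stationary optimal policy exists.

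\textbf{The reading at $s=g$.} Your Bellman construction gives a stationary policy that is shortest-path from every state under the $t\ge 1$ reading, so the action at $g$ begins a shortest return cycle. The paper's term $1-\gamma^{T(g,g)}$ relies on this reading silently. Under the other reading, a ``shortest-path'' policy could act arbitrarily at $g$, for instance leaving $g$ for an absorbing state instead of taking an available self-loop, and it would then not be optimal for $\Pe$.

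Overall, the paper's argument is shorter, and yours is the fully rigorous version of the same idea.
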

\begin{exmode}
\textit{Proof:}
Given a deterministic environment $p_h$ and a deterministic policy $\pi_{\{.,-\}}$, we use $T^{\pi_{\{.,-\}}}(s,g)$ to denote the time needed to move from $s$ to $g$.
Given this setup, we have
\begin{linenomath*} \begin{equation} \begin{aligned} 
    J_\OW(s, g, \pi_{\{.,-\}}, K, \gamma) = 
    \gamma^{T^{\pi_{\{.,-\}}}(s,g)-1} \Iin_{T^{\pi_{\{.,-\}}}(s,g) \leq K}
\end{aligned} \end{equation} \end{linenomath*}
and
\begin{linenomath*} \begin{equation} \begin{aligned} 
    J_\Pe(s, g, \pi_{\{.,-\}}, \gamma) = 
    \frac{\gamma^{T^{\pi_{\{.,-\}}}(s,g)-1}}{1 - \gamma^{T^{\pi_{\{.,-\}}}(g,g)}}.
\end{aligned} \end{equation} \end{linenomath*}
The transition time $T^{\pi_{\{.,-\}}}(s,g)$ takes its minimum value under the minimum-distance policy $\pi^*_{\{.,-\}}$.
As a result, both $J_\OW(s, g, \pi_{\{.,-\}}, K, \gamma)$ and $J_\Pe(s, g, \pi_{\{.,-\}}, \gamma)$ takes their maximum value under the minimum-distance policy $\pi^*_{\{.,-\}}$.
Hence, the proof is complete.
$\hfill \square$
\end{exmode}

\begin{myprop}[$\OW$-$\ET$ equivalence with waiting actions and $\gamma = 1$] \label{prop:OWETwaiting}
    Suppose there is a waiting action at every state, i.e., for every $s \in \Ss$, there exists an action $a_{\rm wait} \in \Aa(s)$ such that $p(s'|s,a_{\rm wait}) = \Iin_{s' = s}$.
    Then, for any $K \in \NN$, the optimal values of $\OW(K,1)$ and $\ET(K)$ are equal:
    $J_\OW^*(s, g, K, 1) = J_\ET^*(s, g, K)$.
    Moreover, there exists a goal-conditioned policy that is optimal for both formulations.
\end{myprop}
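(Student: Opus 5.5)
The plan is to prove two inequalities between the optimal values and then exhibit a single policy that attains both. With $\gamma=1$, the $\OW(K,1)$ objective is $J_\OW(s,g,\pi,K,1)=p^{\pi_{\{g,.\}}}(T_g\le K\mid S_0=s)$, and the $\ET(K)$ objective is $J_\ET(s,g,\pi,K)=p^{\pi_{\{g,.\}}}(S_K=g\mid S_0=s)$.

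\textbf{Step 1: $J_\ET^*\le J_\OW^*$, with no assumption on the environment.} Fix any policy. The event $\{S_K=g\}$ is contained in $\{T_g\le K\}$, since $T_g$ is the first time $t\ge1$ with $S_t=g$ and $K\ge1$. Hence $J_\ET(s,g,\pi,K)\le J_\OW(s,g,\pi,K,1)$ for every policy. Taking suprema gives the inequality.

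\textbf{Step 2: $J_\OW^*\le J_\ET^*$, using the waiting action.} Take any non-stationary policy $\pi_{\{g,.\}}$ and build a modified policy $\tilde\pi$. Before $g$ is first hit, $\tilde\pi$ follows $\pi$; from time $T_g$ onward, $\tilde\pi$ plays $a_{\rm wait}$.

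This switching rule must be expressible as a policy $\pi_{g,t}(a\mid s)$ that depends only on $(s,t)$. To get this, I would first invoke the standard finite-horizon MDP fact (cf.~\cite{puterman2014markov}) that Markov non-stationary policies attain the optimum over history-dependent ones. In particular, the optimum of $\OW(K,1)$ is attained by a Markov policy $\pi^\OW$. The modification can then be written directly: set $\tilde\pi_{g,t}(\cdot\mid g)=\delta_{a_{\rm wait}}$ for all $t$, and $\tilde\pi_{g,t}(\cdot\mid s)=\pi^\OW_{g,t}(\cdot\mid s)$ for $s\neq g$. On the event $\{S_t=g\}$ the modified and original policies may differ, but only after the first hit.

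So the laws of $(S_1,\dots,S_{T_g\wedge K})$ coincide under $\pi^\OW$ and $\tilde\pi$, and therefore $p^{\tilde\pi}(T_g\le K)=p^{\pi^\OW}(T_g\le K)$. Under $\tilde\pi$, once $g$ is reached the agent stays there, so $\{T_g\le K\}=\{S_K=g\}$. This gives $J_\ET(s,g,\tilde\pi,K)=J_\OW(s,g,\tilde\pi,K,1)=J_\OW^*(s,g,K,1)$, and hence $J_\ET^*\ge J_\OW^*$.

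\textbf{Step 3: a common optimal policy.} Combining Steps 1 and 2 gives equality of the optimal values. Moreover, $\tilde\pi$ attains $J_\OW^*$ for $\OW(K,1)$ and the same value $=J_\ET^*$ for $\ET(K)$, so it is optimal for both formulations. Because $\pi^\OW$ can be chosen optimal simultaneously for every start state (backward induction on the finite horizon $K$, with the usual state-time value functions), $\tilde\pi$ is optimal for all $s$ and every $g$ at once.

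The main obstacle is Step 2's coupling argument. I need to check carefully that modifying the policy only at state $g$ leaves the first-hitting distribution unchanged; this holds because actions taken at $g$ only influence the trajectory after $T_g$. I also need to handle the case $s=g$: since $T_g\ge1$, the agent must act at time 0 from $g$. This is handled by letting $\tilde\pi$ follow $\pi^\OW$ at $t=0$ and waiting only at $g$ for $t\ge1$. That keeps $\tilde\pi$ Markov non-stationary, which is fine because the class $\pi_{\{.,.\}}$ allows time dependence.
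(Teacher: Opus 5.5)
Your proposal is correct and follows the paper's route. The inclusion $\{S_K=g\}\subseteq\{T_g\le K\}$ gives $J^*_{\ET}\le J^*_{\OW}$, and the ``wait once at $g$'' modification of an $\OW(K,1)$-optimal policy gives both the reverse inequality and a common optimal policy. Your extra care on two points is a refinement rather than a different approach. First, following $\pi^{\OW}$ at $t=0$ makes the hitting-time identity exact when $s=g$; the paper waits at $g$ for all $t$, so for $s=g$ its stated equality only holds as $\ge$, which still suffices. Second, your backward-induction remark justifies the existence of a single optimal policy across all start states, which the paper assumes without argument.
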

\begin{exmode}
\textit{Proof:}
For any policy $\pi_{\{.,.\}}$, we have
\begin{linenomath*} \begin{equation} \begin{aligned}
    J_\ET(s, g, \pi_{\{.,.\}}, K)
    &=
    p^{\pi_{\{g,.\}}}(S_K = g | S_0 = s)
    \\
    &\leq
    p^{\pi_{\{g,.\}}}(T_g \leq K | S_0 = s)
    =
    J_\OW(s, g, \pi_{\{.,.\}}, K, 1),
\end{aligned} \end{equation} \end{linenomath*}
since the event $\{S_K = g\}$ implies $\{T_g \leq K\}$.
Taking the maximum over $\pi_{\{.,.\}}$ on both side, we have
\begin{linenomath*} \begin{equation} \begin{aligned}
    \label{eq:proof:OWET:ineq1}
    J_\ET^*(s, g, K)
    &\leq
    J_\OW^*(s, g, K, 1).
\end{aligned} \end{equation} \end{linenomath*}

Now, given any policy $\pi_{\{.,.\}}$, define a modified policy $\tilde{\pi}_{\{.,.\}}$ by
\begin{linenomath*} \begin{equation} \begin{aligned}
    \label{eq:proof:OWET:pitilde}
    \tilde{\pi}_{\{g,.\}}(a|s) :=
    \begin{cases}
        \Iin_{a = a_{\rm wait}}, & s = g,\\
        \pi_{\{g,.\}}(a|s), & s \neq g.
    \end{cases}
\end{aligned} \end{equation} \end{linenomath*}
That is, $\tilde{\pi}_{\{g,.\}}$ behaves as $\pi_{\{g,.\}}$ until reaching $g$, and then waits at $g$ forever.
Therefore, for every $s,g \in \Ss$,
\begin{linenomath*} \begin{equation} \begin{aligned}
    p^{\tilde{\pi}_{\{g,.\}}}(S_K = g | S_0 = s)
    =
    p^{\pi_{\{g,.\}}}(T_g \leq K | S_0 = s),
\end{aligned} \end{equation} \end{linenomath*}
which implies
\begin{linenomath*} \begin{equation} \begin{aligned}
    J_\ET(s, g, \tilde{\pi}_{\{.,.\}}, K)
    =
    J_\OW(s, g, \pi_{\{.,.\}}, K, 1).
\end{aligned} \end{equation} \end{linenomath*}

Now, suppose $\pi^*_{\{.,.\}}$ is an optimal policy with respect to $\OW(K,1)$, and $\tilde{\pi}_{\{g,.\}}^*$ is its modified version using \autoref{eq:proof:OWET:pitilde}.
Then we have
\begin{linenomath*} \begin{equation} \begin{aligned}
    J_\ET^*(s, g, K)
    \geq
    J_\ET(s, g, \tilde{\pi}_{\{.,.\}}^*, K)
    =
    J_\OW(s, g, \pi_{\{.,.\}}^*, K, 1) 
    = 
    J_\OW^*(s, g, K, 1).
\end{aligned} \end{equation} \end{linenomath*}
Combining this with the \autoref{eq:proof:OWET:ineq1}, we have
\begin{linenomath*} \begin{equation} \begin{aligned}
    J_\ET^*(s, g, K)
    = 
    J_\OW^*(s, g, K, 1).
\end{aligned} \end{equation} \end{linenomath*}
The policy $\tilde{\pi}_{\{g,.\}}^*$ constructed from any optimal policy $\tilde{\pi}_{\{g,.\}}^*$ of $\OW(K,1)$ (using \autoref{eq:proof:OWET:pitilde}) is optimal for both settings.
$\hfill \square$
\end{exmode} 
\newpage
\section{Additional statements for sensitivity-control relationships in \autoref{sec:control}} \label{append:props_snes}
\begin{myprop}[One-step controllability as the number of effectively distinct actions;      paraphrased version of Proposition 1 of \cite{modirshanechi2025integrative}]
    \label{prop:one_step_control}
    Consider the one-step problem formulation in \autoref{prop:one_step_case}, i.e., $\Pe(0)$, $\ET(1)$, $\OW(1,\gamma)$, and $\OW(K,0)$.
    Then, the objective controllability at state $s$, as in \autoref{eq:optimal_sensitivity}, is given by 
    \begin{linenomath*} \begin{equation} \begin{aligned}
        \GSen^*(s) = \frac{1}{N_s} \sum_{a \in \Aa(s)} \Delta(a;s),
    \end{aligned} \end{equation} \end{linenomath*}
    where $\Delta(a;s)$ measures how distinct $a \in \Aa(s)$ is from the other actions in $\Aa(s)$: 
    \begin{linenomath*} \begin{equation} \begin{aligned}
        \Delta(a;s) := \frac{1}{N_a(s)} \sum_{a' \in \Aa(s)} \left[ \sum_{s' \in \Ss^*(a;s)} |p(s'|s,a) - p(s'|s,a')| \right] \leq 1 -  \frac{1}{N_a(s)} 
    \end{aligned} \end{equation} \end{linenomath*}
    with $\Ss^*(a;s) \subseteq \Ss$ the set of states where $a = \arg \max_{\Tilde{a}} p(s'|s,\Tilde{a})$; 
    when there are ties, we assume that $\arg \max$ returns one action based on a given ordering.
\end{myprop}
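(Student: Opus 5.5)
The plan is to reduce everything to a single-step computation: write goal sensitivity as a linear functional of the one-step goal-reaching probabilities, maximize it goal by goal, and then show that the resulting quantity equals the action-distinctness sum by regrouping over a partition of $\Ss$.

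\textbf{Step 1: reduce to one-step probabilities.} By \autoref{prop:one_step_case}, all four formulations reduce to the same objective. For $\Pe(0)$ the reward at $t=1$ is $1-\gamma=1$, and all later terms vanish because $\gamma=0$. Hence, for every policy,
\begin{linenomath*}\begin{equation*}
J(s,g,\pi_{\{g',.\}}) = q_{g'}(g) := \sum_{a\in\Aa(s)} \pi_{g',0}(a|s)\, p(g|s,a).
\end{equation*}\end{linenomath*}
Only $\pi_{\cdot,0}(\cdot|s)$ matters, so stationarity is irrelevant.

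\textbf{Step 2: compute and maximize $\GSen$.} Substituting into \autoref{def:control} and splitting the double sum gives
\begin{linenomath*}\begin{equation*}
\GSen(s,\pi_{\{.,.\}}) = \frac{1}{N_s}\sum_{g} q_g(g) - \frac{1}{N_s^2}\sum_{g'}\sum_{g} q_{g'}(g) = \frac{1}{N_s}\sum_{g} q_g(g) - \frac{1}{N_s}.
\end{equation*}\end{linenomath*}
The second equality uses $\sum_g q_{g'}(g)=1$, since $q_{g'}(\cdot)$ is a probability distribution over next states. The policies $\pi_{g,0}(\cdot|s)$ for different $g$ are chosen independently, and each $q_g(g)$ is linear in $\pi_{g,0}(\cdot|s)$. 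It is therefore maximized by a deterministic choice of $\arg\max_a p(g|s,a)$, which gives
\begin{linenomath*}\begin{equation*}
\GSen^*(s) = \frac{1}{N_s}\Big(\sum_{s'\in\Ss}\max_{a} p(s'|s,a) - 1\Big).
\end{equation*}\end{linenomath*}

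\textbf{Step 3: the combinatorial identity.} This is the only step needing care. With the fixed tie-breaking rule, the sets $\{\Ss^*(a;s)\}_{a\in\Aa(s)}$ partition $\Ss$. For $s'\in\Ss^*(a;s)$ we have $p(s'|s,a)\ge p(s'|s,a')$ for every $a'$, so each absolute value in $\Delta(a;s)$ equals $p(s'|s,a)-p(s'|s,a')$. Summing over $a$ then gives
\begin{linenomath*}\begin{equation*}
\sum_a \Delta(a;s) = \sum_a\sum_{s'\in\Ss^*(a;s)} p(s'|s,a) - \frac{1}{N_a(s)}\sum_{a'}\sum_a\sum_{s'\in\Ss^*(a;s)} p(s'|s,a').
\end{equation*}\end{linenomath*}
Because the $\Ss^*(a;s)$ partition $\Ss$:
\begin{itemize}[leftmargin=2em, itemsep=0pt]
\item the first term equals $\sum_{s'}\max_a p(s'|s,a)$;
\item the inner double sum in the second term equals $\sum_{s'}p(s'|s,a')=1$, so the second term equals $1$.
\end{itemize}
Comparing with Step 2 yields $\GSen^*(s)=\frac{1}{N_s}\sum_a\Delta(a;s)$.

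\textbf{Step 4: the bound on $\Delta$.} In the definition of $\Delta(a;s)$, the term with $a'=a$ vanishes. Each remaining term equals $\sum_{s'\in\Ss^*(a;s)}\big(p(s'|s,a)-p(s'|s,a')\big)\le \sum_{s'\in\Ss^*(a;s)} p(s'|s,a)\le 1$. There are $N_a(s)-1$ such terms, so $\Delta(a;s)\le (N_a(s)-1)/N_a(s)=1-1/N_a(s)$.

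I expect no real obstacle. The only subtle point is the tie-breaking convention, which is needed so that the $\Ss^*(a;s)$ form a genuine partition of $\Ss$ and the sums regroup without double counting.
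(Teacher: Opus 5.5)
Your proof is correct and complete. The paper gives no argument of its own here; it only says the result is a straightforward adaptation of Proposition 1 of \cite{modirshanechi2025integrative}. Your derivation is therefore a self-contained replacement rather than a reconstruction of a given proof.

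Your Step 2 is exactly the $K=1$ (equivalently $\gamma=0$) case of part 1 of \autoref{theor:decomp}, so you could shorten it by citing that result. It gives $\GSen^*(s) = J^*(s) - 1/N_s$ with $J^*(s) = \frac{1}{N_s}\sum_{s'}\max_{a} p(s'|s,a)$, where the maximum is attained goal by goal by choosing a greedy first action deterministically.

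What your direct route adds over the bare citation is transparency on two points. First, it shows that the tie-breaking convention is needed precisely so that the sets $\Ss^*(a;s)$ partition $\Ss$, which makes both regroupings in Step 3 exact. Second, it actually verifies the bound $\Delta(a;s) \le 1 - 1/N_a(s)$, which the paper states without justification.
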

\begin{exmode}
\textit{Proof:}
The proof is a straightforward adaptation of Proposition 1 of~\cite{modirshanechi2025integrative}.
$\hfill \square$
\end{exmode}

\textbf{Klyubin empowerment.}
Following~\cite{klyubin2005empowerment, salge2014empowerment}, we define (a generalized version of) $K$-step empowerment as 
\begin{linenomath*} \begin{equation} \begin{aligned}
    \EmpKly(s; K) := \max_\pi  I^{\pi_{\{-,.\}}} \left( A\tS{0:K-1}, S\tS{K} | S\tS{0} = s \right).
\end{aligned} \end{equation} \end{linenomath*}
This quantity can be intuitively interpreted as the number of states that are surely reachable in $K$ steps; this corresponds to a particular conceptualization of controllability~\citep{modirshanechi2025integrative}.
While other variants have been proposed~\citep{abel2025plasticity, capdepuy2011informational}, $\EmpKly(s; K)$ is the most relevant choice for the RL literature~\citep{jung2011empowerment, leibfried2019unified, bharadhwaj2022information}.

\vspace{0.2cm}
\begin{myprop}[Monotone equivalence of $\GSen_\ET^*$ and $\EmpKly$ in deterministic environments]
    \label{prop:ETKlyEmp}
    In deterministic environments, i.e., for every $s \in \Ss$, there exists a transition function $h(s,.): \Aa(s) \to \Ss$ such that $p_h(s'|s,a) = \Iin_{s' = h(s,a)}$ for every $a \in \Aa(s)$.
    \begin{linenomath*} \begin{equation} \begin{aligned}
    \EmpKly(s;K) = \log \left( 1 + N_s \, \GSen_\ET^*(s, K) \right),
    \end{aligned} \end{equation} \end{linenomath*}
    where $\GSen_\ET^*(s, K)$ is the maximal goal-sensitivity for $\ET(K)$.
\end{myprop}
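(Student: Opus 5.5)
The plan is to reduce both sides to the cardinality of the set of states reachable in exactly $K$ steps,
\[
\mathcal{R}_K(s) := \{ g \in \Ss : \exists\, a_{0:K-1} \text{ with } S_K = g \text{ from } S_0 = s \text{ under } h \}.
\]
We then show that $\GSen_\ET^*(s,K) = (|\mathcal{R}_K(s)|-1)/N_s$ and that $\EmpKly(s;K) = \log |\mathcal{R}_K(s)|$. Substituting the first identity gives $1 + N_s\GSen_\ET^*(s,K) = |\mathcal{R}_K(s)|$, which yields the claim.

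\textbf{Goal-sensitivity side.} By part 1 of \autoref{theor:decomp}, $J(s,\pi_{\{.,.\}}) = \GSen(s,\pi_{\{.,.\}}) + 1/N_s$ for $\ET(K)$ and every policy. Hence the same policies maximize both quantities, and $\GSen_\ET^*(s,K) = J^*(s) - 1/N_s$. Since an optimal policy maximizes $J(s,g,\cdot)$ separately for each $g$, we have $J^*(s) = \frac{1}{N_s}\sum_g \max_{\pi} p^{\pi_{\{g,.\}}}(S_K = g\mid S_0=s)$. In a deterministic environment this inner maximum equals $1$ if $g \in \mathcal{R}_K(s)$ and $0$ otherwise.
\begin{itemize}
\item If $g \in \mathcal{R}_K(s)$, the non-stationary policy $\pi_{g,t}$ that plays the $t$-th action of a fixed witnessing sequence reaches $g$ surely. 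This is well defined because the trajectory visits only one state at each time $t$.
\item If $g \notin \mathcal{R}_K(s)$, every trajectory misses $g$ at time $K$, so the probability is $0$.
\end{itemize}
This gives $J^*(s) = |\mathcal{R}_K(s)|/N_s$.

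\textbf{Empowerment upper bound.} Because the dynamics are deterministic, $S_K$ is a function of $(S_0, A_{0:K-1})$, so $H(S_K \mid A_{0:K-1}, S_0=s) = 0$. Therefore, for any goal-independent policy $\pi_{\{-,.\}}$,
\[
I(A_{0:K-1};S_K\mid S_0=s) = H(S_K\mid S_0=s) \le \log|\mathcal{R}_K(s)|,
\]
since $S_K$ is supported on $\mathcal{R}_K(s)$. It remains to exhibit a policy under which $S_K$ is uniform on $\mathcal{R}_K(s)$. With such a policy $H(S_K) = \log|\mathcal{R}_K(s)|$, the bound is attained, and $\EmpKly(s;K)=\log|\mathcal{R}_K(s)|$.

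\textbf{Achievability (main obstacle).} The difficulty is that $\pi_{-,t}(a\mid s)$ may depend only on the current state and time, not on the history. Randomizing uniformly over one action sequence per target therefore does not directly define a valid policy, because different sequences can pass through the same state at the same time and then diverge.

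To handle this, we build a layered tree on the time-indexed nodes $(t,x)$ backwards in time.
\begin{itemize}
\item For each $r \in \mathcal{R}_K(s)$, place the leaf $(K,r)$.
\item For every node $(t,x)$ with $t \ge 1$ already placed, fix one predecessor $(t-1,y)$ together with an action $a$ such that $h(y,a) = x$ and $y$ is reachable from $s$ in exactly $t-1$ steps. Such a predecessor exists because $x$ is reachable in exactly $t$ steps.
\end{itemize}
Each node is assigned a single parent, so the result is a tree rooted at $(0,s)$. Within a layer $t$ each node corresponds to a distinct state, so the current state and the time identify the current node uniquely.

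The policy is then defined by the rule: at node $(t,x)$, choose the action leading to child $c$ with probability proportional to the number of leaves in the subtree of $c$. Multiplying these probabilities along the root-to-leaf path telescopes to $1/|\mathcal{R}_K(s)|$ for every leaf. So $S_K$ is uniform on $\mathcal{R}_K(s)$, which establishes the achievability step and completes the argument.
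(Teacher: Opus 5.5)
Your proposal is correct and follows essentially the same route as the paper: both reduce $\GSen_\ET^*(s,K)$ (via part 1 of \autoref{theor:decomp}) and $\EmpKly(s;K)$ (via $H(S_K \mid A_{0:K-1}, S_0=s)=0$) to the number of states reachable in exactly $K$ steps. Your layered-tree construction also properly justifies the achievability step $\max_\pi H(S_K\mid S_0=s)=\log|\mathcal{R}_K(s)|$ for policies that depend only on the current state and time, which the paper simply asserts.
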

\begin{exmode}
\textit{Proof:}
Since the environment is deterministic, there exists a function $h_K$, defined based on $h$, such that
\begin{equation} \begin{aligned}
    p(S_K = s'|S_0 = s, a_{0:K-1}) = \Iin_{s' = h_K(s,a_{0:K-1})}.
\end{aligned} \end{equation}
As a result, and using \autoref{eq:ET_J}, we have
\begin{linenomath*} \begin{equation} \begin{aligned} 
    J_\ET^*(s, g, K) = 
    \begin{cases}
        1 & \text{if } \,\, \exists \,\, a_{0:K-1} \text{  s.t.  } h_K(s,a_{0:K-1}) = g \\
        0 & \text{otherwise}
    \end{cases}
\end{aligned} \end{equation} \end{linenomath*}
implying that
\begin{linenomath*} \begin{equation} \begin{aligned} 
    J_\ET^*(s, K) = 
    \frac{\left| \{ h_K(s,a_{0:K-1}) \}_{a_{0:K-1}} \right|}{N_s}
\end{aligned} \end{equation} \end{linenomath*}
and, using \autoref{theor:decomp},
\begin{linenomath*} \begin{equation} \begin{aligned} 
    \left| \{ h_K(s,a_{0:K-1}) \}_{a_{0:K-1}} \right| = 
    N_s \GSen_\ET^*(s, K) + 1
\end{aligned} \end{equation} \end{linenomath*}
At the same time, we have
\begin{equation} \begin{aligned}
    I^{\pi_{\{-,.\}}}\big[S\tS{K}, A\tS{0:K-1} \big| S\tS{0} = s\big] &=\\  
    H^{\pi_{\{-,.\}}}&\big[h_K(s,A_{0:K-1})\big] 
    - \underbrace{H^{\pi_{\{-,.\}}}\big[h_K(s,A_{0:K-1}) \big| A_{0:K-1}\big]}_{= 0}.
\end{aligned} \end{equation}
As a result,
\begin{equation} \begin{aligned}
    \EmpKly(s; K)
    = \max_\pi  H^{\pi_{\{-,.\}}}\big[h_K(s,A_{0:K-1})\big] = \log \left| \{ h_K(s,a_{0:K-1}) \}_{a_{0:K-1}} \right|.
\end{aligned} \end{equation}
Hence, the proof is complete.
$\hfill \square$
\end{exmode}
 
\newpage
\section{Additional statements for MI-empowerment relationships in \autoref{sec:empowerment}} \label{append:props_emp}

\begin{figure}[H]
    \centering 
    \includegraphics[width=1\textwidth]{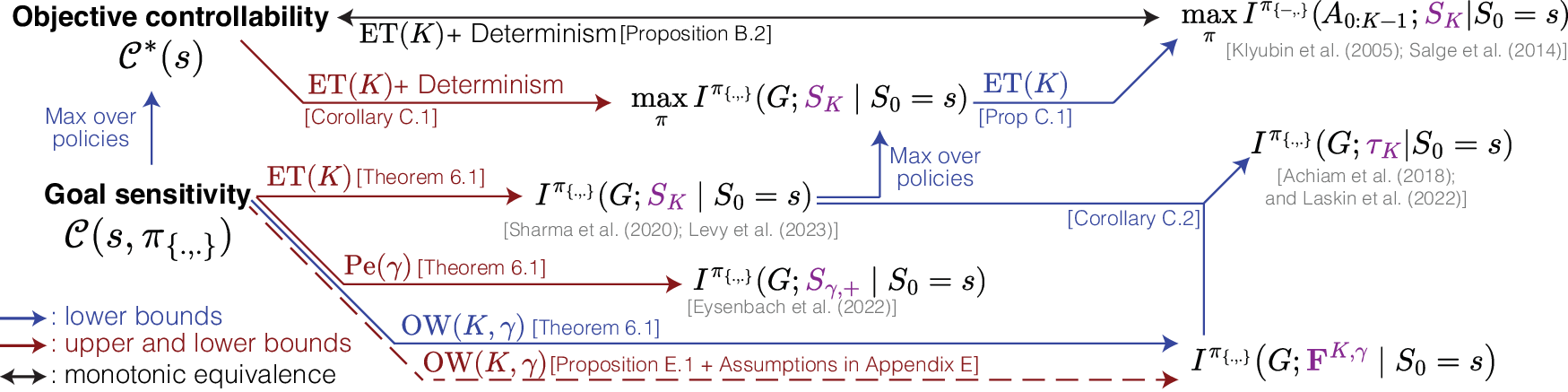}
    \caption{
    Theoretical bounds linking goal-sensitivity to empowerment and goal-behavior MIs.
    }
    \label{fig:GSenEmpApp}
\end{figure}

\begin{myprop}[$G$-$S_K$ MI is upper-bounded by Klyubin empowerment]
    \label{prop:KlyIG}
    Consider the problem formulation $\ET(K)$. Then, for any environment and any initial state $s \in \Ss$,
    \begin{linenomath*} \begin{equation} \begin{aligned}
        \max_{\pi_{\{.,.\}}}
        I^{\pi_{\{.,.\}}}\!\left(G;S_K \mid S_0=s\right)
        \le
        \EmpKly(s;K).
    \end{aligned} \end{equation} \end{linenomath*}
    Moreover, there exist an environment $p(.|.,.)$ and a horizon $K \in \NN$ such that the inequality is strict.
\end{myprop}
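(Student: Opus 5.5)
The plan is to prove the inequality by a data-processing argument. For any goal-conditioned policy $\pi_{\{.,.\}}$ with $G \sim {\rm Unif}(\Ss)$, the variables form a Markov chain $G \to A_{0:K-1} \to S_K$ given $S_0=s$. This holds because the environment dynamics $p(s'|s,a)$ do not depend on $G$. As a result, the conditional law of the whole state sequence $S_{1:K}$ given $(G, A_{0:K-1})$ equals its law given $A_{0:K-1}$ alone. The states are generated step by step from the actions, and $G$ enters only through the choice of actions.

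We must be careful here. Intermediate states $S_t$ influence later actions, so the chain to check is really $G \to (A_{0:K-1}) \to S_K$. To verify it, we write
\[
p(s_{1:K}\mid g,a_{0:K-1},s_0)\propto \prod_t p(s_{t+1}\mid s_t,a_t)\,\pi_{g,t}(a_t\mid s_t).
\]
This does not factor trivially, so conditional independence of $S_K$ and $G$ given $A_{0:K-1}$ can fail.

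We therefore instead compare with an action-only (open-loop) policy. The joint law of $(G, A_{0:K-1}, S_K)$ under a closed-loop policy need not satisfy the Markov chain. The cleaner route is to use the chain rule: $I(G;S_K) \le I(G, A_{0:K-1}; S_K)$. We then need $I(G,A_{0:K-1};S_K) = I(A_{0:K-1};S_K)$, which requires $S_K \perp G \mid A_{0:K-1}$. This again fails in general.

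The robust alternative is to view $G$ itself as defining a randomized non-stationary policy. Set $\pi'_t(a\mid \tau_t) = \sum_g p(g\mid \tau_t)\,\pi_{g,t}(a\mid s_t)$, and note that $\EmpKly$ maximizes over $\pi_{\{-,.\}}$. If the paper's $\EmpKly$ allows history-dependent action distributions, which is the standard Klyubin definition with open-loop action sequences $p(a_{0:K-1})$, then the capacity argument applies. The map $a_{0:K-1}\mapsto S_K$ is a channel $p(S_K\mid s, a_{0:K-1})$, and $G \to A_{0:K-1}$ must be open-loop for the chain to hold. With feedback, the relevant notion would be directed information, whose capacity for this finite channel equals the feedback capacity.

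The main obstacle is exactly this feedback issue. I would try first to show that $I(G;S_K) \le \max_{p(a_{0:K-1})} I(A_{0:K-1};S_K)$ still holds under feedback, by an induction over $t$ using the Markov structure of the MDP. Concretely, I would show $I(G;S_K)\le I(G;S_{K-1}, A_{K-1}\text{-channel})$-type bounds. If that fails, I would interpret $\EmpKly$ with state-dependent policies $\pi_{\{-,.\}}$, as the paper does. I would then argue that the mixture policy $\pi'$ induces the same joint law of $(S_{0:K},A_{0:K-1})$ as the goal-conditioned policy averaged over $G$. This gives $I(A_{0:K-1};S_K)$ under $\pi'$, to be compared with $I(G;S_K)$ via $I(G;S_K)\le I(G,A_{0:K-1};S_K)$ together with bounding the residual term $I(G;S_K\mid A_{0:K-1})$.

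For strictness, I would take a one-step example ($K=1$) with a noisy channel whose capacity-achieving input distribution is non-uniform, or which needs more inputs than can be realized under uniform $G$. A concrete choice is $N_s=2$ with a Z-channel. There $G$ is uniform and $f$ is deterministic, so $p(a)$ is a mixture of the two conditionals $\pi_g(\cdot\mid s)$. I would check that any such $I(G;S_1)$ falls strictly below the Z-channel capacity, using the fact that $I(G;S_1)\le I(A;S_1)$ (valid here since $K=1$, where no feedback arises) with equality only if $\pi_g$ is deterministic. Deterministic inputs then give the uniform input, which is suboptimal for the Z-channel.
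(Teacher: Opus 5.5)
Your proposal has a genuine gap in the main inequality. You correctly notice that with closed-loop goal-conditioned policies, $G\to A_{0:K-1}\to S_K$ need not be a Markov chain, but you leave this unresolved. None of the fallbacks you list (induction over $t$, bounding $I(G;S_K\mid A_{0:K-1})$, directed information, the mixture $\pi'$) can close the gap, because for $K\ge 2$ with stochastic dynamics the inequality is false.

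Take $\Ss=\{s,x_1,x_2,y_1,y_2\}$ and $K=2$. At $s$ there is a single action, leading to $x_1$ or $x_2$ with probability $1/2$ each. At $x_1,x_2$ there are actions $a,b$ with deterministic transitions $x_1\xrightarrow{a}y_1$, $x_1\xrightarrow{b}y_2$, $x_2\xrightarrow{a}y_2$, $x_2\xrightarrow{b}y_1$.

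\emph{Goal side.} Steer to $y_1$ when $G\in\{s,y_1\}$ and to $y_2$ otherwise. Then $S_2$ is a function of $G$, so $\max_\pi I(G;S_2\mid S_0=s)\ge h(2/5)\approx 0.97$ bits.

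\emph{Empowerment side.} Under any way of choosing actions, the events $\{A_1=a,S_2=y_1\}$ and $\{A_1=b,S_2=y_2\}$ both require $S_1=x_1$, so together they have probability exactly $1/2$. The Bayes error of guessing $A_1$ from $S_2$ is therefore $u:=\min_{a'}P(A_1=a')$. Since $h(t)\ge 2\min(t,1-t)$, we get $H(A_1\mid S_2)\ge 2u$ in bits, and hence $I(A_{0:1};S_2)\le h(u)-2u\le\log_2(5/4)\approx 0.32$ bits. Thus $\EmpKly(s;2)<\max_\pi I(G;S_2\mid S_0=s)$. Open-loop empowerment is even $0$ here.

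The paper's proof makes precisely the step you rejected. It forms $\tilde\pi$ (draw $\tilde G$, then follow $\pi_{\{\tilde G,.\}}$) and applies data processing to get $I(\tilde G;S_K)\le I(A_{0:K-1};S_K)$. That step presupposes the chain $\tilde G\to A_{0:K-1}\to S_K$. In the example, $\tilde\pi$ built from the policy above makes $A_1$ independent of $S_2$, while $I(\tilde G;S_2)=h(2/5)$.

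The data-processing argument is complete exactly when the chain holds:
\begin{itemize}
\item $K=1$;
\item deterministic dynamics, where $S_K=h_K(s,A_{0:K-1})$ is a function of the actions;
\item open-loop goal-conditioned policies.
\end{itemize}
The deterministic case is all that Corollary~\ref{corr:IGET_deterministic} uses, so that is the version you should state and prove.

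Your strictness example is fine. At $K=1$ there is no feedback, and $I(G;S_1)$ is convex in the goal-to-action channel. So the maximum is attained at deterministic $\pi_g$, which feed the Z-channel a uniform or degenerate input, strictly below capacity. The paper instead uses a deterministic 3-state, 2-action environment, where three equiprobable goals cannot induce a uniform input on two actions ($h(1/3)<\log 2$). Yours obtains strictness from channel noise with $N_s=2$.
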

\begin{exmode}
\textit{Proof:}
Fix any goal-conditioned policy $\pi_{\{.,.\}}$, and define a goal-independent policy $\tilde\pi_{\{-,.\}}$ as follows:
sample $\tilde G \sim {\rm Uniform}(\Ss)$ at $t=0$, and then follow the branch $\pi_{\{\tilde G,.\}}$.
Under $\tilde\pi_{\{-,.\}}$, the joint distribution of $(\tilde G,A_{0:K-1},S_K)$ is the same as that of $(G,A_{0:K-1},S_K)$ under $\pi_{\{.,.\}}$.
Hence, by the data-processing inequality,
\begin{linenomath*} \begin{equation} \begin{aligned}
    I^{\pi_{\{.,.\}}}(G;S_K \mid S_0=s)
    &=
    I^{\tilde\pi_{\{-,.\}}}(\tilde G;S_K \mid S_0=s)\\
    &\le
    I^{\tilde\pi_{\{-,.\}}}(A_{0:K-1};S_K \mid S_0=s)
    \le
    \EmpKly(s;K).
\end{aligned} \end{equation} \end{linenomath*}
Taking the maximum over $\pi_{\{.,.\}}$ proves the inequality.

To show strictness, consider the deterministic one-step environment with $\Ss=\{s,g_1,g_2\}$ and $\Aa(s)=\{a_1,a_2\}$, and the transition dynamics given by $p(g_1|s,a_1)=1$ and $p(g_2|s,a_2)=1$, with $g_1$ and $g_2$ absorbing states.

For $K=1$, we have $\EmpKly(s;1)=\log 2$, since the agent can choose between two perfectly distinguishable successor states.
On the other hand, under any goal-conditioned policy with $G \sim {\rm Uniform}(\Ss)$, the variable $S_1$ can take only the two values $g_1$ and $g_2$.
The mutual information $I(G;S_1 \mid S_0=s)$ is maximized by deterministically mapping two goals to one action and the remaining goal to the other, which yields
\begin{linenomath*} \begin{equation} \begin{aligned}
    \max_{\pi_{\{.,.\}}} I^{\pi_{\{.,.\}}}(G;S_1 \mid S_0=s)=h(1/3)<\log 2.
\end{aligned} \end{equation} \end{linenomath*}
Therefore, the inequality can be strict.
$\hfill \square$
\end{exmode}

\begin{mycorrol}[$\GSen_\ET^*$ and $G$-$S_K$ MI  in deterministic environments]
    \label{corr:IGET_deterministic}
    Suppose that the environment is deterministic, i.e., for every $s \in \Ss$, there exists a transition function $h(s,.): \Aa(s) \to \Ss$ such that $p_h(s'|s,a) = \Iin_{s' = h(s,a)}$ for every $a \in \Aa(s)$.
    Then, we have
    \begin{linenomath*} \begin{equation} \begin{aligned}
        \Phi_{N_s}\!\Big(\GSen_\ET^*(s,\pi_{\{.,.\}},K)\Big)
        \leq
        \max_{\pi_{\{.,.\}}}
        I^{\pi_{\{.,.\}}}\!\left(G;S_K \mid S_0=s\right)
        \leq
        \log \Big( 1 + N_s \, \GSen_\ET^*(s, K) \Big).
    \end{aligned} \end{equation} \end{linenomath*}
    where $\GSen_\ET^*(s, K)$ is the controllability for $\ET(K)$ (\autoref{eq:optimal_sensitivity}), and $\Phi_{N_s}$ is defined in \autoref{theor:MI_empowerment}.
\end{mycorrol}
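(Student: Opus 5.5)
The corollary is a sandwich with two sides. The upper bound should follow by chaining two earlier results. The lower bound should follow from the first part of \autoref{theor:MI_empowerment}, applied to the maximally in-control policy. One notational point first: the statement writes $\Phi_{N_s}(\GSen_\ET^*(s,\pi_{\{.,.\}},K))$. I read it, in line with \autoref{theor:MI_empowerment}, as $\Phi^{\rm down}_{N_s}\big(N_s^{-1}+\GSen_\ET^*(s,K)\big)$, i.e.\ the lower-bound function evaluated at the optimal goal sensitivity.

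\emph{Upper bound.} Let $\pi_{\{.,.\}}$ be any goal-conditioned policy. By \autoref{prop:KlyIG}, $I^{\pi_{\{.,.\}}}(G;S_K\mid S_0=s)\le \EmpKly(s;K)$, and this holds in any environment. In a deterministic environment, \autoref{prop:ETKlyEmp} gives $\EmpKly(s;K)=\log\big(1+N_s\GSen_\ET^*(s,K)\big)$. Taking the maximum over $\pi_{\{.,.\}}$ on the left yields the right-hand inequality. No further work is needed here.

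\emph{Lower bound.} Let $\pi^{\GSen*}_{\{.,.\}}$ attain $\GSen_\ET^*(s,K)$; the maximum exists by \autoref{eq:optimal_sensitivity}. By \autoref{theor:decomp}(1), this policy is also optimal for $\ET(K)$. Applying \autoref{theor:MI_empowerment}(1) to it gives
\[
\max_{\pi_{\{.,.\}}} I^{\pi_{\{.,.\}}}(G;S_K\mid S_0=s)\;\ge\; I^{\pi^{\GSen*}_{\{.,.\}}}(G;S_K\mid S_0=s)\;\ge\;\Phi^{\rm down}_{N_s}\big(N_s^{-1}+\GSen_\ET^*(s,K)\big).
\]
Note that \autoref{theor:MI_empowerment}(1) does not assume consistency, so it applies to this policy directly.

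\emph{Main obstacle.} The only delicate point is whether the argument $x=N_s^{-1}+\GSen_\ET^*$ lies in the range where $\Phi^{\rm down}$ behaves as stated. I will verify $x\in[1/N_s,1]$ directly. Since $\GSen^*\ge 0$, taking a goal-independent policy shows $\GSen_\ET^*\ge 0$, hence $x\ge 1/N_s$. Also $x=J^*_\ET(s,K)\le 1$ by \autoref{theor:decomp}(1). Monotonicity of $\Phi^{\rm down}$ is actually not needed, because the bound is evaluated at the optimum itself. As a sanity check in the deterministic case, $J^*_\ET=m/N_s$, where $m$ is the number of states reachable at time $K$. The lower bound then reads $\log N_s-h(m/N_s)-(1-m/N_s)\log(N_s-1)$, and the upper bound reads $\log m$; these are consistent, since Fano-type bounds sit below $\log m$.
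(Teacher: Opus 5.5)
Your proposal is correct and follows the paper's proof, which just cites \autoref{prop:KlyIG} and \autoref{prop:ETKlyEmp} for the upper bound and \autoref{theor:MI_empowerment}(1) for the lower bound, chained as you do. Your reading of $\Phi_{N_s}(\cdot)$ as $\Phi^{\rm down}_{N_s}(N_s^{-1}+\GSen_\ET^*(s,K))$, evaluated at a goal-sensitivity-maximizing policy, is the intended one; the literal argument without the $N_s^{-1}$ shift would fail, for example when only one state is reachable at time $K$.
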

\begin{exmode}
\textit{Proof:}
It is a direct consequence of \autoref{prop:ETKlyEmp}, \autoref{theor:MI_empowerment}, and \autoref{prop:KlyIG}.
$\hfill \square$
\end{exmode}

\begin{mycorrol}[The data-processing bound on trajectory-based MI]
    \label{corr:trajstate}
    Consider a goal-conditioned policy $\pi_{\{.,.\}}$.
    Then
    \begin{linenomath*} \begin{equation} \begin{aligned}
        I^{\pi_{\{.,.\}}}(G;S_K \mid S_0=s) 
        &\leq I^{\pi_{\{.,.\}}}(G;S_{1:K} \mid S_0=s)
        \leq I^{\pi_{\{.,.\}}}(G;\tau_{K} \mid S_0=s)\\
        I^{\pi_{\{.,.\}}}(G;\mathbf{F}^{K,\gamma} \mid S_0=s) &\leq I^{\pi_{\{.,.\}}}(G;S_{1:K} \mid S_0=s)
        \leq I^{\pi_{\{.,.\}}}(G;\tau_{K} \mid S_0=s)
    \end{aligned} \end{equation} \end{linenomath*}
\end{mycorrol}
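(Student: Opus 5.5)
The plan is to prove the corollary by recognizing every quantity on the left as a deterministic function of the variable on its right. The data-processing inequality then does the rest. Throughout, we fix $s$ and the policy $\pi_{\{.,.\}}$, sample $G \sim p_\goal$, and generate the trajectory by running $\pi_{\{G,.\}}$. Under this process the chain $G \to \tau_K \to \phi(\tau_K)$ is Markov for any measurable $\phi$. Hence $I(G;\phi(\tau_K)\mid S_0=s) \le I(G;\tau_K\mid S_0=s)$, and the same holds with $\tau_K$ replaced by $S_{1:K}$.

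\textbf{First chain.} The terminal state $S_K$ is the last coordinate of the state sequence $S_{1:K}$, so $S_K=\phi_1(S_{1:K})$ for a coordinate projection $\phi_1$. Likewise, $S_{1:K}$ is obtained from $\tau_K=(A_0,S_1,\dots,A_{K-1},S_K)$ by discarding the actions, so $S_{1:K}=\phi_2(\tau_K)$. Applying data processing along $G\to\tau_K\to S_{1:K}\to S_K$ gives both inequalities of the first line.

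\textbf{Second chain.} The key step is to show that each coordinate $F^{K,\gamma}_g=\gamma^{T_g-1}\Iin_{T_g\le K}$ is a function of $S_{1:K}$. Although $T_g$ itself may exceed $K$ and so is not determined by $S_{1:K}$, the truncated quantity $\min(T_g,K+1)$ is determined by it. Explicitly, $\Iin_{T_g\le K}=1$ iff some $t\in\{1,\dots,K\}$ has $S_t=g$, and in that case $T_g$ is the smallest such $t$. Thus $F^{K,\gamma}_g$ equals $\gamma^{t^*-1}$ when such a $t^*$ exists and $0$ otherwise. Some care is needed with the conventions $\gamma=0$ and $0^0=1$; these only affect the value of the map, not its measurability. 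Therefore $\mathbf{F}^{K,\gamma}=\phi_3(S_{1:K})$, and data processing gives $I(G;\mathbf{F}^{K,\gamma}\mid S_0=s)\le I(G;S_{1:K}\mid S_0=s)$. The final inequality reuses $S_{1:K}=\phi_2(\tau_K)$ from the first chain.

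The only point needing attention is the Markov structure. Because the policy may depend on $G$, we must argue that $\phi(\tau_K)$ is conditionally independent of $G$ given $\tau_K$. This holds trivially because $\phi(\tau_K)$ is a deterministic function of $\tau_K$, so no argument about the dynamics is needed. Everything else is routine, and no earlier result beyond standard information identities \cite{cover1999elements} is required.
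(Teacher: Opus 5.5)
Your proposal is correct and follows the paper's proof, which simply invokes the data-processing inequality. Your version also spells out the deterministic maps the paper leaves implicit: $\tau_K\mapsto S_{1:K}$, $S_{1:K}\mapsto S_K$, and $S_{1:K}\mapsto\mathbf{F}^{K,\gamma}$, the last obtained by reading off the first hitting time within the window, or setting the entry to $0$ if the state is not hit by time $K$.
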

\begin{exmode}
\textit{Proof:}
The statement follows directly from the data processing inequality.
$\hfill \square$
\end{exmode}

\newpage
\section{Proofs of the formal statements in the main text} 
\label{append:proofs}

\subsection{Proof of \autoref{prop:settings_contradicting}}
Consider the environment $p(.|.,.)$ in \autoref{fig:envA}A for $\epsilon_1 < \epsilon_2 \in [0,1]$.
Suppose $K = 2$.
Below, we show that there exists a range of values of $\gamma \in [0,1]$ where the optimal path for reaching $g$ from $S_0 = s_1$ is different for different formulations.

For example, if $\gamma > \sqrt{\epsilon_1}$ and $\gamma > \epsilon_2$, then starting from $S_0 = s_1$, the optimal policy $\pi^*_{g,t}(a|s)$ for $\Pe(\gamma)$ is given by following the path 
\begin{linenomath*} \begin{equation} \begin{aligned}
    s_1 
    \overset{a_{\rm f}}{\longrightarrow}
    s_2 
    \overset{a_{\rm f}}{\longrightarrow}
    s_3
    \overset{a_{\rm f}}{\longrightarrow}
    g
\end{aligned} \end{equation} \end{linenomath*}
Meanwhile, if $\epsilon_1 < \epsilon_2$, then the optimal policy $\pi^*_{g,t}(a|s)$ for $\ET(2)$ is given by following the path 
\begin{linenomath*} \begin{equation} \begin{aligned}
    s_1 
    \overset{a_{\rm f}}{\longrightarrow}
    s_2 
    \overset{a_{\rm j}}{\longrightarrow}
    \begin{cases}
    g & \text{with probability } \epsilon_2\\
    T & \text{with probability } 1 - \epsilon_2
    \end{cases}
\end{aligned} \end{equation} \end{linenomath*}
Finally, if $\gamma \epsilon_2 < \epsilon_1$, then the optimal policy $\pi^*_{g,t}(a|s)$ for $\OW(2,\gamma)$ is given by following the path 
\begin{linenomath*} \begin{equation} \begin{aligned}
    s_1 
    \overset{a_{\rm j}}{\longrightarrow}
    \begin{cases}
    g & \text{with probability } \epsilon_1\\
    T & \text{with probability } 1 - \epsilon_1.
    \end{cases}
\end{aligned} \end{equation} \end{linenomath*}
Hence, if we have
\begin{linenomath*} \begin{equation} \begin{aligned}
    \max \{ \sqrt{\epsilon_1}, \epsilon_2 \}
    < 
    \gamma
    <
    \frac{\epsilon_1}{\epsilon_2}
    <
    1,
\end{aligned} \end{equation} \end{linenomath*}
then different formulations yield generally different optimal actions (\autoref{fig:envA}B), and the proof is complete.
$\hfill \square$

\begin{figure}[H]
    \centering
    \includegraphics[width=\textwidth]{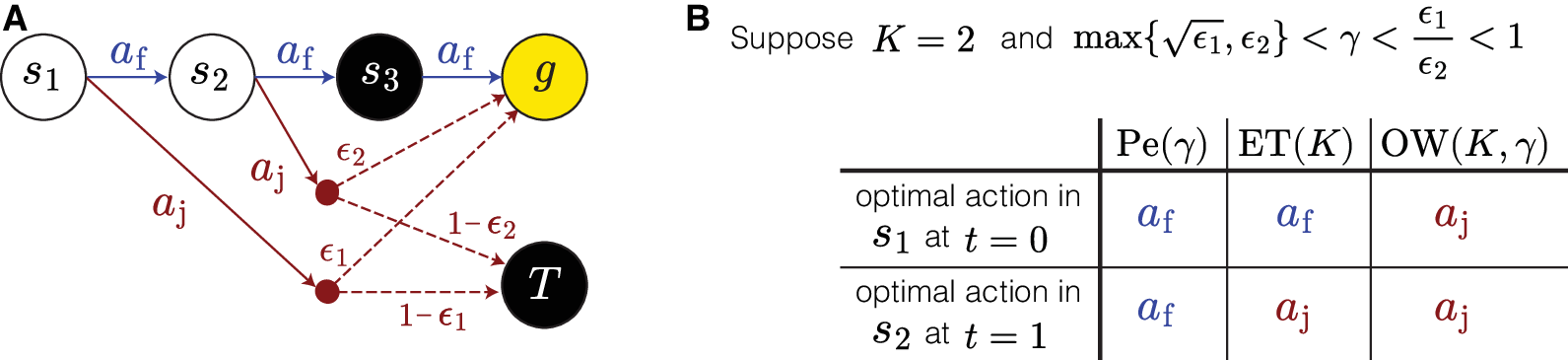}
    \caption{
    Counterexample environment (A) showing that different GCRL formulations can induce different optimal policies (B).
    In (A), solid arrows denote actions and dashed arrows stochastic transitions with probabilities $\epsilon_1$, $\epsilon_2$, $1-\epsilon_1$, and $1 - \epsilon_2$.
    In (B), the optimal policy is conditioned on $g$, in (A), as the goal.
    The example in \autoref{fig:formulations} is a special case of panel A.
    }
    \label{fig:envA}
\end{figure}

\subsection{Proof of \autoref{theor:decomp}}

As a direct consequence of \autoref{eq:general_expvalue}, we have
\begin{linenomath*} \begin{equation} \begin{aligned} \label{eq:decom_proof}
    J(s, \pi_{\{.,.\}}) - 
    J(s, \pi_{\{g',.\}}) = 
    \frac{1}{N_s} \sum_{g \in \Ss} 
    \Big(J(s, g, \pi_{\{g,.\}}) - J(s, g, \pi_{\{g',.\}})\Big)
\end{aligned} \end{equation} \end{linenomath*}
for an arbitrary goal $g' \in \Ss$.
By averaging \autoref{eq:decom_proof} over $g'$, we have
\begin{linenomath*} \begin{equation} \begin{aligned} \label{eq:decomJ}
    J(s, \pi_{\{.,.\}}) = \underbrace{\frac{1}{N_s} \sum_{g' \in \Ss} J(s, \pi_{\{g',.\}})}_{\textnormal{goal-agnostic value of $\pi$}} + 
    \underbrace{\frac{1}{N_s^2} \sum_{g, g' \in \Ss} 
    \Big ( J(s, g, \pi_{\{g,.\}}) - J(s, g, \pi_{\{g',.\}}) \Big)}_{= \GSen(s, \pi_{\{.,.\}}) \textnormal{; see \autoref{def:control}}}.
\end{aligned} \end{equation} \end{linenomath*}

\textit{1. For the case of $\Pe(\gamma)$ and $\ET(K)$:}
A direct consequence of \autoref{eq:Pe_J} and \autoref{eq:ET_J}, respectively, is that 
\begin{linenomath*} \begin{equation} \begin{aligned}
    J(s, \pi_{\{g',.\}}) = \frac{1}{N_s} \underbrace{\sum_{g} J(s, g, \pi_{\{g',.\}})}_{= 1} = \frac{1}{N_s}.
\end{aligned} \end{equation} \end{linenomath*}
As a result, we can rewrite \autoref{eq:decomJ} as
\begin{linenomath*} \begin{equation} \begin{aligned}
    J(s, \pi_{\{.,.\}}) &= \frac{1}{N_s} + \GSen(s, \pi_{\{.,.\}}).
\end{aligned} \end{equation} \end{linenomath*}
Hence, the proof of the first part is complete.

\textit{2. $J$-$\GSen$ relationship for $\OW(K, \gamma)$ and formulations with non-negative rewards:}
Using \autoref{eq:decomJ}, we have
\begin{linenomath*} \begin{equation} \begin{aligned}
    \GSen(s,\pi_{\{.,.\}})
    =
    J(s,\pi_{\{.,.\}})
    -
    \frac{1}{N_s}\sum_{g' \in \Ss} J(s,\pi_{\{g',.\}}).
\end{aligned} \end{equation} \end{linenomath*}
Moreover, the non-negative reward assumption implies that, for every $g \in \Ss$ and $g' \in \Ss$,
\[
J(s,g,\pi_{\{g',.\}}) \geq 0.
\]
Hence,
\begin{linenomath*} \begin{equation} \begin{aligned}
    \frac{1}{N_s}\sum_{g' \in \Ss} J(s,\pi_{\{g',.\}})
    &=
    \frac{1}{N_s^2}\sum_{g,g' \in \Ss} J(s,g,\pi_{\{g',.\}})
    \\
    &\geq
    \frac{1}{N_s^2}\sum_{g \in \Ss} J(s,g,\pi_{\{g,.\}})
    =
    \frac{1}{N_s} J(s,\pi_{\{.,.\}}).
\end{aligned} \end{equation} \end{linenomath*}
Since all the terms are non-negative, the inequality is tight if and only if $J(s,g,\pi_{\{g',.\}})=0$ for all $g \neq g'$.
Therefore,
\begin{linenomath*} \begin{equation} \begin{aligned}
    \GSen(s,\pi_{\{.,.\}})
    &\leq
    J(s,\pi_{\{.,.\}}) - \frac{1}{N_s}J(s,\pi_{\{.,.\}})
    =
    \frac{N_s-1}{N_s}J(s,\pi_{\{.,.\}}),
\end{aligned} \end{equation} \end{linenomath*}
which is equivalent to
\begin{linenomath*} \begin{equation} \begin{aligned}
    J(s,\pi_{\{.,.\}})
    \geq
    \frac{N_s}{N_s-1}\GSen(s,\pi_{\{.,.\}}).
\end{aligned} \end{equation} \end{linenomath*}
Hence, the proof of the second part is complete.

\textit{3. The maximally in-control policy for $\OW(K, \gamma)$:} 
Consider the environment $p(.|.,.)$ in \autoref{fig:envB}A, $K=2$, and $\gamma=1$.
The goal-conditioned policy in \autoref{fig:envB}B is optimal with respect to $J_\OW$, but the suboptimal policy in \autoref{fig:envB}C has a higher goal sensitivity.
Hence, a maximally in-control policy is not necessarily optimal for $\OW(K, \gamma)$.

\begin{figure}[t!]
    \centering 
    \includegraphics[width=1\textwidth]{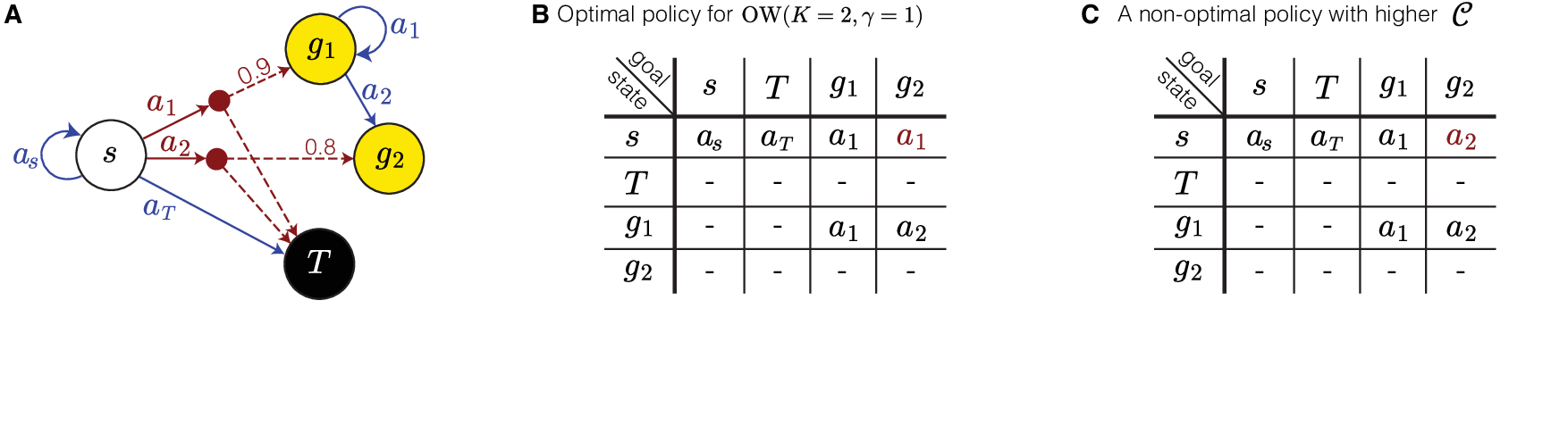}
    \caption{Counterexample showing that maximizing goal sensitivity $\GSen$ does not necessarily result in maximizing value $J$ for the $\OW(K,\gamma)$ formulation; see \autoref{theor:decomp}.
    }
    \label{fig:envB}
\end{figure}

For the bound, let $\pi^{\GSen*}_{\{.,.\}}$ be any maximally in-control policy. 
By part 2,
\begin{linenomath*} \begin{equation} \begin{aligned}
    J(s,\pi^{\GSen*}_{\{.,.\}})
    \geq
    \frac{N_s}{N_s-1}\GSen^*(s).
\end{aligned} \end{equation} \end{linenomath*}
Since $J^*(s)\leq 1$ for $\OW(K,\gamma)$, it follows that
\begin{linenomath*} \begin{equation} \begin{aligned}
    0 \le
    J^*(s)-J(s,\pi^{\GSen*}_{\{.,.\}})
    \le
    1-\frac{N_s}{N_s-1}\GSen^*(s).
\end{aligned} \end{equation} \end{linenomath*}
If $\GSen^*(s)=1-1/N_s$, the upper bound is zero, hence every maximally in-control policy is optimal.
$\hfill \square$

\subsection{Proof of \autoref{theor:MI_empowerment}}

To prove \autoref{theor:MI_empowerment}, we will use the following lemma to show that for consistent policies, $S_{\gamma,+}$ and $S_K$ are the best estimates of the goal $G$ for $\Pe(\gamma)$ and $\ET(K)$, respectively.

\begin{mylemma}[Optimality of estimating the commanded goal by reached states in $\Pe$ and $\ET$] \label{lemma:optimaldecoder}
    Consider problem formulations $\Pe(\gamma)$ and $\ET(K)$ together with a goal conditioned policy $\pi_{\{.,.\}}$.
    Let $S' := S_{\gamma,+}$ for $\Pe(\gamma)$ and $S' := S_{K}$ for $\ET(K)$, respectively.
    Suppose our aim is to estimate $G$ based on $S'$ and consider the naïve identity estimator $\hat{G}(s') = s'$
    as well as the Bayes-optimal estimator
    \begin{linenomath*} \begin{equation} \begin{aligned}
        \hat{G}^*(s') &= \arg \max_{g} p^{\pi_{\{.,.\}}}(G = g | S' = s', S_0=s).
    \end{aligned} \end{equation} \end{linenomath*}
    If the policy $\pi_{\{.,.\}}$ is consistent (\autoref{def:consistency}), then $\hat{G}$ and $\hat{G}^*$ have the same error probability, i.e.,
    \begin{linenomath*} \begin{equation} \begin{aligned}
    p_e
    :=
    p^{\pi_{\{.,.\}}}(\hat{G}(S') \neq G \mid S_0=s)
    =
    p^{\pi_{\{.,.\}}}(\hat{G}^*(S') \neq G \mid S_0=s)
    =:
    p^*_e.
\end{aligned} \end{equation} \end{linenomath*}
\end{mylemma}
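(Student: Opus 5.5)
The plan is to write out both error probabilities explicitly in terms of the per-goal state distributions $p^{\pi_{\{g,.\}}}(S' = s' \mid S_0 = s)$. We then show that consistency makes the identity decoder attain the Bayes-optimal maximum on every value $s'$.

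First I record the joint law of $(G,S')$. Since $G \sim {\rm Uniform}(\Ss)$ and $S'\mid G=g$ is generated by running $\pi_{\{g,.\}}$, we have
\begin{equation*}
p^{\pi_{\{.,.\}}}(G=g, S'=s' \mid S_0=s) = \tfrac{1}{N_s}\, p^{\pi_{\{g,.\}}}(S'=s' \mid S_0=s).
\end{equation*}
Both $S_{\gamma,+}$ and $S_K$ take values in $\Ss=\Gg$, so the identity map $\hat G(s')=s'$ is a legitimate estimator. The posterior $p(G=g\mid S'=s',S_0=s)$ is proportional in $g$ to $p^{\pi_{\{g,.\}}}(S'=s'\mid S_0=s)$. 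Hence $\hat G^*(s')$ maximizes this likelihood over $g$, and the Bayes success probability is
\begin{equation*}
1-p_e^* = \tfrac{1}{N_s}\sum_{s'\in\Ss}\max_{g\in\Ss} p^{\pi_{\{g,.\}}}(S'=s'\mid S_0=s),
\end{equation*}
irrespective of how ties are broken. Similarly, the identity decoder has success probability
\begin{equation*}
1-p_e=\tfrac{1}{N_s}\sum_{s'\in\Ss} p^{\pi_{\{s'\,,.\}}}(S'=s'\mid S_0=s).
\end{equation*}

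Next I invoke the value expressions \autoref{eq:Pe_J} and \autoref{eq:ET_J}. They give $J(s,g,\pi_{\{g',.\}}) = p^{\pi_{\{g',.\}}}(S'=g\mid S_0=s)$, with $S'=S_{\gamma,+}$ for $\Pe(\gamma)$ and $S'=S_K$ for $\ET(K)$. With this identification, the consistency condition \autoref{def:consistency} applied with rewarded goal $s'$ reads exactly
\begin{equation*}
p^{\pi_{\{s',.\}}}(S'=s'\mid S_0=s)\ \ge\ p^{\pi_{\{g',.\}}}(S'=s'\mid S_0=s)\quad\text{for all } g'.
\end{equation*}
So for every $s'$, the maximum in the Bayes expression is attained at $g=s'$. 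The two sums therefore coincide term by term, giving $p_e=p_e^*$.

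The only delicate point is ties in the $\arg\max$: $\hat G^*$ need not equal the identity map. This is why I phrase the argument through the success probabilities, which depend only on the maximal likelihood values and not on which maximizer is selected. As a sanity check, $p_e\ge p_e^*$ holds trivially by Bayes optimality, so consistency only has to supply the reverse inequality. That reverse inequality is exactly the term-by-term bound above.
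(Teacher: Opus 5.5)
Your proof is correct and follows essentially the same route as the paper. Both arguments use Bayes' rule with the uniform prior to write $1-p_e^*=\frac{1}{N_s}\sum_{s'}\max_g J(s,s',\pi_{\{g,.\}})$, identify $1-p_e$ with $J(s,\pi_{\{.,.\}})$, and invoke consistency to show the maximum is attained at $g=s'$. Your remark that tie-breaking in the $\arg\max$ is irrelevant, because only the maximal likelihood values enter the success probability, is a small clarification the paper leaves implicit.
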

\begin{exmode}
\textit{Proof of Lemma:}
The error probability for the naïve identity is given by
\begin{linenomath*} \begin{equation} \begin{aligned}
    p_e
    :&=
    p^{\pi_{\{.,.\}}}(\hat{G}(S') \neq G \mid S_0=s)\\
    &=
    1 - p^{\pi_{\{.,.\}}}(S' = G \mid S_0=s)
    \overset{\text{\autoref{eq:Pe_J} and \autoref{eq:ET_J}}}{=} 
    1 - J(s , \pi_{\{.,.\}}).
\end{aligned} \end{equation} \end{linenomath*}
where $J$ corresponds to either $\Pe(\gamma)$ or $\ET(K)$, depending on whether $S' = S_{\gamma,+}$ or $S' = S_{K}$, respectively.

The analogous error probability for the Bayes-optimal estimator is given by
\begin{linenomath*} \begin{equation} \begin{aligned} \label{eq:proof:peopt}
    p^*_e
    :&=
    p^{\pi_{\{.,.\}}}(\hat{G}^*(S') \neq G \mid S_0=s)\\
    &= 
    1 - p^{\pi_{\{.,.\}}}(\hat{G}^*(S') = G \mid S_0=s)\\
    &=
    1 - \sum_{g' \in \Ss} 
    p^{\pi_{\{.,.\}}}(S' = g' | S_0=s) 
    \underbrace{
    p^{\pi_{\{.,.\}}}(\hat{G}^*(S') = G | S' = g' , S_0=s)
    }_{= \max_{g} p^{\pi_{\{.,.\}}}(G = g | S' = g', S_0=s)}.
\end{aligned} \end{equation} \end{linenomath*}
Additionally, using the Bayes rule, we have
\begin{linenomath*} \begin{equation} \begin{aligned} \label{eq:proof:peopt2}
    \max_{g} p^{\pi_{\{.,.\}}}(G = g | S' = g', S_0=s) =
    \max_{g} 
    \frac
    {p^{\pi_{\{.,.\}}}(S' = g'|G = g , S_0=s) p_\goal(g)}
    {p^{\pi_{\{.,.\}}}(S' = g'| S_0=s)} 
\end{aligned} \end{equation} \end{linenomath*}
which, combined with \autoref{eq:proof:peopt}, gives
\begin{linenomath*} \begin{equation} \begin{aligned} \label{eq:proof:peopt3}
    p^*_e &= 
    1 - \sum_{g' \in \Ss} 
    \max_{g} 
    \underbrace{p^{\pi_{\{.,.\}}}(S' = g'|G = g , S_0=s)}_{= J(s, g', \pi_{\{g,.\}})} 
    \underbrace{p_\goal(g)}_{= 1/N_s}\\
    &=
    1 - \frac{1}{N_s} \sum_{g' \in \Ss} \max_{g} J(s, g', \pi_{\{g,.\}}) \geq p_e.
\end{aligned} \end{equation} \end{linenomath*}
where $J$ corresponds to either $\Pe(\gamma)$ or $\ET(K)$, depending on whether $S' = S_{\gamma,+}$ or $S' = S_{K}$, respectively
If the policy $\pi_{\{.,.\}}$ is consistent (\autoref{def:consistency}), then we have $\max_{g} J(s, g', \pi_{\{g,.\}}) = J(s, g', \pi_{\{g',.\}})$ in \autoref{eq:proof:peopt3} and, as a result,
\begin{linenomath*} \begin{equation} \begin{aligned}
    p^*_e = 1 - J(s , \pi_{\{.,.\}}) = p_e.
\end{aligned} \end{equation} \end{linenomath*}
Hence, the proof is complete.
$\hfill \square$
\end{exmode}

We then split the proof of  \autoref{theor:MI_empowerment} into two parts.

\textit{1-2. The case of $\Pe(\gamma)$ and $\ET(K)$:} 
We prove the two inequalities in parallel using
\begin{linenomath*} \begin{equation} \begin{aligned}
    I^{\pi_{\{.,.\}}}(G;S_{\gamma,+}\mid S_0=s)
    &=
    H(G)-H(G \mid S_{\gamma,+}, S_0=s)
    \\
    I^{\pi_{\{.,.\}}}(G;S_{K}\mid S_0=s)
    &=
    H(G)-H(G \mid S_{K}, S_0=s)
\end{aligned} \end{equation} \end{linenomath*}
respectively for $\Pe(\gamma)$ and $\ET(K)$.
Since $G \sim {\rm Uniform}(\Ss)$, we have $H(G)=\log N_s$.
Hence, we need to find a bound for the conditional entropies. 

To simplify the proof, we define the random variable $S'$ as $S_{\gamma,+}$ for $\Pe(\gamma)$ and as $S_{K}$ for $\ET(K)$.
Then, from an information-theoretic perspective, we can view the commanded goal $G$ as a message to be transmitted, but the receiver receives only $S'$, sampled by running the policy $\pi_{\{G,.\}}$.
Given this communication problem, we define the naïve identity decoder $\hat{G}(s') = s'$ and the Bayes-optimal decoder $\hat{G}^*$ as in \autoref{lemma:optimaldecoder}.
According to \autoref{lemma:optimaldecoder}, for a general policy, we have
\begin{linenomath*} \begin{equation} \begin{aligned}
    p_e = 
    1 - J(s , \pi_{\{.,.\}})
    \overset{\text{\autoref{theor:decomp}}}{=} 
    1 - \frac{1}{N_s} - \GSen(s,\pi_{\{.,.\}}) \geq p^*_e.
\end{aligned} \end{equation} \end{linenomath*}
where $J$ and $\GSen$ correspond to either $\Pe(\gamma)$ or $\ET(K)$, depending on whether $S' = S_{\gamma,+}$ or $S' = S_{K}$, respectively.
Hence, using Fano's inequality, we have
\begin{linenomath*} \begin{equation} \begin{aligned}
    H(G \mid S', S_0=s)
    &\leq
    h(p^*_e) + p^*_e \log(N_s-1)
    \leq 
    h(p_e) + p_e \log(N_s-1).
\end{aligned} \end{equation} \end{linenomath*}
This proves the general lower bound:
\begin{linenomath*} \begin{equation} \begin{aligned}
    I^{\pi_{\{.,.\}}}(G;S'\mid S_0=s)
    &\geq
    \Phi^{\rm down}_{N_s}\!\Big(\GSen(s,\pi_{\{.,.\}}) + N_s^{-1}\Big).
\end{aligned} \end{equation} \end{linenomath*}

For the upper bound, we need the reverse-Fano's inequality, which is given only for the optimal decoder~\cite{rioul2023interplay, tebbe1968uncertainty}.
If the policy $\pi_{\{.,.\}}$ is consistent (\autoref{def:consistency}), then we can use \autoref{lemma:optimaldecoder} and, as a result,
\begin{linenomath*} \begin{equation} \begin{aligned} \label{eq:proof:consistentpe}
    \text{for consistent policies:} \quad p^*_e = p_e =  
    1 - \frac{1}{N_s} - \GSen(s,\pi_{\{.,.\}}).
\end{aligned} \end{equation} \end{linenomath*}
Using \autoref{eq:proof:consistentpe}, the Theorem 5 and Example 14 of~\cite{rioul2023interplay} directly result the upper bound, for consistent policies:
\begin{linenomath*} \begin{equation} \begin{aligned}
    I^{\pi_{\{.,.\}}}(G;S'\mid S_0=s)
    &\leq
    \Phi^{\rm up}_{N_s}\!\Big(\GSen(s,\pi_{\{.,.\}}) + N_s^{-1}\Big).
\end{aligned} \end{equation} \end{linenomath*}
Hence, the proof of the bounds in the first two parts is complete.

It is straightforward to confirm that $\Phi^{\rm down}_{N_s}(x)$ is strictly increasing on $x \in [1/N_s, 1]$, and it follows from~\cite{rioul2023interplay} that $\Phi^{\rm up}_{N_s}(x)$ is strictly increasing on $x \in [1/N_s, 1]$.
Given that $J(s , \pi_{\{.,.\}}) \in [0,1]$ for both $\Pe(\gamma)$ and $\ET(K)$, then it follows from \autoref{theor:decomp} that $x = \GSen(s,\pi_{\{.,.\}}) + N_s^{-1} \leq 1$;
meanwhile, from consistency, it follows that $x = \GSen(s,\pi_{\{.,.\}}) + N_s^{-1} \geq 1/N_s$.
Hence, $\Phi^{\rm down}_{N_s}(x)$ and $\Phi^{\rm up}_{N_s}(x)$ are strictly increasing for all consistent policies.

Hence, the proof of the first two parts is complete.

\textit{3. The case of $\OW(K, \gamma)$:} 
For each $g \in \Ss$, let $P_{\mathbf{F}}^{\pi_{\{g,.\}}}$ denote the distribution of $\mathbf{F}^{K,\gamma}$ under the policy $\pi_{\{g,.\}}$, and let
\begin{linenomath*} \begin{equation} \begin{aligned}
    P_{\mathbf{F}}^{\bar\pi_{\{-,.\}}} := \frac{1}{N_s}\sum_{g \in \Ss} P_{\mathbf{F}}^{\pi_{\{g,.\}}}.
\end{aligned} \end{equation} \end{linenomath*}
Since, using \autoref{eq:OW_J},
\begin{linenomath*} \begin{equation} \begin{aligned}
    J_\OW(s,g,\pi_{\{g',.\}},K,\gamma)
    =
    \EE^{\pi_{\{g',.\}}}\!\left[F^{K,\gamma}_g \mid S_0=s\right],
\end{aligned} \end{equation} \end{linenomath*}
which, together with the definition of $\GSen_\OW$ in \autoref{def:control}, implies
\begin{linenomath*} \begin{equation} \begin{aligned} 
    \GSen_\OW(s,\pi_{\{.,.\}},K,\gamma) 
    &= 
    \frac{1}{N_s^2} \sum_{g, g' \in \Ss} 
    \Big ( 
    \EE_{P_{\mathbf{F}}^{\pi_{\{g,.\}}}}[F^{K,\gamma}_g] - 
    \EE_{P_{\mathbf{F}}^{\pi_{\{g',.\}}}}[F^{K,\gamma}_g]
    \Big)\\
    &= 
    \frac{1}{N_s } \sum_{g \in \Ss} 
    \Big ( 
    \EE_{P_{\mathbf{F}}^{\pi_{\{g,.\}}}}[F^{K,\gamma}_g] - 
    \EE_{P_{\mathbf{F}}^{\bar\pi_{\{-,.\}}}}[F^{K,\gamma}_g]
    \Big)
\end{aligned} \end{equation} \end{linenomath*}
Because $0 \le F^{K,\gamma}_g \le 1$, each term is, by the definition of the total variation distance, bounded by total variation:
\begin{linenomath*} \begin{equation} \begin{aligned}
    \EE_{P_{\mathbf{F}}^{\pi_{\{g,.\}}}}[F^{K,\gamma}_g]
    -
    \EE_{P_{\mathbf{F}}^{\bar\pi_{\{-,.\}}}}[F^{K,\gamma}_g]
    \le
    \|P_{\mathbf{F}}^{\pi_{\{g,.\}}}-P_{\mathbf{F}}^{\bar\pi_{\{-,.\}}}\|_{\rm TV}.
\end{aligned} \end{equation} \end{linenomath*}
Hence
\begin{linenomath*} \begin{equation} \begin{aligned}
    \GSen_\OW(s,\pi_{\{.,.\}},K,\gamma)
    \le
    \frac{1}{N_s}\sum_{g \in \Ss}\|P_{\mathbf{F}}^{\pi_{\{g,.\}}}-P_{\mathbf{F}}^{\bar\pi_{\{-,.\}}}\|_{\rm TV}.
\end{aligned} \end{equation} \end{linenomath*}
By Pinsker's inequality,
\begin{linenomath*} \begin{equation} \begin{aligned}
    \|P_{\mathbf{F}}^{\pi_{\{g,.\}}}-P_{\mathbf{F}}^{\bar\pi_{\{-,.\}}}\|_{\rm TV}^2
    \le
    \frac{1}{2}D_{\rm KL}(P_{\mathbf{F}}^{\pi_{\{g,.\}}}\|P_{\mathbf{F}}^{\bar\pi_{\{-,.\}}}).
\end{aligned} \end{equation} \end{linenomath*}
Applying Jensen's inequality,
\begin{linenomath*} \begin{equation} \begin{aligned}
    \GSen_\OW(s,\pi_{\{.,.\}},K,\gamma)^2
    &\le
    \frac{1}{N_s}\sum_{g \in \Ss}\|P_{\mathbf{F}}^{\pi_{\{g,.\}}}-P_{\mathbf{F}}^{\bar\pi_{\{-,.\}}}\|_{\rm TV}^2
    \le
    \frac{1}{2N_s}\sum_{g \in \Ss} D_{\rm KL}(P_{\mathbf{F}}^{\pi_{\{g,.\}}}\|P_{\mathbf{F}}^{\bar\pi_{\{-,.\}}}).
\end{aligned} \end{equation} \end{linenomath*}
Finally, using the standard identity for mutual information under a uniform prior,
\begin{linenomath*} \begin{equation} \begin{aligned}
    I^{\pi_{\{.,.\}}}\!\left(G;\mathbf{F}^{K,\gamma}\mid S_0=s\right)
    =
    \frac{1}{N_s}\sum_{g \in \Ss} D_{\rm KL}(P_{\mathbf{F}}^{\pi_{\{g,.\}}}\|P_{\mathbf{F}}^{\bar\pi_{\{-,.\}}}).
\end{aligned} \end{equation} \end{linenomath*}
Hence, the proof is complete.
$\hfill \square$

\subsection{Proof of \autoref{prop:MIZunifvsf}}

Using the chain rule for MI, we first note that
\begin{linenomath*} \begin{equation} \begin{aligned}
    I^{\pi_{\{.,.\}}}\!\left(G; S' | S_0=s \right) &+
    \underbrace{
    I^{\pi_{\{.,.\}}}\!\left(Z; S' | G, S_0=s \right)
    }_{=0 \text{ for deterministic $f$}}\\
    &=\\
    I^{\pi_{\{.,.\}}}\!\left(Z; S' | S_0=s \right) &+
    \underbrace{
    I^{\pi_{\{.,.\}}}\!\left(G; S' | Z, S_0=s \right)
    }_{= 0 \text{ because } G \perp\!\!\!\perp S' | Z}
\end{aligned} \end{equation} \end{linenomath*}
which proves \autoref{eq:MIGSZS}:
\begin{linenomath*} \begin{equation} \begin{aligned}
    I^{\pi_{\{.,.\}}}\!\left(G; S' | S_0=s \right)
    = I^{\pi_{\{.,.\}}^\skill}_{Z \sim p_f}\!\left(Z; S' | S_0=s \right).
\end{aligned} \end{equation} \end{linenomath*}
Hence, we have
\begin{linenomath*} \begin{equation} \begin{aligned}
    I^{\pi_{\{.,.\}}^\skill}_{Z \sim {\rm Unif}(\Zz)}\!\left(Z; S' \mid S_0=s \right) - I^{\pi_{\{.,.\}}}\!\left(G; S' | S_0=s \right)
    &=\\
    I^{\pi_{\{.,.\}}^\skill}_{Z \sim {\rm Unif}(\Zz)}\!\left(Z; S' \mid S_0=s \right)
    &-
    I^{\pi_{\{.,.\}}^\skill}_{Z \sim p_f}\!\left(Z; S' | S_0=s \right),
\end{aligned} \end{equation} \end{linenomath*}
where we can bound the right-hand side to prove the proposition.
To do so, let 
\begin{linenomath*} \begin{equation} \begin{aligned}
    u := {\rm Unif}(\Zz),
    \qquad
    p := p_{f},
    \qquad
    q_z := p^{\pi_{\{.,.\}}^\skill}(S' = . \mid S_0=s).
\end{aligned} \end{equation} \end{linenomath*}
Also, define the  marginals on $S'$ as
\begin{linenomath*} \begin{equation} \begin{aligned}
    \Tilde{\mu}(s')
    &:=
    \sum_{z \in \Zz} u(z)\, q_z(s')
    \quad \text{and} \quad
    \mu(s')
    :=
    \sum_{z \in \Zz} p(z)\, q_z(s').
\end{aligned} \end{equation} \end{linenomath*}
Then, by definition of conditional MI under the two skill priors,
\begin{linenomath*} \begin{equation} \begin{aligned}
    I^{\pi_{\{.,.\}}^\skill}_{Z \sim {\rm Unif}(\Zz)}\!\left(Z; S' \mid S_0=s \right)
    &=
    H(\Tilde{\mu})
    -
    \sum_{z \in \Zz} u(z)\, H(q_z),
    \\
    I^{\pi_{\{.,.\}}^\skill}_{Z \sim p_f}\!\left(Z; S' | S_0=s \right)
    &=
    H(\mu)
    -
    \sum_{z \in \Zz} p(z)\, H(q_z).
\end{aligned} \end{equation} \end{linenomath*}
Hence,
\begin{linenomath*} \begin{equation} \begin{aligned} \label{eq:app:proof:dMIZ}
    \Bigl|
    I^{\pi_{\{.,.\}}^\skill}_{Z \sim {\rm Unif}(\Zz)}\!\left(Z; S' \mid S_0=s \right)
    &-
    I^{\pi_{\{.,.\}}^\skill}_{Z \sim p_f}\!\left(Z; S' | S_0=s \right)
    \Bigr|
    \\
    &\leq
    \bigl| H(\Tilde{\mu}) - H(\mu) \bigr|
    +
    \left|
    \sum_{z \in \Zz}
    \bigl(u(z)-p(z)\bigr)\, H(q_z)
    \right|.
\end{aligned} \end{equation} \end{linenomath*}

We first bound the entropy difference term. 
Since $\Tilde{\mu}$ and $\mu$ are obtained by passing $u$ and $p$ through the same channel $z \mapsto q_z$, their total variation distance can be bounded by the distance between $u$ and $p$:
\begin{linenomath*} \begin{equation} \begin{aligned}
    \delta_{\mu} := \| \Tilde{\mu} - \mu \|_{\rm TV}
    &=
    \frac12 \sum_{s'}
    \left|
    \Tilde{\mu}(s') - \mu(s')
    \right|\\
    &=
    \frac12 \sum_{s'}
    \left|
    \sum_{z \in \Zz}
    \bigl(u(z)-p(z)\bigr)\, q_z(s')
    \right|
    \\
    &\leq
    \frac12 \sum_{z \in \Zz}
    |u(z)-p(z)|
    \underbrace{\sum_{s'} q_z(s')}_{= 1}
    =
    \|u-p\|_{\rm TV}
    =
    \delta.
\end{aligned} \end{equation} \end{linenomath*}
Moreover, since $N_z \leq N_{s'}$, we have
\begin{linenomath*} \begin{equation} \begin{aligned}
    \delta_{\mu} \le \delta = \|u-p\|_{\rm TV} = \frac{1}{2} \sum_{z \in \Zz} \Big|p(z) - \frac{1}{N_z} \Big| \leq 1 - \frac{1}{N_z} \leq 1 - \frac{1}{N_{s'}}.
\end{aligned} \end{equation} \end{linenomath*}
Therefore, using Theorem 3 of~\cite{sason2013entropy}, we have
\begin{linenomath*} \begin{equation} \begin{aligned}
    \bigl| H(\Tilde{\mu}) - H(\mu) \bigr|
    &\leq
    h(\delta_{\mu} ) + \delta_{\mu} \log(N_{s'}-1).
\end{aligned} \end{equation} \end{linenomath*}
Finally, since the function
\begin{linenomath*} \begin{equation} \begin{aligned}
g(\delta):=h(\delta)+\delta\log(N_{s'}-1)
\end{aligned} \end{equation} \end{linenomath*}
is increasing on $\left[0,1-\frac1{N_{s'}}\right]$, it follows that
\begin{linenomath*} \begin{equation} \begin{aligned}
    \bigl| H(\Tilde{\mu}) - H(\mu) \bigr|
    &\leq
    h(\delta) + \delta \log(N_{s'}-1).
\end{aligned} \end{equation} \end{linenomath*}

Next, we bound the second term in \autoref{eq:app:proof:dMIZ}.
Since each $H(q_z) \leq \log N_{s'}$,
\begin{linenomath*} \begin{equation} \begin{aligned}
    \left|
    \sum_{z \in \Zz}
    \bigl(u(z)-p(z)\bigr)\, H(q_z)
    \right| 
    &\leq
    \sum_{z \in \Zz}
    \left|u(z)-p(z)\right| H(q_z)\\
    &\leq 
    \log N_{s'} \sum_{z \in \Zz}
    \left|u(z)-p(z)\right| = 2 \delta \log N_{s'}
\end{aligned} \end{equation} \end{linenomath*}

Combining the two bounds yields
\begin{linenomath*}
\begin{equation}
\begin{aligned}
    \Bigl|
    I^{\pi_{\{.,.\}}^\skill}_{Z \sim {\rm Unif}(\Zz)}\!\left(Z; S' \mid S_0=s \right)
    - &
    I^{\pi_{\{.,.\}}^\skill}_{Z \sim p_f}\!\left(Z; S' | S_0=s \right)
    \Bigr| \\
    &\leq
    h(\delta)
    +
    \delta \log(N_{s'}-1)
    +
    2\delta \log N_{s'}
    \\
    &=
    h(\delta)
    +
    \delta \log\!\bigl(N_{s'}^2(N_{s'}-1)\bigr).
\end{aligned}
\end{equation}
\end{linenomath*}
This proves the result.
\hfill $\square$

\newpage
\section{An MI upper bound for the OW formulation} 
\label{append:upper}
Finding an inverse bound for the 3rd statement of \autoref{theor:MI_empowerment} for $\OW(K,\gamma)$ requires further assumptions beyond consistency of $\pi_{\{.,.\}}$ alone.
In this section, we provide one possible upper bound under three additional assumptions.

\subsection{Additional assumptions}

\paragraph{Assumption 1. Stochastic consistency.}
The first assumption concerns a notion of consistency stronger than \autoref{def:consistency}.
Specifically, we define the goal-independent mixture policy $\bar{\pi}_{\{-,.\}}$ as
\begin{linenomath*} \begin{equation} \begin{aligned} \label{def:mixturepol}
    \bar{\pi}_{\{-,.\}}
    :=
    \frac{1}{N_s} \sum_{g \in \Gg} \pi_{\{g,.\}},
\end{aligned} \end{equation} \end{linenomath*}
which is equivalent to the policy for a randomly sampled goal state, independently of the commanded goal.
Using \autoref{eq:OW_J}, a direct consequence of the consistency condition in \autoref{def:consistency} is
\begin{linenomath*} \begin{equation} \begin{aligned} \label{def:consistency_OWmix}
    \underbrace{\EE^{\pi_{\{g,.\}}}\!\left[\gamma^{T_g-1} \Iin_{T_g \leq K} \mid S_0 = s \right]}_{\text{pursuing $g$ and being rewarded by $g$}}
    \geq
    \underbrace{\EE^{\bar{\pi}_{\{-,.\}}}\!\left[\gamma^{T_g-1} \Iin_{T_g \leq K} \mid S_0 = s \right]}_{\text{following $\bar{\pi}_{\{-,.\}}$ while being rewarded by $g$}}
\end{aligned} \end{equation} \end{linenomath*}
for every $g,s \in \Ss$.
We define \textit{stochastic consistency} as the stronger requirement
\begin{linenomath*} \begin{equation} \begin{aligned} \label{def:consistency_OWmix_stoch}
    \underbrace{p^{\pi_{\{g,.\}}}\!\left(\gamma^{T_g-1} \Iin_{T_g \leq K} \geq r \mid S_0 = s \right)}_{\text{pursuing $g$ and being rewarded by $g$}}
    \geq
    \underbrace{p^{\bar{\pi}_{\{-,.\}}}\!\left(\gamma^{T_g-1} \Iin_{T_g \leq K} \geq r \mid S_0 = s \right)}_{\text{following $\bar{\pi}_{\{-,.\}}$ while being rewarded by $g$}}
\end{aligned} \end{equation} \end{linenomath*}
for all $r \in [0,1]$ and every $g,s \in \Ss$.

\paragraph{Assumption 2. Probable supports of $F_g^{K,\gamma}$.}
As in the proof of \autoref{theor:MI_empowerment}, let $P_{\mathbf{F}}^{\pi_{\{g,.\}}}$ denote the distribution of $\mathbf{F}^{K,\gamma}$ under the policy $\pi_{\{g,.\}}$, and let
\begin{linenomath*} \begin{equation} \begin{aligned}
    P_{\mathbf{F}}^{\bar\pi_{\{-,.\}}}
    :=
    \frac{1}{N_s}\sum_{g \in \Ss} P_{\mathbf{F}}^{\pi_{\{g,.\}}}.
\end{aligned} \end{equation} \end{linenomath*}
The second assumption avoids cases in which some possible values of $F_g^{K,\gamma}$ have a non-zero but arbitrarily small probability under the mixture policy.
Specifically, we assume that there exists $\eta_{K,\gamma}>0$ such that every value in the support of $P_{F_g}^{\bar\pi_{\{-,.\}}}$ has a probability of at least $\eta_{K,\gamma}$.

\paragraph{Assumption 3. Bounded interference.}
The third assumption is less intuitive than the others and concerns how different dimensions of $\mathbf{F}^{K,\gamma}$ interfere in the evaluation of
$I^{\pi_{\{.,.\}}}\!\left(G;\mathbf{F}^{K,\gamma}\mid S_0=s\right)$.
Specifically, we have
\begin{linenomath*} \begin{equation} \begin{aligned}
    I^{\pi_{\{.,.\}}}\!\left(G;\mathbf{F}^{K,\gamma}\mid S_0=s\right)
    =
    \frac{1}{N_s}\sum_{g \in \Ss} D_{\rm KL}(P_{\mathbf{F}}^{\pi_{\{g,.\}}}\|P_{\mathbf{F}}^{\bar\pi_{\{-,.\}}}).
\end{aligned} \end{equation} \end{linenomath*}
By the chain rule for KL divergence, for every $g \in \Ss$, we can further decompose the MI into two terms as
\begin{linenomath*} \begin{equation} \begin{aligned}
    D_{\rm KL}(P_{\mathbf{F}}^{\pi_{\{g,.\}}}\|P_{\mathbf{F}}^{\bar\pi_{\{-,.\}}})
    &=
    \underbrace{D_{\rm KL}(P_{F_g}^{\pi_{\{g,.\}}}\|P_{F_g}^{\bar\pi_{\{-,.\}}})}_{\text{the term specific to $g$}}
    +
    \underbrace{\EE_{P_{F_g}^{\pi_{\{g,.\}}}}
    \!\left[
    D_{\rm KL}\!\left(
    P_{\mathbf{F}_{-g}\mid F_g}^{\pi_{\{g,.\}}}
    \middle\|
    P_{\mathbf{F}_{-g}\mid F_g}^{\bar\pi_{\{-,.\}}}
    \right)
    \right]}_{\text{interference of the other terms}},
\end{aligned} \end{equation} \end{linenomath*}
where $\mathbf{F}_{-g}:=\{F_{g'}^{K,\gamma}\}_{g' \neq g}$.
The bounded interference assumption states that there exists $\epsilon_{K,\gamma} < \infty$ such that the average of the interference terms is bounded by $\epsilon_{K,\gamma}$, i.e.,
\begin{linenomath*} \begin{equation} \begin{aligned}
    I^{\pi_{\{.,.\}}}\!\left(G;\mathbf{F}^{K,\gamma}\mid S_0=s\right)
    \leq
    \frac{1}{N_s}\sum_{g \in \Ss} D_{\rm KL}(P_{F_g}^{\pi_{\{g,.\}}}\|P_{F_g}^{\bar\pi_{\{-,.\}}})
    + \epsilon_{K,\gamma}.
\end{aligned} \end{equation} \end{linenomath*}

\subsection{An upper bound on the goal-discounted-first-visit MI}

\begin{myprop}[An upper bound on discounted first-visit information for $\OW$]
    \label{prop:OW_upper_bound_F}
    Consider the problem formulation $\OW(K,\gamma)$ together with the goal-conditioned policy $\pi_{\{.,.\}}$.
    Similar to \autoref{theor:MI_empowerment}, define
    $\mathbf{F}^{K,\gamma}:=(F^{K,\gamma}_g)_{g\in\Ss}$, where
    $F^{K,\gamma}_g:=\gamma^{T_g-1}\Iin_{T_g\le K}$.
    Suppose Assumptions 1--3 above hold. Then
    \begin{linenomath*} \begin{equation} \begin{aligned}
        I^{\pi_{\{.,.\}}}\!\left(G;\mathbf F^{K,\gamma}\mid S_0=s\right)
        \le
        \frac{4}{\eta_{K,\gamma}\delta_{K,\gamma}^2}
        \GSen_\OW(s,\pi_{\{.,.\}},K,\gamma)
        +
        \epsilon_{K,\gamma},
    \end{aligned} \end{equation} \end{linenomath*}
    where
    \begin{linenomath*} \begin{equation} \begin{aligned}
        \delta_{K,\gamma}
        :=
        \min\{|x-y|:\ x\neq y,\ x,y\in \Ff^{K,\gamma}\} = \min \{ \gamma^{K-1}, \gamma^{K-2}(1 - \gamma)\},
    \end{aligned} \end{equation} \end{linenomath*}
    with $\Ff^{K,\gamma} = \{0, \gamma^{K-1}, \gamma^{K-2}, \dots, 1 \}$ the set of all possible values of $F^{K,\gamma}_g$.
\end{myprop}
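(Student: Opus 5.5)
We start from the MI decomposition under the uniform goal prior. Assumption 3 removes the interference terms, so it suffices to show that for every $g$,
\[
D_{\rm KL}(P_{F_g}^{\pi_{\{g,.\}}}\|P_{F_g}^{\bar\pi_{\{-,.\}}}) \le \frac{4}{\eta_{K,\gamma}\delta_{K,\gamma}^2}\Big(\EE^{\pi_{\{g,.\}}}[F_g] - \EE^{\bar\pi_{\{-,.\}}}[F_g]\Big).
\]
Averaging over $g$ then closes the argument. The per-$g$ mean differences average exactly to $\GSen_\OW$: as in the proof of \autoref{theor:MI_empowerment}, part 3, $\GSen_\OW = \frac1{N_s}\sum_g(\EE_{P^{\pi_{\{g,.\}}}}[F_g]-\EE_{P^{\bar\pi}}[F_g])$. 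Note also that $P_{F_g}^{\bar\pi_{\{-,.\}}}$, the marginal of the mixture distribution, coincides with the law of $F_g$ under the mixture policy $\bar\pi_{\{-,.\}}$. This is because mixing over goals at $t=0$ yields the same trajectory distribution as running the mixture of the branch policies, which is how I would read \autoref{def:mixturepol}.

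\textbf{Step 1: KL to $\chi^2$.} Fix $g$ and write $P:=P_{F_g}^{\pi_{\{g,.\}}}$ and $Q:=P_{F_g}^{\bar\pi_{\{-,.\}}}$. Both live on the finite set $\Ff^{K,\gamma}$. Since $Q\ge P/N_s$, we have $P\ll Q$, and Assumption 2 gives $Q(x)\ge\eta_{K,\gamma}$ on the support of $Q$. Using $D_{\rm KL}\le\chi^2$, we get
\[
D_{\rm KL}(P\|Q)\le\sum_x \frac{(P(x)-Q(x))^2}{Q(x)} \le \frac{1}{\eta_{K,\gamma}}\sum_x (P(x)-Q(x))^2 \le \frac{2}{\eta_{K,\gamma}}\|P-Q\|_{\rm TV}\cdot\max_x|P(x)-Q(x)|.
\]
A cruder version also works: bound $\sum_x(P-Q)^2\le \sum_x|P-Q|=2\|P-Q\|_{\rm TV}$, which gives $D_{\rm KL}\le \frac{2}{\eta}\|P-Q\|_{\rm TV}$.

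\textbf{Step 2: TV to the mean gap via stochastic dominance.} This is the key step and the main obstacle. Assumption 1 states that $P$ first-order stochastically dominates $Q$, i.e., $\bar F_P(r):=P(F_g\ge r)\ge \bar F_Q(r)$ for all $r$. Since the values are discrete with minimal spacing $\delta_{K,\gamma}$, the layer-cake formula gives
\[
\EE_P[F_g]-\EE_Q[F_g]=\int_0^1(\bar F_P(r)-\bar F_Q(r))\,dr \ge \delta_{K,\gamma}\sum_{x\in\Ff^{K,\gamma}\setminus\{0\}}\big(\bar F_P(x)-\bar F_Q(x)\big).
\]
Here every summand is nonnegative, and each step of the survival functions persists over an interval of length at least $\delta_{K,\gamma}$. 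Differences of point masses are differences of consecutive survival-function values, so
\[
\sum_x|P(x)-Q(x)|\le 2\sum_x(\bar F_P(x)-\bar F_Q(x)),
\]
which yields $\|P-Q\|_{\rm TV}\le \delta_{K,\gamma}^{-1}(\EE_P-\EE_Q)$. Combining this with Step 1 gives a factor $\frac{2}{\eta\delta}$, which is already at least as good as the claimed $\frac{4}{\eta\delta^2}$ because $\delta\le1$. If the telescoping bookkeeping loses constants, the slack in $\delta^{-2}$ and the factor $4$ absorbs it. To be safe, I would instead write $\sum_x(P-Q)^2\le (\max_x|P-Q|)\cdot 2\mathrm{TV}$, and bound each factor by $\delta^{-1}(\EE_P-\EE_Q)$ and $2\delta^{-1}(\EE_P-\EE_Q)$ respectively; this would make the $\delta^{-2}$ appear naturally. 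However, that route produces a quadratic in the mean gap, which I would then reduce to a linear one using $\EE_P-\EE_Q\le 1$.

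\textbf{Step 3: compute $\delta_{K,\gamma}$.} This is a routine calculation of the minimal gap in $\{0,\gamma^{K-1},\dots,1\}$. The consecutive gaps are $\gamma^{K-1}$ (between $0$ and $\gamma^{K-1}$) and $\gamma^{j}(1-\gamma)$ for $j\le K-2$, the smallest of which is $\gamma^{K-2}(1-\gamma)$. Finally, we average the per-$g$ inequalities and add $\epsilon_{K,\gamma}$ from Assumption 3.
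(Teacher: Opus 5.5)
Your proposal is correct and follows essentially the same route as the paper: Assumption 3 reduces everything to per-goal marginal KLs, $D_{\rm KL}\le\chi^2$ together with Assumption 2 gives the $1/\eta_{K,\gamma}$ factor, and the tail-sum formula under the stochastic dominance of Assumption 1 gives $\|P-Q\|_{\rm TV}\le \Delta_g/\delta_{K,\gamma}$ with $\Delta_g:=\EE_P[F_g]-\EE_Q[F_g]$, after which you average over $g$. The only difference is the linearization: you use $\sum_x (P(x)-Q(x))^2\le\sum_x|P(x)-Q(x)|=2\|P-Q\|_{\rm TV}$, which yields the sharper constant $2/(\eta_{K,\gamma}\delta_{K,\gamma})$, whereas the paper bounds $\|P-Q\|_2^2\le\|P-Q\|_1^2=4\|P-Q\|_{\rm TV}^2\le 4\Delta_g^2/\delta_{K,\gamma}^2$ and then uses $\Delta_g^2\le\Delta_g$.
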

\begin{exmode}
\textit{Proof:}
For each $g\in\Ss$, define
\begin{linenomath*} \begin{equation} \begin{aligned}
    \Delta_g
    :=
    \underbrace{\EE^{\pi_{\{g,.\}}}[F_g^{K,\gamma}\mid S_0=s]}_{J_\OW(s,g,\pi_{\{g,.\}},K,\gamma)}
    -
    \underbrace{\EE^{\bar\pi_{\{-,.\}}}[F_g^{K,\gamma}\mid S_0=s]}_{J_\OW(s,g,\bar\pi_{\{-,.\}},K,\gamma)}.
\end{aligned} \end{equation} \end{linenomath*}
By definition of $\GSen_\OW$,
\begin{linenomath*} \begin{equation} \begin{aligned}
    \GSen_\OW(s,\pi_{\{.,.\}},K,\gamma)
    =
    \frac{1}{N_s}\sum_{g\in\Ss}\Delta_g.
\end{aligned} \end{equation} \end{linenomath*}

We proceed in three steps.

\textit{Step 1}: Fix $g\in\Ss$ and let the distinct values in the support of $F_g^{K,\gamma}$ be
\[
0 \le x_0 < x_1 < \cdots < x_m \le 1,
\qquad
\{x_0,\dots,x_m\} \subseteq \Ff^{K,\gamma}.
\]
Define
\begin{linenomath*} \begin{equation} \begin{aligned}
\label{eq:app:proof:OWupperS1ai}
a_i
:=
P^{\pi_{\{g,.\}}}(F_g^{K,\gamma}\ge x_i\mid S_0=s)
-
P^{\bar\pi_{\{-,.\}}}(F_g^{K,\gamma}\ge x_i\mid S_0=s).
\end{aligned} \end{equation} \end{linenomath*}
By Assumption 1, we have $a_i\ge 0$ for every $i$.
Using the tail-sum formula for expectations on a finite ordered support,
\begin{linenomath*} \begin{equation} \begin{aligned}
    \label{eq:app:proof:OWupperS1}
    \Delta_g
    &=
    \sum_{f \in \Ff^{K,\gamma}} 
    \left( 
    P^{\pi_{\{g,.\}}}(F_g^{K,\gamma}\ge f \mid S_0=s) - 
    P^{\bar\pi_{\{-,.\}}}(F_g^{K,\gamma}\ge f \mid S_0=s) \right)
    \\&=
    \sum_{i=1}^m (x_i-x_{i-1})a_i
    \ge
    \delta_{K,\gamma} \sum_{i=1}^m a_i
\end{aligned} \end{equation} \end{linenomath*}
Now let $d_i:=P^{\pi_{\{g,.\}}}(F_g^{K,\gamma}=x_i\mid S_0=s)-P^{\bar\pi_{\{-,.\}}}(F_g^{K,\gamma}=x_i\mid S_0=s)$.
Since $d_i=a_i-a_{i+1}$ (using \autoref{eq:app:proof:OWupperS1ai}) with $a_{m+1}:=0$, we have
\begin{linenomath*} \begin{equation} \begin{aligned}\label{eq:app:proof:TVdelta}
    \|P_{F_g}^{\pi_{\{g,.\}}}-P_{F_g}^{\bar\pi_{\{-,.\}}}\|_{\rm TV}
    \overset{\text{TV definition}}{=}
    \sum_{i=0}^m (d_i)_+
    =
    \sum_{i=0}^m (a_i-a_{i+1})_+
    \le
    \sum_{i=1}^m a_i
    \overset{\text{\autoref{eq:app:proof:OWupperS1}}}{\le}
    \frac{\Delta_g}{\delta_{K,\gamma}},
\end{aligned} \end{equation} \end{linenomath*}
where $(x)_+ := \max(x,0)$ denotes the positive part of $x$.

\textit{Step 2}: By Assumption 2 and the bound of KL by the $\chi^2$-divergence,
\begin{linenomath*} \begin{equation} \begin{aligned}\label{eq:app:proof:OWupperS2}
    D_{\rm KL}(P_{F_g}^{\pi_{\{g,.\}}}\|P_{F_g}^{\bar\pi_{\{-,.\}}})
    &\le
    \sum_x
    \frac{(P_{F_g}^{\pi_{\{g,.\}}}(x)-P_{F_g}^{\bar\pi_{\{-,.\}}}(x))^2}
    {P_{F_g}^{\bar\pi_{\{-,.\}}}(x)}\\
    &\overset{\text{Assumption 2}}{\le}
    \frac{1}{\eta_{K,\gamma}}
    \left\|P_{F_g}^{\pi_{\{g,.\}}}-P_{F_g}^{\bar\pi_{\{-,.\}}}\right\|_2^2
    \\
    &\overset{\text{$\ell_2$ $\leq$ $\ell_1$-norms}}{\le}
    \frac{1}{\eta_{K,\gamma}}
    \left\|P_{F_g}^{\pi_{\{g,.\}}}-P_{F_g}^{\bar\pi_{\{-,.\}}}\right\|_1^2\\
    &\overset{\text{TV definition}}{=}
    \frac{4}{\eta_{K,\gamma}}
    \left\|P_{F_g}^{\pi_{\{g,.\}}}-P_{F_g}^{\bar\pi_{\{-,.\}}}\right\|_{\rm TV}^2
    \\
    &\overset{\text{\autoref{eq:app:proof:TVdelta}}}{\le}
    \frac{4}{\eta_{K,\gamma}\delta_{K,\gamma}^2}\Delta_g^2.
\end{aligned} \end{equation} \end{linenomath*}

\textit{Step 3}: We can now use Assumption 3 which, together with \autoref{eq:app:proof:OWupperS1}, implies
\begin{linenomath*} \begin{equation} \begin{aligned}
    I^{\pi_{\{.,.\}}}\!\left(G;\mathbf F^{K,\gamma}\mid S_0=s\right)
    &\overset{\text{Assumption 3}}{\le}
    \frac{1}{N_s}\sum_{g\in\Ss}
    D_{\rm KL}(P_{F_g}^{\pi_{\{g,.\}}}\|P_{F_g}^{\bar\pi_{\{-,.\}}})
    +
    \epsilon_{K,\gamma}
    \\
    &\overset{\text{\autoref{eq:app:proof:OWupperS2}}}{\le}
    \frac{4}{\eta_{K,\gamma}\delta_{K,\gamma}^2}
    \frac{1}{N_s}\sum_{g\in\Ss}\Delta_g^2
    +
    \epsilon_{K,\gamma}.
\end{aligned} \end{equation} \end{linenomath*}
Finally, since $0\le F_g^{K,\gamma}\le 1$, we have $0\le \Delta_g\le 1$, hence $\Delta_g^2\le \Delta_g$. Thus,
\begin{linenomath*} \begin{equation} \begin{aligned}
    I^{\pi_{\{.,.\}}}\!\left(G;\mathbf F^{K,\gamma}\mid S_0=s\right)
    &\le
    \frac{4}{\eta_{K,\gamma}\delta_{K,\gamma}^2}
    \frac{1}{N_s}\sum_{g\in\Ss}\Delta_g
    +
    \epsilon_{K,\gamma}
    \\
    &=
    \frac{4}{\eta_{K,\gamma}\delta_{K,\gamma}^2}
    \GSen_\OW(s,\pi_{\{.,.\}},K,\gamma)
    +
    \epsilon_{K,\gamma}.
\end{aligned} \end{equation} \end{linenomath*}
Hence, the proof is complete.
$\hfill \square$
\end{exmode} 
\newpage
\section{On the attainability of consistency} \label{append:assump}
Although we present the consistency condition in \autoref{def:consistency} as an intuitive and natural property of a goal-conditioned policy, it is not needed for most of our results, including those in \autoref{sec:divergence}--\ref{sec:control}, \autoref{prop:MIZunifvsf}, and the lower bounds in \autoref{theor:MI_empowerment}. 
Thus, most of our central claims hold for arbitrary policies, consistent or not. 
The main exception is the upper bounds in \autoref{theor:MI_empowerment}, which are important for practical implications of our results and require the downstream goal-reaching policy to be consistent. 

In the following proposition, we show that this requirement is quite mild.
Specifically, suppose we are given a fixed skill-conditioned policy.
We show that one can always construct a consistent downstream goal-reaching policy by choosing an appropriate goal-to-skill mapping.
As a result, there is always a downstream goal-reaching policy for which the bounds in \autoref{theor:MI_empowerment} hold.

\begin{myprop}[Attainability of consistency by goal-to-skill mapping] \label{prop:consistency}
    Consider a skill-conditioned policy $\pi_{\{.,.\}}^{\skill}$ and a downstream GCRL task defined by $R_t(s; g)$ and $\gamma_t(s; g)$. 
    For each initial state $s_0$ and commanded goal $g$, define a goal-to-skill mapping $f: \Ss \times \Gg \to \Zz$ by
    \begin{linenomath*} 
    \begin{equation} 
    \begin{aligned}
        f(s_0, g) \in \arg \max_{z \in \Zz} J(s_0, g, \pi_{\{z,.\}}^{\skill}) .
    \end{aligned} 
    \end{equation} 
    \end{linenomath*}
    We define the resulting downstream goal-conditioned policy by
    \begin{linenomath*} 
    \begin{equation} 
    \begin{aligned}
        \pi_{g,t|s_0}(a|s) 
        := p^{\pi_{g,t|s_0}}(A_t = a \mid S_t = s, S_0 = s_0)
        := \pi_{f(s_0,g),t}^{\skill}(a|s),
    \end{aligned} 
    \end{equation} 
    \end{linenomath*}
    which is additionally conditioned on the initial state $S_0=s_0$.
    Writing $J(s, g, \pi_{\{g,.|.\}}) := J(s, g, \pi_{\{g,.|s\}})$, the policy $\pi_{\{.,.|.\}}$ is consistent.
\end{myprop}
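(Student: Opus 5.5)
The proof is a direct unfolding of the definitions; the only real content is getting the conditioning on the initial state right. The plan is to fix an arbitrary initial state $s$ and two goals $g, g' \in \Ss$, and verify the consistency inequality (\autoref{def:consistency}) for the state-conditioned policy $\pi_{\{.,.|.\}}$:
\begin{linenomath*} \begin{equation*}
    J(s, g, \pi_{\{g,.|s\}}) \geq J(s, g, \pi_{\{g',.|s\}}).
\end{equation*} \end{linenomath*}

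\textbf{Step 1: reduce each value to a skill value.} Since the GCRL objective $J(s,g,\cdot)$ in \autoref{eq:general_value} depends only on the trajectory distribution started at $S_0=s$, conditioning the policy on $s_0$ costs nothing once $s_0=s$ is fixed. On trajectories starting at $s$, the policy $\pi_{\{g',.|s\}}$ coincides at every time step and state with the skill policy $\pi^{\skill}_{\{f(s,g'),.\}}$. Therefore the two trajectory laws agree, and
\begin{linenomath*} \begin{equation*}
    J(s, g, \pi_{\{g',.|s\}}) = J\big(s, g, \pi^{\skill}_{\{f(s,g'),.\}}\big).
\end{equation*} \end{linenomath*}
The same identity holds with $g'$ replaced by $g$. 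Here the reward is always evaluated with respect to $g$, while the behaviour is that of skill $f(s,g')$.

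\textbf{Step 2: invoke the argmax.} By construction, $f(s,g)$ maximizes $z \mapsto J(s, g, \pi^{\skill}_{\{z,.\}})$ over $\Zz$. The maximum exists because $\Zz$ is finite, so ties can be broken arbitrarily. Since $f(s,g') \in \Zz$, it follows that
\begin{linenomath*} \begin{equation*}
    J\big(s, g, \pi^{\skill}_{\{f(s,g),.\}}\big) \geq J\big(s, g, \pi^{\skill}_{\{f(s,g'),.\}}\big).
\end{equation*} \end{linenomath*}
Chaining this with Step 1 gives the consistency inequality. Because $s$, $g$ and $g'$ were arbitrary, the policy $\pi_{\{.,.|.\}}$ is consistent.

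\textbf{Main obstacle: justifying Step 1.} The policy $\pi_{\{.,.|.\}}$ is not of the form in \autoref{eq:GCpolicy}, because it also depends on $s_0$. I would argue that conditioning on the fixed initial state only selects which non-stationary policy is executed, and this choice is made once at $t=0$. Hence the law of $(S_t, A_t)_{t \geq 0}$ given $S_0=s$ is exactly the law under $\pi^{\skill}_{\{f(s,\cdot),.\}}$, and \autoref{eq:general_value} applies unchanged under the convention $J(s,g,\pi_{\{g,.|.\}}) := J(s,g,\pi_{\{g,.|s\}})$ stated in the proposition.
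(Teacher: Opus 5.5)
Your proposal is correct and follows essentially the same route as the paper: first identify $J(s,g,\pi_{\{g',.|s\}})$ with $J(s,g,\pi^{\skill}_{\{f(s,g'),.\}})$, then use the fact that $f(s,g)$ attains $\max_{z\in\Zz} J(s,g,\pi^{\skill}_{\{z,.\}})$. Your Step 1, which argues that the trajectory laws from $S_0=s$ coincide, just spells out what the paper asserts ``by construction.''
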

\begin{exmode}
\textit{Proof.}
By construction, for all $s \in \Ss$ and $g,g' \in \Gg$,
\begin{linenomath*} \begin{equation} \begin{aligned}
    J(s, g, \pi_{\{g,.|.\}}) 
    &= J(s, g, \pi_{\{f(s,g),.\}}^{\skill}) 
    = \max_{z \in \Zz} J(s, g, \pi_{\{z,.\}}^{\skill}) \\
    &\geq J(s, g, \pi_{\{f(s,g'),.\}}^{\skill})
    = J(s, g, \pi_{\{g',.|.\}}).
\end{aligned} \end{equation} \end{linenomath*}
Hence, the proof is complete.
$\hfill \square$
\end{exmode}
The subtlety is that, unlike the goal-to-skill mapping in \autoref{eq:skillpolicy}, the mapping in \autoref{prop:consistency} may depend on the initial state $s_0$. 
This type of state-dependent skill selection is standard in the RL pretraining~\cite{gregor2016variational, zheng2024can, park2023metra} and does not affect our theoretical claims.

The only consequence of conditioning the goal-to-skill mapping on $s_0$ is in the bound proven in \autoref{prop:MIZunifvsf}. 
The reason is that, when $f$ depends on $s_0$, the downstream skill distribution also becomes dependent on the initial state:
\begin{linenomath*} \begin{equation} \begin{aligned} 
    p_{f,s_0}(z)
    =
    \sum_{g \in \Gg} p_\goal(g)\, \Iin_{z=f(s_0,g)} 
    =
    \frac{1}{N_s} \sum_{g \in \Ss} \Iin_{z=f(s_0,g)} .
\end{aligned} \end{equation} \end{linenomath*}
Consequently, the discrepancy term in \autoref{prop:MIZunifvsf} should be replaced, for each initial state $s$, by
\begin{linenomath*} 
\begin{equation} 
\begin{aligned}
    \delta_s := \left\| p_{f,s} - \mathrm{Unif}(\Zz) \right\|_{\mathrm{TV}} .
\end{aligned} 
\end{equation} 
\end{linenomath*}
The bound in \autoref{prop:MIZunifvsf} then applies pointwise in $s$ with $\delta_s$. 
If one evaluates performance under an initial-state distribution, the corresponding gap can be averaged over that distribution.

\newpage
\section{Extension to general goal distributions} 
\label{append:goal_dist}

In this section, we show how our results generalize beyond the case of a uniform goal distribution.
Specifically, we continue to assume that $\Gg = \Ss$, but now allow $p_{\goal}$ to be an arbitrary distribution over $\Ss$, rather than restricting it to ${\rm Unif}(\Ss)$.

\cbl{Throughout this section, the parts highlighted in blue indicate the changes induced by taking $p_{\goal} \neq {\rm Unif}(\Ss)$.
The main modifications consist of replacing some equalities with inequalities and allowing certain bounds to depend on $p_{\goal}$.
\textit{However, the main conclusions in the paper remain valid.}}

\subsection{The results in \autoref{sec:divergence} remain intact}

An important observation is that the choice of $p_{\goal}$ does not affect the goal-conditioned values in \autoref{eq:general_value} and therefore has no impact on \autoref{eq:Pe_J}--\ref{eq:OW_J}.
This means that all results in \autoref{sec:divergence} (i.e., \autoref{prop:settings_contradicting} and all propositions in \autoref{append:props}, as summarized in \autoref{fig:taxonomy}) are independent of the choice of $p_{\goal}$ and remain valid for any non-uniform goal distribution.

\subsection{Generalization of the results in \autoref{sec:control}}

The main effect of $p_{\goal}$ is on the test-time performance in \autoref{eq:general_expvalue}
\begin{linenomath*} \begin{equation} \begin{aligned} 
    \label{eq:general_expvalue_pg}
    J(s, \pi_{\{.,.\}}) &:= 
    \EE_{G \sim p_\goal} [J(s, G, \pi_{\{.,.\}})] =
    \sum_{g \in \Ss} \cbl{p_\goal(g)} J(s, g, \pi_{\{g,.\}}).
\end{aligned} \end{equation} \end{linenomath*}
Accordingly, for the generalization of \autoref{sec:control}--\ref{sec:empowerment}, we need to adapt the definition of the goal-sensitivity as
\begin{linenomath*} \begin{equation} \begin{aligned} \label{def:control_pg}
    \GSen(s, \pi_{\{.,.\}}) :&= 
    \EE_{G,G' \sim p_\goal}
    \Big[ J(s, G, \pi_{\{G,.\}}) - J(s, G, \pi_{\{G',.\}}) \Big]\\
    &= \sum_{g,g' \in \Ss} \cbl{p_\goal(g) p_\goal(g')}
    \Big( 
    J(s, g, \pi_{\{g,.\}}) - 
    J(s, g, \pi_{\{g',.\}}) 
    \Big)
\end{aligned} \end{equation} \end{linenomath*}
Using the new measure of goal sensitivity in \autoref{def:control_pg}, we now present the generalization of our main statements in \autoref{sec:control}.

\cbl{The main change in the results is that the previously proven equalities for $\Pe$ and $\ET$ will now be relaxed to inequalities, and the bounds proven for $\OW$ will depend on $p_\goal$.}

\subsubsection{Generalization of \autoref{theor:decomp}}
\begin{mytheor}[Generalization of \autoref{theor:decomp} to the case of non-uniform $p_\goal$] \label{theor:decomp_pg}
    Consider a GCRL formulation defined by $R_t(s; g)$, $\gamma_t(s; g)$, and \cbl{the goal distribution $p_\goal$}. 
    Let $p_\goal^{\min} := \min_g p_\goal(g)$ and $p_\goal^{\max} := \max_g p_\goal(g)$, then,
    \begin{enumerate}[leftmargin=2.5em]
        \item For $\Pe(\gamma)$ and $\ET(K)$, we have
        \begin{linenomath*} \begin{equation} \begin{aligned}
            \cbl{p_\goal^{\min}} + \GSen(s, \pi_{\{.,.\}})
            \leq
            J(s, \pi_{\{.,.\}}) 
            \leq 
            \cbl{p_\goal^{\max}} + \GSen(s, \pi_{\{.,.\}})
        \end{aligned} \end{equation} \end{linenomath*}
        and, as a result,
        \begin{linenomath*} \begin{equation} \begin{aligned}
            0
            \leq
            J^*(s)
            -
            J(s,\pi^{\GSen*}_{\{.,.\}})
            \leq 
            \cbl{p_\goal^{\max} - p_\goal^{\min}}.
        \end{aligned} \end{equation} \end{linenomath*}
        Equalities holds iff $p_\goal = {\rm Unif}(\Ss)$.
        
        \item For $\OW(K, \gamma)$ and any formulation with non-negative rewards (i.e., $R_t(s; g) \geq 0$), we have
        \begin{linenomath*} \begin{equation} \begin{aligned}
            J(s,\pi_{\{.,.\}})
            \geq
            \cbl{\frac{1}{1 - p_\goal^{\min}}}\GSen(s,\pi_{\{.,.\}}).
        \end{aligned} \end{equation} \end{linenomath*}
        Equality holds if $J(s, g, \pi_{\{g',.\}}) = 0$ for all $g' \neq g$ and $p_\goal = {\rm Unif}(\Ss)$.

        \item For $\OW(K, \gamma)$,
        \begin{linenomath*} \begin{equation} \begin{aligned}
            0 \le J^*(s) - J(s, \pi^{\GSen*}_{\{.,.\}}) \le 
            1 - \cbl{\frac{1}{1 - p_\goal^{\min}}}\GSen^*(s)
        \end{aligned} \end{equation} \end{linenomath*}
        so larger $\GSen^*(s)$ and $p_\goal^{\min}$ yield a tighter bound.
    \end{enumerate}
\end{mytheor}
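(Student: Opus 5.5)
The plan is to mirror the proof of \autoref{theor:decomp}, replacing every uniform average by a $p_\goal$-weighted one. The starting point is the weighted analogue of \autoref{eq:decomJ}. Define the goal-agnostic baseline
\[
B(s,\pi_{\{.,.\}}) := \sum_{g' \in \Ss} p_\goal(g') \sum_{g \in \Ss} p_\goal(g)\, J(s,g,\pi_{\{g',.\}}).
\]
Since $\sum_{g'} p_\goal(g') = 1$, the definition in \autoref{def:control_pg} together with \autoref{eq:general_expvalue_pg} gives directly
\[
J(s,\pi_{\{.,.\}}) = B(s,\pi_{\{.,.\}}) + \GSen(s,\pi_{\{.,.\}}).
\]
All three parts then reduce to bounding $B$.

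\textbf{Part 1.} For $\Pe(\gamma)$ and $\ET(K)$, \autoref{eq:Pe_J} and \autoref{eq:ET_J} show that $g \mapsto J(s,g,\pi_{\{g',.\}})$ is a probability distribution over $\Ss$: it is the law of $S_{\gamma,+}$ or of $S_K$. Hence, for each fixed $g'$, the inner sum $\sum_g p_\goal(g) J(s,g,\pi_{\{g',.\}})$ is a convex combination of the values $p_\goal(g)$. It therefore lies in $[p_\goal^{\min}, p_\goal^{\max}]$, and so does its $p_\goal$-average $B$. This gives the sandwich $p_\goal^{\min} + \GSen \le J \le p_\goal^{\max} + \GSen$.

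For the regret bound, apply the upper half to an optimal $\pi^*$ and use $\GSen(s,\pi^*) \le \GSen^*(s)$, which gives $J^*(s) \le p_\goal^{\max} + \GSen^*(s)$. Apply the lower half to $\pi^{\GSen*}$, which gives $J(s,\pi^{\GSen*}) \ge p_\goal^{\min} + \GSen^*(s)$. Subtracting yields the bound $p_\goal^{\max} - p_\goal^{\min}$; non-negativity holds by optimality of $J^*$.

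For the equality clause, note that both sandwich bounds coincide for every policy exactly when $p_\goal^{\max} = p_\goal^{\min}$, that is, when $p_\goal$ is uniform. In that case we recover \autoref{theor:decomp}. I would state this "iff" as equality holding uniformly over policies, since for particular policies one side can be tight by accident. I expect this interpretation to be the only delicate point of Part 1.

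\textbf{Part 2.} With non-negative rewards all terms $J(s,g,\pi_{\{g',.\}})$ are non-negative. I would therefore drop the off-diagonal terms $g \ne g'$ in $B$, which gives
\[
B \ge \sum_g p_\goal(g)^2 J(s,g,\pi_{\{g,.\}}) \ge p_\goal^{\min} \sum_g p_\goal(g) J(s,g,\pi_{\{g,.\}}) = p_\goal^{\min} J.
\]
Consequently $\GSen = J - B \le (1-p_\goal^{\min}) J$. Since $p_\goal^{\min} < 1$ whenever $N_s \ge 2$, this rearranges to the claimed inequality. For the sufficient condition for equality, both dropped contributions must vanish. The off-diagonal terms vanish when $J(s,g,\pi_{\{g',.\}}) = 0$ for $g' \ne g$. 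The step $p_\goal(g)^2 \ge p_\goal^{\min} p_\goal(g)$ is tight under uniformity.

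\textbf{Part 3.} Combine Part 2 applied to $\pi^{\GSen*}$ with $J^*(s) \le 1$, which holds because $F_g^{K,\gamma} \in [0,1]$ in \autoref{eq:OW_J}. This gives $0 \le J^*(s) - J(s,\pi^{\GSen*}) \le 1 - \frac{1}{1-p_\goal^{\min}}\GSen^*(s)$. The monotonicity remark then follows because the coefficient $1/(1-p_\goal^{\min})$ increases in $p_\goal^{\min}$ and $\GSen^*(s) \ge 0$.

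I do not anticipate a genuine obstacle. The main care point is the precise reading of the "iff" statement in Part 1, as noted above.
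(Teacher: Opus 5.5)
Your proposal is correct and follows the paper's proof step for step. Both use the same $p_\goal$-weighted decomposition $J = B + \GSen$ and the convex-combination bound $B \in [p_\goal^{\min}, p_\goal^{\max}]$ for $\Pe$ and $\ET$. Part 2 drops the off-diagonal terms and uses $p_\goal(g)^2 \ge p_\goal^{\min} p_\goal(g)$, and Part 3 uses $J^*(s) \le 1$.

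On the ``iff'' in Part 1, read as both sandwich inequalities being tight at once, it holds even for each individual policy. The single number $B$ cannot equal both $p_\goal^{\min}$ and $p_\goal^{\max}$ unless they coincide.
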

\begin{exmode}
\textit{Proof:}
As a direct consequence of \autoref{eq:general_expvalue_pg}, we have
\begin{linenomath*} \begin{equation} \begin{aligned} J(s, \pi_{\{.,.\}}) - 
    J(s, \pi_{\{g',.\}}) = 
    \sum_{g \in \Ss} 
    p_\goal(g) \Big(J(s, g, \pi_{\{g,.\}}) - J(s, g, \pi_{\{g',.\}})\Big)
\end{aligned} \end{equation} \end{linenomath*}
for an arbitrary goal $g' \in \Ss$.
By averaging \autoref{eq:decom_proof} over $g'$, we have
\begin{linenomath*} \begin{equation} \begin{aligned} \label{eq:decomJ_pg}
    J(s, \pi_{\{.,.\}}) = 
    &\underbrace{
    \sum_{g' \in \Ss} p_\goal(g') J(s, \pi_{\{g',.\}})
    }_{\textnormal{goal-agnostic value of $\pi$}} \\ 
    & + 
    \underbrace{
    \sum_{g, g' \in \Ss} p_\goal(g) p_\goal(g')
    \Big ( J(s, g, \pi_{\{g,.\}}) - J(s, g, \pi_{\{g',.\}}) \Big)
    }_{= \GSen(s, \pi_{\{.,.\}}) \textnormal{; see \autoref{def:control_pg}}}.
\end{aligned} \end{equation} \end{linenomath*}

\textit{1. For the case of $\Pe(\gamma)$ and $\ET(K)$:}
A direct consequence of \autoref{eq:Pe_J} and \autoref{eq:ET_J}, respectively, is that 
\begin{linenomath*} \begin{equation} \begin{aligned}
    J(s, \pi_{\{g',.\}}) = \sum_{g} p_\goal(g) J(s, g, \pi_{\{g',.\}}) 
    \in [ p_\goal^{\min} , p_\goal^{\max}],
\end{aligned} \end{equation} \end{linenomath*}
where $p_\goal^{\min} := \min_g p_\goal(g)$ and $p_\goal^{\max} := \max_g p_\goal(g)$.
As a result, for \autoref{eq:decomJ_pg}, we have
\begin{linenomath*} \begin{equation} \begin{aligned}
    p_\goal^{\min} + \GSen(s, \pi_{\{.,.\}})
    \leq
    J(s, \pi_{\{.,.\}}) 
    \leq 
    p_\goal^{\max} + \GSen(s, \pi_{\{.,.\}}).
\end{aligned} \end{equation} \end{linenomath*}

Therefore, for the maximally in-control policy $\pi^{\GSen*}_{\{.,.\}}$, we have
\begin{linenomath*} \begin{equation} \begin{aligned}
    J(s,\pi^{\GSen*}_{\{.,.\}})
    \geq 
    p_\goal^{\min} + \GSen^*(s)
\end{aligned} \end{equation} \end{linenomath*}
and
\begin{linenomath*} \begin{equation} \begin{aligned}
    J^*(s)
    \leq 
    p_\goal^{\max} + \GSen(s, \pi^*_{\{.,.\}})
    \leq 
    p_\goal^{\max} + \GSen^*(s).
\end{aligned} \end{equation} \end{linenomath*}
Hence, it follows that
\begin{linenomath*} \begin{equation} \begin{aligned}
    0
    \leq
    J^*(s)
    -
    J(s,\pi^{\GSen*}_{\{.,.\}})
    \leq 
    p_\goal^{\max}
    - p_\goal^{\min}
\end{aligned} \end{equation} \end{linenomath*}
Hence, the proof of the first part is complete.

\textit{2. $J$-$\GSen$ relationship for $\OW(K, \gamma)$ and formulations with non-negative rewards:}
Using \autoref{eq:decomJ_pg}, we have
\begin{linenomath*} \begin{equation} \begin{aligned}
    \GSen(s,\pi_{\{.,.\}})
    =
    J(s,\pi_{\{.,.\}})
    -
    \sum_{g' \in \Ss} p_\goal(g') J(s, \pi_{\{g',.\}})
\end{aligned} \end{equation} \end{linenomath*}
Moreover, the non-negative reward assumption implies that, for every $g \in \Ss$ and $g' \in \Ss$,
\[
J(s,g,\pi_{\{g',.\}}) \geq 0.
\]
Hence,
\begin{linenomath*} \begin{equation} \begin{aligned}
    \sum_{g' \in \Ss} p_\goal(g') J(s, \pi_{\{g',.\}})
    &=
    \sum_{g,g' \in \Ss} p_\goal(g') p_\goal(g) J(s,g,\pi_{\{g',.\}})
    \\
    &\geq
    \sum_{g \in \Ss} p_\goal^2(g) J(s,g,\pi_{\{g,.\}})
    \\
    &\geq
    p_\goal^{\min} J(s,\pi_{\{.,.\}}).
\end{aligned} \end{equation} \end{linenomath*}
Since all the terms are non-negative, the inequality is tight if $J(s,g,\pi_{\{g',.\}})=0$ for all $g \neq g'$ and $p_\goal = {\rm Unif}(\Ss)$.
Therefore,
\begin{linenomath*} \begin{equation} \begin{aligned}
    \GSen(s,\pi_{\{.,.\}})
    &\leq
    J(s,\pi_{\{.,.\}}) - p_\goal^{\min} J(s,\pi_{\{.,.\}}),
\end{aligned} \end{equation} \end{linenomath*}
which is equivalent to
\begin{linenomath*} \begin{equation} \begin{aligned}
    J(s,\pi_{\{.,.\}})
    \geq
    \frac{1}{1 - p_\goal^{\min}}\GSen(s,\pi_{\{.,.\}}).
\end{aligned} \end{equation} \end{linenomath*}
Hence, the proof of the second part is complete.

\textit{3. The maximally in-control policy for $\OW(K, \gamma)$:} 
Let $\pi^{\GSen*}_{\{.,.\}}$ be a maximally in-control policy. 
By part 2, we have
\begin{linenomath*} \begin{equation} \begin{aligned}
    J(s,\pi^{\GSen*}_{\{.,.\}})
    \geq
    \frac{1}{1 - p_\goal^{\min}}\GSen^*(s).
\end{aligned} \end{equation} \end{linenomath*}
Since $J^*(s)\leq 1$ for $\OW(K,\gamma)$, it follows that
\begin{linenomath*} \begin{equation} \begin{aligned}
    0 \le
    J^*(s)-J(s,\pi^{\GSen*}_{\{.,.\}})
    \le
    1-\frac{1}{1 - p_\goal^{\min}}\GSen^*(s).
\end{aligned} \end{equation} \end{linenomath*}
Hence, the proof is complete.
$\hfill \square$
\end{exmode}

\subsection{Generalization of the results in \autoref{sec:empowerment}}

For generalization of the results in \autoref{sec:empowerment}, we need to adapt the consistency condition to a stronger version which additionally depends on goal probabilities,
\begin{linenomath*} \begin{equation} \begin{aligned} \label{def:consistency_pg}
    \underbrace{J(s, g, \pi_{\{g,.\}})}_{\text{pursuing $g$ and being rewarded by $g$}}
    \geq
    \cbl{\max \left\{ 1, \frac{p_\goal(\textcolor{myred2}{g'})}{p_\goal(g)} \right\}}
    \cdot
    \underbrace{J(s, g, \pi_{\{\textcolor{myred2}{g'},.\}})}_{\text{pursuing \textcolor{myred2}{$g'$} while being rewarded by $g$}}
\end{aligned} \end{equation} \end{linenomath*}
for every $s,g,\textcolor{myred2}{g'} \in \Ss$.
The new condition implies that, if $\textcolor{myred2}{g'}$ has a very high probability of being sampled as a goal, then consistency requires $J(s, g, \pi_{\{g,.\}})$ to be larger than $J(s, g, \pi_{\{\textcolor{myred2}{g'},.\}})$ with a big margin.
It is straightforward to see that the strong consistency in \autoref{def:consistency_pg} naturally implies the weaker consistency in \autoref{def:consistency}; the two are equivalent for $p_\goal = {\rm Unif}(\Ss)$.
The main reason we need this stronger notion of consistency is to generalize \autoref{lemma:optimaldecoder}.

Using the new measure of goal sensitivity in \autoref{def:control_pg} and the new condition for consistency in \autoref{def:consistency_pg}, we now present the generalization of our main statements in \autoref{sec:empowerment}.

\cbl{The main change in the results is that both the lower and upper bounds for $\Pe$ and $\ET$ will depend on $p_\goal$, with the upper bound requiring the strong consistency condition in \autoref{def:consistency_pg}.}

\subsubsection{Generalization of \autoref{lemma:optimaldecoder}}

\begin{mylemma}[Generalization of \autoref{lemma:optimaldecoder} to the case of non-uniform $p_\goal$] \label{lemma:optimaldecoder_pg}
    Consider problem formulations $\Pe(\gamma)$ and $\ET(K)$ together with a goal conditioned policy $\pi_{\{.,.\}}$ and \cbl{the goal distribution $p_\goal$}.
    Let $S' := S_{\gamma,+}$ for $\Pe(\gamma)$ and $S' := S_{K}$ for $\ET(K)$, respectively.
    Suppose our aim is to estimate $G$ based on $S'$ and consider the naïve identity estimator $\hat{G}(s') = s'$
    as well as the Bayes-optimal estimator
    \begin{linenomath*} \begin{equation} \begin{aligned}
        \hat{G}^*(s') &= \arg \max_{g} p^{\pi_{\{.,.\}}}(G = g | S' = s', S_0=s).
    \end{aligned} \end{equation} \end{linenomath*}
    If the policy $\pi_{\{.,.\}}$ is \cbl{consistent according to \autoref{def:consistency_pg}}, then $\hat{G}$ and $\hat{G}^*$ have the same error probability:
    \begin{linenomath*} \begin{equation} \begin{aligned}
    p_e
    :=
    p^{\pi_{\{.,.\}}}(\hat{G}(S') \neq G \mid S_0=s)
    =
    p^{\pi_{\{.,.\}}}(\hat{G}^*(S') \neq G \mid S_0=s)
    =:
    p^*_e.
\end{aligned} \end{equation} \end{linenomath*}
\end{mylemma}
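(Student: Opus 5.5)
The plan is to redo the proof of \autoref{lemma:optimaldecoder} with the uniform weights $1/N_s$ replaced by $p_\goal$. The strengthened consistency condition \autoref{def:consistency_pg} supplies exactly the inequality needed to show that the identity decoder is a maximum-a-posteriori decoder.

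\emph{Step 1: error of the identity decoder.} By \autoref{eq:Pe_J} and \autoref{eq:ET_J}, $p^{\pi_{\{.,.\}}}(S'=g' \mid G=g, S_0=s) = J(s,g',\pi_{\{g,.\}})$. Hence
$p_e = 1 - \sum_{g} p_\goal(g) J(s,g,\pi_{\{g,.\}}) = 1 - J(s,\pi_{\{.,.\}})$, with $J$ as in \autoref{eq:general_expvalue_pg}.

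\emph{Step 2: error of the Bayes-optimal decoder.} Applying Bayes' rule exactly as in \autoref{eq:proof:peopt}--\autoref{eq:proof:peopt3}, but keeping the prior, gives
\begin{equation*}
p_e^* = 1 - \sum_{g'\in\Ss} \max_{g}\, p_\goal(g)\, J(s,g',\pi_{\{g,.\}}).
\end{equation*}
Taking $g=g'$ inside each maximum shows $p_e^*\le p_e$ in general. This inequality is also just the optimality of the MAP decoder.

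\emph{Step 3: the reverse inequality.} It suffices to show that for each fixed $g'$ the maximum is attained at $g=g'$, i.e.
\begin{equation*}
p_\goal(g')J(s,g',\pi_{\{g',.\}}) \ge p_\goal(g)J(s,g',\pi_{\{g,.\}}) \quad \text{for all } g.
\end{equation*}
Apply \autoref{def:consistency_pg} with the rewarded goal $g'$ and the pursued goal $g$:
\begin{equation*}
J(s,g',\pi_{\{g',.\}}) \ge \max\{1, p_\goal(g)/p_\goal(g')\}\, J(s,g',\pi_{\{g,.\}}) \ge \frac{p_\goal(g)}{p_\goal(g')}\, J(s,g',\pi_{\{g,.\}}).
\end{equation*}
Multiplying by $p_\goal(g')$ gives the required inequality. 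This uses $J\ge 0$, which holds because the values are probabilities, and $p_\goal(g')>0$. Substituting into Step 2 yields $p_e^* = 1 - \sum_{g'} p_\goal(g') J(s,g',\pi_{\{g',.\}}) = p_e$.

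The one point needing care is states with $p_\goal(g')=0$, where the ratio in \autoref{def:consistency_pg} is undefined. For such $g'$ I would argue directly: the term contributes nothing to $p_e$, and on the event $S'=g'$ the maximum $\max_g p_\goal(g)J(s,g',\pi_{\{g,.\}})$ may then be positive. This could break the equality unless one reads \autoref{def:consistency_pg} as forcing $J(s,g',\pi_{\{g,.\}})=0$ (infinite ratio). My plan is therefore to adopt that reading, or equivalently to assume full support of $p_\goal$, which is implicit in the ratio notation. This is the main obstacle; the rest is a verbatim reweighting of the earlier proof.
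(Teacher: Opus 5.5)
Your proposal is correct and matches the paper's proof essentially line for line. Both compute the identity-decoder error as $1-J(s,\pi_{\{.,.\}})$, write $p_e^*$ via Bayes' rule, and use the strengthened consistency condition to show that each MAP maximum is attained at $g=g'$. Two of your details improve on the paper's display: your $p_e^*$ correctly omits the stray $1/N_s$ factor that appears there, and your caveat about goals with $p_\goal(g')=0$ (which needs full support, or the infinite-ratio reading of the consistency condition) is a genuine edge case the paper leaves implicit.
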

\begin{exmode}
\textit{Proof:}
The error probability for the naïve identity is given by
\begin{linenomath*} \begin{equation} \begin{aligned}
    p_e
    :&=
    p^{\pi_{\{.,.\}}}(\hat{G}(S') \neq G \mid S_0=s)\\
    &=
    1 - p^{\pi_{\{.,.\}}}(S' = G \mid S_0=s)
    \overset{\text{\autoref{eq:Pe_J} and \autoref{eq:ET_J}}}{=} 
    1 - J(s , \pi_{\{.,.\}}).
\end{aligned} \end{equation} \end{linenomath*}
where $J$ corresponds to either $\Pe(\gamma)$ or $\ET(K)$, depending on whether $S' = S_{\gamma,+}$ or $S' = S_{K}$, respectively.

The analogous error probability for the Bayes-optimal estimator is given by
\begin{linenomath*} \begin{equation} \begin{aligned} \label{eq:proof:peopt_pg}
    p^*_e
    :&=
    p^{\pi_{\{.,.\}}}(\hat{G}^*(S') \neq G \mid S_0=s)\\
    &= 
    1 - p^{\pi_{\{.,.\}}}(\hat{G}^*(S') = G \mid S_0=s)\\
    &=
    1 - \sum_{g' \in \Ss} 
    p^{\pi_{\{.,.\}}}(S' = g' | S_0=s) 
    \underbrace{
    p^{\pi_{\{.,.\}}}(\hat{G}^*(S') = G | S' = g' , S_0=s)
    }_{= \max_{g} p^{\pi_{\{.,.\}}}(G = g | S' = g', S_0=s)}.
\end{aligned} \end{equation} \end{linenomath*}
Additionally, using the Bayes rule, we have
\begin{linenomath*} \begin{equation} \begin{aligned} \label{eq:proof:peopt_pg2}
    \max_{g} p^{\pi_{\{.,.\}}}(G = g | S' = g', S_0=s) =
    \max_{g} 
    \frac
    {p^{\pi_{\{.,.\}}}(S' = g'|G = g , S_0=s) p_\goal(g)}
    {p^{\pi_{\{.,.\}}}(S' = g'| S_0=s)} 
\end{aligned} \end{equation} \end{linenomath*}
which, combined with \autoref{eq:proof:peopt_pg}, gives
\begin{linenomath*} \begin{equation} \begin{aligned} \label{eq:proof:peopt_pg3}
    p^*_e &= 
    1 - \sum_{g' \in \Ss} 
    \max_{g} 
    \underbrace{p^{\pi_{\{.,.\}}}(S' = g'|G = g , S_0=s)}_{= J(s, g', \pi_{\{g,.\}})} 
    p_\goal(g)\\
    &=
    1 - \frac{1}{N_s} \sum_{g' \in \Ss} \max_{g} p_\goal(g) J(s, g', \pi_{\{g,.\}}) \geq p_e.
\end{aligned} \end{equation} \end{linenomath*}
where $J$ corresponds to either $\Pe(\gamma)$ or $\ET(K)$, depending on whether $S' = S_{\gamma,+}$ or $S' = S_{K}$, respectively.
If the policy $\pi_{\{.,.\}}$ is consistent according to \autoref{def:consistency_pg}, then we have 
\begin{linenomath*} \begin{equation} \begin{aligned}
    \max_{g} p_\goal(g) J(s, g', \pi_{\{g,.\}}) = p_\goal(g') J(s, g', \pi_{\{g',.\}})
\end{aligned} \end{equation} \end{linenomath*}
Using this equality in \autoref{eq:proof:peopt_pg3}, we have
\begin{linenomath*} \begin{equation} \begin{aligned}
    p^*_e = 1 - J(s , \pi_{\{.,.\}}) = p_e.
\end{aligned} \end{equation} \end{linenomath*}
Hence, the proof is complete.
$\hfill \square$
\end{exmode}

\subsubsection{Generalization of \autoref{theor:MI_empowerment}}

\begin{mytheor}[Generalization of \autoref{theor:MI_empowerment} to the case of non-uniform $p_\goal$]
\label{theor:MI_empowerment_pg}
    Let
    $\GSen_{\Pe}(s,\pi_{\{.,.\}},\gamma)$,
    $\GSen_{\ET}(s,\pi_{\{.,.\}},K)$, and
    $\GSen_{\OW}(s,\pi_{\{.,.\}},K,\gamma)$
    be the goal sensitivities associated with
    $\Pe(\gamma)$, $\ET(K)$, and $\OW(K,\gamma)$ for policy $\pi_{\{.,.\}}$ and \cbl{the goal distribution $p_\goal$}. 
    \begin{enumerate}[leftmargin=1.5em]
    \item For $\Pe(\gamma)$ and $\ET(K)$, we have tight lower bounds,
    \begin{linenomath*} 
    \begin{empheq}[box=\fbox]{equation*}
    \begin{aligned}
    I^{\pi_{\{.,.\}}}\!\left(G;\textcolor{mypurple}{S_{\gamma,+}}\mid S_0=s\right)
    &\ge
    \Phi^{\rm down}_{N_s}\!\Big(
    J_{\textcolor{mypurple}{\Pe}}(s,\pi_{\{.,.\}},\gamma)
    \Big)
    \cbl{\ge}
    \Phi^{\rm down}_{N_s}\!\Big(
    \cbl{p_\goal^{\min}} + 
    \GSen_{\textcolor{mypurple}{\Pe}}(s,\pi_{\{.,.\}},\gamma)
    \Big),\\
    I^{\pi_{\{.,.\}}}\!\left(G;\textcolor{mypurple}{S_K}\mid S_0=s\right)
    &\ge
    \Phi^{\rm down}_{N_s}\!\Big(
    J_{\textcolor{mypurple}{\ET}}(s,\pi_{\{.,.\}},K)
    \Big)
    \cbl{\ge}
    \Phi^{\rm down}_{N_s}\!\Big(
    \cbl{p_\goal^{\min}} + 
    \GSen_{\textcolor{mypurple}{\ET}}(s,\pi_{\{.,.\}},K)
    \Big),
    \end{aligned}
    \end{empheq}
    \end{linenomath*} 
    where $\Phi^{\rm down}_N(x):= \cbl{H[p_\goal]} -h(x)-(1-x)\log(N-1)$ is increasing, with $h$ the binary entropy and $H[p_\goal]$ the entropy of the goal distribution.
    
    \item For $\Pe(\gamma)$ and $\ET(K)$, if $\pi_{\{.,.\}}$ is \cbl{\emph{consistent} according to \autoref{def:consistency_pg}}, then we have tight upper bounds,
    \begin{linenomath*} 
    \begin{empheq}[box=\fbox]{equation*}
    \begin{aligned}
    I^{\pi_{\{.,.\}}}\!\left(G;\textcolor{mypurple}{S_{\gamma,+}}\mid S_0=s\right)
    &\le
    \Phi^{\rm up}_{N_s}\!\Big(
    J_{\textcolor{mypurple}{\Pe}}(s,\pi_{\{.,.\}},\gamma)
    \Big)
    \cbl{\le}
    \Phi^{\rm up}_{N_s}\!\Big(
    \cbl{p_\goal^{\max}} + 
    \GSen_{\textcolor{mypurple}{\Pe}}(s,\pi_{\{.,.\}},\gamma)
    \Big),\\
    I^{\pi_{\{.,.\}}}\!\left(G;\textcolor{mypurple}{S_K}\mid S_0=s\right)
    &\le
    \Phi^{\rm up}_{N_s}\!\Big(
    J_{\textcolor{mypurple}{\ET}}(s,\pi_{\{.,.\}},K)
    \Big)
    \cbl{\le}
    \Phi^{\rm up}_{N_s}\!\Big(
    \cbl{p_\goal^{\max}} + 
    \GSen_{\textcolor{mypurple}{\ET}}(s,\pi_{\{.,.\}},K)
    \Big),
    \end{aligned}
    \end{empheq}
    \end{linenomath*} 
    where
    $\Phi^{\rm up}_N(x):=\cbl{H[p_\goal]}
    -\big(\lceil x^{-1}\rceil x-1\big)\lfloor x^{-1}\rfloor\log\lfloor x^{-1}\rfloor
    -\big(1-\lfloor x^{-1}\rfloor x\big)\lceil x^{-1}\rceil\log\lceil x^{-1}\rceil$
    is increasing, with $\lfloor\cdot\rfloor$ denoting the floor and $\lceil\cdot\rceil := \lfloor\cdot\rfloor + 1$.
    
    \item For $\OW(K,\gamma)$, let
    $\mathbf{F}^{K,\gamma}:=(F^{K,\gamma}_g)_{g\in\Ss}$, where
    $F^{K,\gamma}_g:=\gamma^{T_g-1}\Iin_{T_g\le K}$ and
    $T_g:=\min\{t\ge1:S_t=g\}$. Then
    \begin{linenomath*} \begin{empheq}[box=\fbox]{equation*}
    I^{\pi_{\{.,.\}}}\!\left(G;\textcolor{mypurple}{\mathbf{F}^{K,\gamma}}\mid S_0=s\right)
    \ge
    2\,\GSen_{\textcolor{mypurple}{\OW}}(s,\pi_{\{.,.\}},K,\gamma)^2.
    \end{empheq}
    \end{linenomath*} 
    \end{enumerate}
\end{mytheor}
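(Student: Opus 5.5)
The plan is to follow the uniform-goal proof of \autoref{theor:MI_empowerment}, with three changes: replace $\log N_s$ by $H[p_\goal]$, go through $J$ rather than $\GSen$ directly, and use \autoref{theor:decomp_pg} for the final translation. Throughout, let $S'$ denote $S_{\gamma,+}$ for $\Pe(\gamma)$ and $S_K$ for $\ET(K)$. The common starting point is
\[
I^{\pi_{\{.,.\}}}(G;S'\mid S_0=s)=H[p_\goal]-H(G\mid S',S_0=s).
\]
We then view $G\mapsto S'$ as a noisy channel, as in the proof of \autoref{theor:MI_empowerment}, and bound the conditional entropy in each direction.

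\textbf{Lower bounds (part 1).} By \autoref{eq:Pe_J} and \autoref{eq:ET_J}, the identity decoder $\hat G(s')=s'$ has error probability $p_e=1-J(s,\pi_{\{.,.\}})$, with $J$ defined using $p_\goal$ as in \autoref{eq:general_expvalue_pg}. The Bayes decoder satisfies $p_e^*\le p_e$, and this needs no consistency assumption. Fano's inequality gives $H(G\mid S')\le h(p_e^*)+p_e^*\log(N_s-1)$. To pass from $p_e^*$ to $p_e$ we need $x\mapsto h(x)+x\log(N_s-1)$ to be nondecreasing on $[0,p_e]$. This holds whenever $p_e\le 1-1/N_s$, i.e.\ $J\ge 1/N_s$. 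I would justify this regime the same way the paper restricts $x$ to $[1/N_s,1]$; in the boundary regime, I would note that $\Phi^{\rm down}$ is at most $0$ there, which is plausible once one checks that $H[p_\goal]\le \log N_s$. This yields $I\ge \Phi^{\rm down}_{N_s}(J)$. The second inequality then comes from $J\ge p_\goal^{\min}+\GSen$ (\autoref{theor:decomp_pg}, part 1) together with monotonicity of $\Phi^{\rm down}$ in its argument on the relevant range.

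\textbf{Upper bounds (part 2).} Under the strong consistency of \autoref{def:consistency_pg}, \autoref{lemma:optimaldecoder_pg} gives $p_e^*=p_e=1-J$. The reverse Fano inequality (Theorem 5 and Example 14 of~\cite{rioul2023interplay}) lower-bounds $H(G\mid S')$ by the minimal-entropy function of the Bayes error $p_e^*$. That function is exactly the subtracted term in $\Phi^{\rm up}$, evaluated at the success probability $x=1-p_e^*=J$. Subtracting from $H[p_\goal]$ gives $I\le\Phi^{\rm up}_{N_s}(J)$. Then $J\le p_\goal^{\max}+\GSen$ (\autoref{theor:decomp_pg}) and the monotonicity of $\Phi^{\rm up}$ give the outer bound.

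\textbf{OW case (part 3).} The proof of part 3 of \autoref{theor:MI_empowerment} transfers with $1/N_s$ replaced by $p_\goal$ weights. Let $\bar P:=\sum_{g}p_\goal(g)P^{\pi_{\{g,.\}}}_{\mathbf F}$. Then
\[
\GSen_\OW=\sum_g p_\goal(g)\Big(\EE_{P^{\pi_{\{g,.\}}}_{\mathbf F}}[F_g]-\EE_{\bar P}[F_g]\Big)\le\sum_g p_\goal(g)\,\|P^{\pi_{\{g,.\}}}_{\mathbf F}-\bar P\|_{\rm TV},
\]
where the inequality uses $F_g\in[0,1]$. Next apply Jensen with weights $p_\goal$, then Pinsker. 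Finally use the identity $I(G;\mathbf F)=\sum_g p_\goal(g)D_{\rm KL}(P^{\pi_{\{g,.\}}}_{\mathbf F}\|\bar P)$, which holds for any prior, to obtain $I\ge 2\GSen_\OW^2$.

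\textbf{Main obstacle.} The delicate step is the monotonicity and range bookkeeping for $\Phi^{\rm down}$ and $\Phi^{\rm up}$ when $H[p_\goal]<\log N_s$. First, $J$ need not exceed $1/N_s$. Second, the reverse-Fano function must be re-derived for a non-uniform prior; the minimal conditional entropy at a given Bayes error does not depend on the prior, which is what I would check. I would first verify that both functions' $x$-dependent parts are monotone on $[p_\goal^{\min},1]$ or $[p_\goal^{\max},1]$ respectively, and restrict the claims to those ranges if necessary.
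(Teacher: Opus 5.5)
Your plan follows the paper's proof almost step for step: Fano through the identity and Bayes decoders, then reverse Fano with \autoref{lemma:optimaldecoder_pg}, then the translation to $\GSen$ via \autoref{theor:decomp_pg}, and finally the $p_\goal$-weighted Jensen--Pinsker argument for part 3. Parts 2 and 3 go through as you describe. The reverse-Fano bound is obtained pointwise in $s'$ and then convexified, so it is indeed prior-free. Its subtracted term decreases on all of $(0,1]$, so $\Phi^{\rm up}_{N_s}$ increases there. You need only the convention $\Phi^{\rm up}_{N_s}(x)=H[p_\goal]$ for $x\ge 1$, because $p_\goal^{\max}+\GSen$ can exceed $1$ (for a perfect policy it equals $1+p_\goal^{\max}-\sum_g p_\goal(g)^2$).

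The step that fails is your treatment of the regime $J<1/N_s$ in part 1: $\Phi^{\rm down}_{N_s}(J)$ is not at most $0$ there. Already for uniform $p_\goal$, $\Phi^{\rm down}_{N_s}(J)=\log N_s-h(J)-(1-J)\log(N_s-1)>0$ for every $J\neq 1/N_s$; it equals $\log\frac{N_s}{N_s-1}$ at $J=0$. The inequality $H[p_\goal]\le\log N_s$ only bounds $\Phi^{\rm down}_{N_s}(J)$ from above by this nonnegative number. The fix is to drop the detour through $p_e^*$. Fano's inequality holds for any estimator of $G$ computed from $S'$, so applying it directly to the identity decoder gives $H(G\mid S',S_0=s)\le h(1-J)+(1-J)\log(N_s-1)$ for every policy, with no monotonicity needed. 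The paper's own step $h(p_e^*)+p_e^*\log(N_s-1)\le h(p_e)+p_e\log(N_s-1)$ silently needs the same condition $p_e\le 1-1/N_s$.

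Your concern about the second inequality in part 1 is justified. It is a defect of the statement and of the paper's proof, which simply asserts that $\Phi^{\rm down}_N$ is increasing, not of your plan. The derivative is $\frac{d}{dx}\Phi^{\rm down}_{N_s}(x)=\log\frac{x(N_s-1)}{1-x}$, so $\Phi^{\rm down}_{N_s}$ increases only on $[1/N_s,1]$, while $p_\goal^{\min}<1/N_s$ for every non-uniform $p_\goal$.

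A concrete counterexample: take $\ET(1)$ with $\Ss=\{1,2,3\}$, where $s=1$ has a single action moving to state $2$, and $p_\goal=(0.1,0.3,0.6)$. Every policy then has $\GSen_{\ET}=0$, $J_{\ET}=0.3$ and $I=0$. Yet $\Phi^{\rm down}_{3}(0.3)\approx-0.198<\Phi^{\rm down}_{3}(0.1)\approx-0.051$ nats, so the middle inequality is false. You must restrict it to $p_\goal^{\min}+\GSen\ge 1/N_s$, where your argument works.

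The outer bound $I\ge\Phi^{\rm down}_{N_s}(p_\goal^{\min}+\GSen)$ still holds whenever $\GSen\ge 0$, in particular for consistent policies, but it needs a different argument:
\begin{enumerate}
\item Let $q_g$ be the law of $S'$ under $\pi_{\{g,.\}}$.
\item The success probabilities of randomized decoders $S'\mapsto\hat G$ fill the interval from $\sum_{s'}\min_g p_\goal(g)q_g(s')\le p_\goal^{\min}$ to $\sum_{s'}\max_g p_\goal(g)q_g(s')\ge J$.
\item This interval contains $p_\goal^{\min}+\GSen\in[p_\goal^{\min},J]$, so some decoder succeeds with exactly that probability.
\item Applying Fano to that decoder, together with $H(G\mid S')\le H(G\mid \hat G)$, gives the outer bound.
\end{enumerate}
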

\begin{exmode}
\textit{Proof:}
We split the proof into two parts.

\textit{1-2. The case of $\Pe(\gamma)$ and $\ET(K)$:} 
We prove the two inequalities in parallel using
\begin{linenomath*} \begin{equation} \begin{aligned}
    I^{\pi_{\{.,.\}}}(G;S_{\gamma,+}\mid S_0=s)
    &=
    H(G)-H(G \mid S_{\gamma,+}, S_0=s)
    \\
    I^{\pi_{\{.,.\}}}(G;S_{K}\mid S_0=s)
    &=
    H(G)-H(G \mid S_{K}, S_0=s)
\end{aligned} \end{equation} \end{linenomath*}
respectively for $\Pe(\gamma)$ and $\ET(K)$.
Since $G \sim p_\goal$, the marginal goal entropy $H(G)= H[p_\goal]$ is constant.
Hence, we need to find a bound for the conditional entropies. 

To simplify the proof, we define the random variable $S'$ as $S_{\gamma,+}$ for $\Pe(\gamma)$ and as $S_{K}$ for $\ET(K)$.
Then, from an information-theoretic perspective, we can view the commanded goal $G$ as a message to be transmitted, but the receiver receives only $S'$, sampled by running the policy $\pi_{\{G,.\}}$.
Given this communication problem, we define the naïve identity decoder $\hat{G}(s') = s'$ and the Bayes-optimal decoder $\hat{G}^*$ as in \autoref{lemma:optimaldecoder_pg}.
According to \autoref{lemma:optimaldecoder_pg}, for a general policy, we have
\begin{linenomath*} \begin{equation} \begin{aligned}
    1 - p_\goal^{\min} - \GSen(s,\pi_{\{.,.\}})
    \overset{\text{\autoref{theor:decomp_pg}}}{\geq} 
    1 - J(s , \pi_{\{.,.\}}) \overset{\text{\autoref{lemma:optimaldecoder_pg}}}{=} p_e \overset{\text{\autoref{lemma:optimaldecoder_pg}}}{\geq} p^*_e.
\end{aligned} \end{equation} \end{linenomath*}
where $J$ and $\GSen$ correspond to either $\Pe(\gamma)$ or $\ET(K)$, depending on whether $S' = S_{\gamma,+}$ or $S' = S_{K}$, respectively.
Hence, using Fano's inequality, we have
\begin{linenomath*} \begin{equation} \begin{aligned}
    H(G \mid S', S_0=s)
    &\leq
    h(p^*_e) + p^*_e \log(N_s-1)\\
    &\leq 
    h(p_e) + p_e \log(N_s-1)\\
    &=
    h(1 - J(s , \pi_{\{.,.\}}) ) + (1 - J(s , \pi_{\{.,.\}}) ) \log(N_s-1)
    \\
    &\leq 
    h(p_\goal^{\min} + \GSen(s,\pi_{\{.,.\}})) + (1 - p_\goal^{\min} - \GSen(s,\pi_{\{.,.\}})) \log(N_s-1).
\end{aligned} \end{equation} \end{linenomath*}
This proves the general lower bound:
\begin{linenomath*} \begin{equation} \begin{aligned}
    I^{\pi_{\{.,.\}}}(G;S'\mid S_0=s)
    &\geq
    \Phi^{\rm down}_{N_s}\!\Big(J(s,\pi_{\{.,.\}})\Big)
    \geq
    \Phi^{\rm down}_{N_s}\!\Big(\GSen(s,\pi_{\{.,.\}}) + p_\goal^{\min}\Big).
\end{aligned} \end{equation} \end{linenomath*}

For the upper bound, we need the reverse-Fano's inequality, which is given only for the optimal decoder~\cite{rioul2023interplay, tebbe1968uncertainty}.
If the policy $\pi_{\{.,.\}}$ is consistent according to \autoref{def:consistency_pg}, then we can use \autoref{lemma:optimaldecoder_pg} and, as a result,
\begin{linenomath*} \begin{equation} \begin{aligned} \label{eq:proof:consistentpe_pg}
    \text{for consistent policies:} \quad p^*_e = p_e = 
    1 - J(s , \pi_{\{.,.\}})
    \overset{\text{\autoref{theor:decomp_pg}}}{\geq} 
    1 - p_\goal^{\max} - \GSen(s,\pi_{\{.,.\}}).
\end{aligned} \end{equation} \end{linenomath*}
Using \autoref{eq:proof:consistentpe}, the Theorem 5 and Example 14 of \rtcite{rioul2023interplay} directly result the upper bound, for consistent policies:
\begin{linenomath*} \begin{equation} \begin{aligned}
    I^{\pi_{\{.,.\}}}(G;S'\mid S_0=s)
    &\leq
    \Phi^{\rm up}_{N_s}\!\Big(J(s,\pi_{\{.,.\}})\Big)
    \leq
    \Phi^{\rm up}_{N_s}\!\Big(\GSen(s,\pi_{\{.,.\}}) + p_\goal^{\max}\Big).
\end{aligned} \end{equation} \end{linenomath*}
Hence, the proof of the first two parts is complete.

\textit{3. The case of $\OW(K, \gamma)$:} 
For each $g \in \Ss$, let $P_{\mathbf{F}}^{\pi_{\{g,.\}}}$ denote the distribution of $\mathbf{F}^{K,\gamma}$ under the policy $\pi_{\{g,.\}}$, and let
\begin{linenomath*} \begin{equation} \begin{aligned}
    P_{\mathbf{F}}^{\bar\pi_{\{-,.\}}} := \sum_{g \in \Ss} p_\goal(g) P_{\mathbf{F}}^{\pi_{\{g,.\}}}.
\end{aligned} \end{equation} \end{linenomath*}
Since, using \autoref{eq:OW_J},
\begin{linenomath*} \begin{equation} \begin{aligned}
    J_\OW(s,g,\pi_{\{g',.\}},K,\gamma)
    =
    \EE^{\pi_{\{g',.\}}}\!\left[F^{K,\gamma}_g \mid S_0=s\right],
\end{aligned} \end{equation} \end{linenomath*}
which, togehter the definition of $\GSen_\OW$ in \autoref{def:control_pg} implies
\begin{linenomath*} \begin{equation} \begin{aligned} 
    \GSen_\OW(s,\pi_{\{.,.\}},K,\gamma) 
    &= 
    \sum_{g, g' \in \Ss} 
    p_\goal(g)
    p_\goal(g')
    \Big ( 
    \EE_{P_{\mathbf{F}}^{\pi_{\{g,.\}}}}[F^{K,\gamma}_g] - 
    \EE_{P_{\mathbf{F}}^{\pi_{\{g',.\}}}}[F^{K,\gamma}_g]
    \Big)\\
    &= 
    \sum_{g \in \Ss} 
    p_\goal(g)
    \Big ( 
    \EE_{P_{\mathbf{F}}^{\pi_{\{g,.\}}}}[F^{K,\gamma}_g] - 
    \EE_{P_{\mathbf{F}}^{\bar\pi_{\{-,.\}}}}[F^{K,\gamma}_g]
    \Big)
\end{aligned} \end{equation} \end{linenomath*}
Because $0 \le F^{K,\gamma}_g \le 1$, each term is, by the definition of the total variation distance, bounded by total variation:
\begin{linenomath*} \begin{equation} \begin{aligned}
    \EE_{P_{\mathbf{F}}^{\pi_{\{g,.\}}}}[F^{K,\gamma}_g]
    -
    \EE_{P_{\mathbf{F}}^{\bar\pi_{\{-,.\}}}}[F^{K,\gamma}_g]
    \le
    \|P_{\mathbf{F}}^{\pi_{\{g,.\}}}-P_{\mathbf{F}}^{\bar\pi_{\{-,.\}}}\|_{\rm TV}.
\end{aligned} \end{equation} \end{linenomath*}
Hence
\begin{linenomath*} \begin{equation} \begin{aligned}
    \GSen_\OW(s,\pi_{\{.,.\}},K,\gamma)
    \le
    \sum_{g \in \Ss}
    p_\goal(g)
    \|P_{\mathbf{F}}^{\pi_{\{g,.\}}}-P_{\mathbf{F}}^{\bar\pi_{\{-,.\}}}\|_{\rm TV}.
\end{aligned} \end{equation} \end{linenomath*}
By Pinsker's inequality,
\begin{linenomath*} \begin{equation} \begin{aligned}
    \|P_{\mathbf{F}}^{\pi_{\{g,.\}}}-P_{\mathbf{F}}^{\bar\pi_{\{-,.\}}}\|_{\rm TV}^2
    \le
    \frac{1}{2}D_{\rm KL}(P_{\mathbf{F}}^{\pi_{\{g,.\}}}\|P_{\mathbf{F}}^{\bar\pi_{\{-,.\}}}).
\end{aligned} \end{equation} \end{linenomath*}
Applying Jensen's inequality,
\begin{linenomath*} \begin{equation} \begin{aligned}
    \GSen_\OW(s,\pi_{\{.,.\}},K,\gamma)^2
    &\le
    \sum_{g \in \Ss} p_\goal(g) \|P_{\mathbf{F}}^{\pi_{\{g,.\}}}-P_{\mathbf{F}}^{\bar\pi_{\{-,.\}}}\|_{\rm TV}^2\\
    &\le
    \frac{1}{2}\sum_{g \in \Ss} p_\goal(g) D_{\rm KL}(P_{\mathbf{F}}^{\pi_{\{g,.\}}}\|P_{\mathbf{F}}^{\bar\pi_{\{-,.\}}}).
\end{aligned} \end{equation} \end{linenomath*}
Finally, using the following standard identity for mutual information,
\begin{linenomath*} \begin{equation} \begin{aligned}
    I^{\pi_{\{.,.\}}}\!\left(G;\mathbf{F}^{K,\gamma}\mid S_0=s\right)
    =
    \sum_{g \in \Ss} p_\goal(g) D_{\rm KL}(P_{\mathbf{F}}^{\pi_{\{g,.\}}}\|P_{\mathbf{F}}^{\bar\pi_{\{-,.\}}}).
\end{aligned} \end{equation} \end{linenomath*}
Hence, the proof is complete.
$\hfill \square$
\end{exmode}

\subsubsection{Generalization of \autoref{prop:MIZunifvsf}}

\begin{myprop}[Generalization of \autoref{prop:MIZunifvsf} to the case of non-uniform $p_\goal$] \label{prop:MIZunifvsf_pg}
    Consider a policy $\pi_{\{.,.\}}^\skill$ and a \emph{deterministic} mapping $f: \Ss \to \Zz$, together with the resulting downstream policy $\pi_{\{g,.\}}$ and induced skill distribution
    \begin{linenomath*} \begin{equation} \begin{aligned}
        p_f(z) = \sum_{g \in \Ss} \cbl{p_\goal(g)}\, \Iin_{z=f(g)}
    \end{aligned} \end{equation} \end{linenomath*}
    Suppose $S'$ is defined based on the agent trajectory and takes at most $N_{s'}$ values, e.g., $N_{s'} = N_s$ when $S' = S_K$.
    Then, if $N_z \leq N_{s'}$,
    \begin{linenomath*} \begin{equation*} \begin{aligned}
        \Bigl|
        J_{\rm MISL}(\pi_{\{.,.\}}^\skill; s)
        -
        I^{\pi_{\{.,.\}}}\!\left(G; S' \mid S_0=s \right)
        \Bigr|
        \leq
        h(\delta)
        +
        \delta \log \big( N_{s'}^2 (N_{s'}-1) \big),
    \end{aligned} \end{equation*} \end{linenomath*}
    with $\delta := || p_{f} - {\rm Unif}(\Zz) ||_{\rm TV}$ the total variation distance and $h$ the binary entropy function.
\end{myprop}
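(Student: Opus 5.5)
The plan is to reuse the proof of \autoref{prop:MIZunifvsf} almost verbatim. The goal distribution enters only through the induced skill prior $p_f$, and every step of that argument treats $p_f$ as an arbitrary distribution on $\Zz$.

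First, I would establish the identity $I^{\pi_{\{.,.\}}}(G;S'\mid S_0=s)=I^{\pi^\skill_{\{.,.\}}}_{Z\sim p_f}(Z;S'\mid S_0=s)$ via the chain rule for mutual information. Expanding $I(G,Z;S')$ in two ways gives $I(G;S')+I(Z;S'\mid G)=I(Z;S')+I(G;S'\mid Z)$. Here $I(Z;S'\mid G)=0$ because $Z=f(G)$ is deterministic. Also $I(G;S'\mid Z)=0$ because $S'$ is generated by running $\pi^\skill_{\{Z,.\}}$, so $G\to Z\to S'$ is a Markov chain. Neither fact uses uniformity of $p_\goal$. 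The only change is that the marginal of $Z$ is now $p_f(z)=\sum_g p_\goal(g)\Iin_{z=f(g)}$.

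Next, with $u={\rm Unif}(\Zz)$, $p=p_f$, and $q_z$ the law of $S'$ under skill $z$, I would write both mutual informations as $H(\text{mixture})-\sum_z(\text{prior})(z)H(q_z)$. The difference then splits into two terms, $|H(\tilde\mu)-H(\mu)|$ and $|\sum_z(u(z)-p(z))H(q_z)|$.

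For the second term, $H(q_z)\le\log N_{s'}$ gives the bound $2\delta\log N_{s'}$. For the first term, the data-processing property of total variation through the common channel $z\mapsto q_z$ gives $\|\tilde\mu-\mu\|_{\rm TV}\le\delta$. I would then apply the continuity bound of \cite{sason2013entropy}, $|H(\tilde\mu)-H(\mu)|\le h(\delta_\mu)+\delta_\mu\log(N_{s'}-1)$. Finally, monotonicity of $h(x)+x\log(N_{s'}-1)$ on $[0,1-1/N_{s'}]$ lets me replace $\delta_\mu$ by $\delta$.

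The main obstacle is this last substitution, since it requires $\delta\le 1-1/N_{s'}$. As before, I would argue $\|p-u\|_{\rm TV}\le 1-1/N_z\le 1-1/N_{s'}$ using $N_z\le N_{s'}$. The first inequality holds for any distribution $p$ against the uniform distribution, since $\sum_z(1/N_z-p(z))_+\le 1-1/N_z$ with the extreme case being a point mass. This step is therefore unaffected by a non-uniform $p_\goal$. Summing the two bounds gives $h(\delta)+\delta\log\big(N_{s'}^2(N_{s'}-1)\big)$.
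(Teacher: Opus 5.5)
Your proposal is correct and follows the paper's own argument. The paper simply notes that the proof of \autoref{prop:MIZunifvsf} never uses uniformity of $p_\goal$, which enters only through $p_f$, and your retracing of the chain-rule identity, the two-term split, TV data processing, Sason's continuity bound, and the monotonicity substitution via $\delta\le 1-1/N_z\le 1-1/N_{s'}$ confirms this explicitly.
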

\begin{exmode}
\textit{Proof:}
The proof of \autoref{prop:MIZunifvsf} is essentially independent of the choice of $p_\goal$, hence the statement and its proof remain intact.
$\hfill \square$
\end{exmode}


\medskip


{
\small

\begin{thebibliography}{10}

\bibitem{mnih2015human}
V.~Mnih, K.~Kavukcuoglu, D.~Silver, A.~A. Rusu, J.~Veness, M.~G. Bellemare,
  A.~Graves, M.~Riedmiller, A.~K. Fidjeland, G.~Ostrovski, {\em et~al.},
  ``Human-level control through deep reinforcement learning,'' {\em Nature},
  vol.~518, pp.~529--533, 2015.

\bibitem{silver2016mastering}
D.~Silver, A.~Huang, C.~J. Maddison, A.~Guez, L.~Sifre, G.~Van Den~Driessche,
  J.~Schrittwieser, I.~Antonoglou, V.~Panneershelvam, M.~Lanctot, {\em et~al.},
  ``Mastering the game of go with deep neural networks and tree search,'' {\em
  Nature}, vol.~529, no.~7587, pp.~484--489, 2016.

\bibitem{brown2020language}
T.~Brown, B.~Mann, N.~Ryder, M.~Subbiah, J.~D. Kaplan, P.~Dhariwal,
  A.~Neelakantan, P.~Shyam, G.~Sastry, A.~Askell, {\em et~al.}, ``Language
  models are few-shot learners,'' in {\em Advances in Neural Information
  Processing Systems}, vol.~33, pp.~1877--1901, 2020.

\bibitem{wei2021finetuned}
J.~Wei, M.~Bosma, V.~Y. Zhao, K.~Guu, A.~W. Yu, B.~Lester, N.~Du, A.~M. Dai,
  and Q.~V. Le, ``Finetuned language models are zero-shot learners,'' {\em
  arXiv preprint arXiv:2109.01652}, 2021.

\bibitem{radford2021learning}
A.~Radford, J.~W. Kim, C.~Hallacy, A.~Ramesh, G.~Goh, S.~Agarwal, G.~Sastry,
  A.~Askell, P.~Mishkin, J.~Clark, {\em et~al.}, ``Learning transferable visual
  models from natural language supervision,'' in {\em International Conference
  on Machine Learning}, pp.~8748--8763, PMLR, 2021.

\bibitem{he2022masked}
K.~He, X.~Chen, S.~Xie, Y.~Li, P.~Doll{\'a}r, and R.~Girshick, ``Masked
  autoencoders are scalable vision learners,'' in {\em Proceedings of the
  IEEE/CVF conference on computer vision and pattern recognition},
  pp.~16000--16009, 2022.

\bibitem{kaelbling1993learning}
L.~P. Kaelbling, ``Learning to achieve goals,'' in {\em IJCAI}, 1993.

\bibitem{sutton2011horde}
R.~S. Sutton, J.~Modayil, M.~Delp, T.~Degris, P.~M. Pilarski, A.~White, and
  D.~Precup, ``Horde: A scalable real-time architecture for learning knowledge
  from unsupervised sensorimotor interaction,'' in {\em The 10th international
  conference on autonomous agents and multiagent systems-volume 2},
  pp.~761--768, 2011.

\bibitem{schaul2015universal}
T.~Schaul, D.~Horgan, K.~Gregor, and D.~Silver, ``Universal value function
  approximators,'' in {\em International Conference on Machine Learning},
  pp.~1312--1320, PMLR, 2015.

\bibitem{veeriah2018many}
V.~Veeriah, J.~Oh, and S.~Singh, ``Many-goals reinforcement learning,'' {\em
  arXiv preprint arXiv:1806.09605}, 2018.

\bibitem{florensa2018automatic}
C.~Florensa, D.~Held, X.~Geng, and P.~Abbeel, ``Automatic goal generation for
  reinforcement learning agents,'' in {\em International Conference on Machine
  Learning}, pp.~1515--1528, PMLR, 2018.

\bibitem{park2025dual}
S.~Park, D.~Mann, and S.~Levine, ``Dual goal representations,'' {\em arXiv
  preprint arXiv:2510.06714}, 2025.

\bibitem{bortkiewicz2024accelerating}
M.~Bortkiewicz, W.~Pa{\l}ucki, V.~Myers, T.~Dziarmaga, T.~Arczewski,
  {\L}.~Kuci{\'n}ski, and B.~Eysenbach, ``Accelerating goal-conditioned {RL}
  algorithms and research,'' {\em arXiv preprint arXiv:2408.11052}, 2024.

\bibitem{wang2023optimal}
T.~Wang, A.~Torralba, P.~Isola, and A.~Zhang, ``Optimal goal-reaching
  reinforcement learning via quasimetric learning,'' in {\em International
  Conference on Machine Learning}, pp.~36411--36430, PMLR, 2023.

\bibitem{eysenbach2020c}
B.~Eysenbach, R.~Salakhutdinov, and S.~Levine, ``C-learning: Learning to
  achieve goals via recursive classification,'' in {\em International
  Conference on Learning Representations}, 2021.

\bibitem{eysenbach2022contrastive}
B.~Eysenbach, T.~Zhang, S.~Levine, and R.~Salakhutdinov, ``Contrastive learning
  as goal-conditioned reinforcement learning,'' in {\em Advances in Neural
  Information Processing Systems} (A.~H. Oh, A.~Agarwal, D.~Belgrave, and
  K.~Cho, eds.), 2022.

\bibitem{mendonca2021discovering}
R.~Mendonca, O.~Rybkin, K.~Daniilidis, D.~Hafner, and D.~Pathak, ``Discovering
  and achieving goals via world models,'' in {\em Advances in Neural
  Information Processing Systems} (M.~Ranzato, A.~Beygelzimer, Y.~Dauphin,
  P.~Liang, and J.~W. Vaughan, eds.), vol.~34, pp.~24379--24391, Curran
  Associates, Inc., 2021.

\bibitem{ma2022offline}
J.~Y. Ma, J.~Yan, D.~Jayaraman, and O.~Bastani, ``Offline goal-conditioned
  reinforcement learning via $ f $-advantage regression,'' in {\em Advances in
  Neural Information Processing Systems}, vol.~35, 2022.

\bibitem{park2023metra}
S.~Park, O.~Rybkin, and S.~Levine, ``{METRA}: Scalable unsupervised {RL} with
  metric-aware abstraction,'' {\em arXiv preprint arXiv:2310.08887}, 2023.

\bibitem{gregor2016variational}
K.~Gregor, D.~J. Rezende, and D.~Wierstra, ``Variational intrinsic control,''
  {\em arXiv preprint arXiv:1611.07507}, 2016.

\bibitem{achiam2018variational}
J.~Achiam, H.~Edwards, D.~Amodei, and P.~Abbeel, ``Variational option discovery
  algorithms,'' {\em arXiv preprint arXiv:1807.10299}, 2018.

\bibitem{eysenbach2018diversity}
B.~Eysenbach, A.~Gupta, J.~Ibarz, and S.~Levine, ``Diversity is all you need:
  Learning skills without a reward function,'' in {\em International Conference
  on Learning Representations}, 2019.

\bibitem{sharma2019dynamics}
A.~Sharma, S.~Gu, S.~Levine, V.~Kumar, and K.~Hausman, ``Dynamics-aware
  unsupervised discovery of skills,'' {\em arXiv preprint arXiv:1907.01657},
  2019.

\bibitem{zheng2024can}
C.~Zheng, J.~Tuyls, J.~Peng, and B.~Eysenbach, ``Can a {MISL} fly? analysis and
  ingredients for mutual information skill learning,'' {\em arXiv preprint
  arXiv:2412.08021}, 2024.

\bibitem{park2022lipschitz}
S.~Park, J.~Choi, J.~Kim, H.~Lee, and G.~Kim, ``Lipschitz-constrained
  unsupervised skill discovery,'' in {\em International Conference on Learning
  Representations}, 2022.

\bibitem{levy2023hierarchical}
A.~Levy, S.~Rammohan, A.~Allievi, S.~Niekum, and G.~Konidaris, ``Hierarchical
  empowerment: Towards tractable empowerment-based skill learning,'' {\em arXiv
  preprint arXiv:2307.02728}, 2023.

\bibitem{eysenbach2021information}
B.~Eysenbach, R.~Salakhutdinov, and S.~Levine, ``The information geometry of
  unsupervised reinforcement learning,'' in {\em International Conference on
  Learning Representations}, 2022.

\bibitem{laskin2022cic}
M.~Laskin, H.~Liu, X.~B. Peng, D.~Yarats, A.~Rajeswaran, and P.~Abbeel,
  ``{CIC}: Contrastive intrinsic control for unsupervised skill discovery,''
  {\em arXiv preprint arXiv:2202.00161}, 2022.

\bibitem{ghosh2019learning}
D.~Ghosh, A.~Gupta, A.~Reddy, J.~Fu, C.~Devin, B.~Eysenbach, and S.~Levine,
  ``Learning to reach goals via iterated supervised learning,'' in {\em
  International Conference on Learning Representations}, 2019.

\bibitem{pong2019skew}
V.~H. Pong, M.~Dalal, S.~Lin, A.~Nair, S.~Bahl, and S.~Levine, ``Skew-fit:
  State-covering self-supervised reinforcement learning,'' {\em arXiv preprint
  arXiv:1903.03698}, 2019.

\bibitem{warde2018unsupervised}
D.~Warde-Farley, T.~Van~de Wiele, T.~Kulkarni, C.~Ionescu, S.~Hansen, and
  V.~Mnih, ``Unsupervised control through non-parametric discriminative
  rewards,'' {\em arXiv preprint arXiv:1811.11359}, 2018.

\bibitem{agarwal2023f}
S.~Agarwal, I.~Durugkar, P.~Stone, and A.~Zhang, ``f-policy gradients: A
  general framework for goal-conditioned {RL} using f-divergences,'' in {\em
  Advances in Neural Information Processing Systems}, vol.~36, 2023.

\bibitem{modirshanechi2025integrative}
A.~Modirshanechi, P.~Dayan, and E.~Schulz, ``An integrative framework for the
  human sense of control,'' {\em PsyArXiv}, 2025.

\bibitem{andrychowicz2017hindsight}
M.~Andrychowicz, F.~Wolski, A.~Ray, J.~Schneider, R.~Fong, P.~Welinder,
  B.~McGrew, J.~Tobin, O.~Pieter~Abbeel, and W.~Zaremba, ``Hindsight experience
  replay,'' in {\em Advances in Neural Information Processing Systems}
  (I.~Guyon, U.~V. Luxburg, S.~Bengio, H.~Wallach, R.~Fergus, S.~Vishwanathan,
  and R.~Garnett, eds.), vol.~30, Curran Associates, Inc., 2017.

\bibitem{bansal2017hamilton}
S.~Bansal, M.~Chen, S.~Herbert, and C.~J. Tomlin, ``Hamilton-{J}acobi
  reachability: {A} brief overview and recent advances,'' in {\em 2017 IEEE
  56th Annual Conference on Decision and Control (CDC)}, pp.~2242--2253, IEEE,
  2017.

\bibitem{chilakamarri2025reachability}
V.~K. Chilakamarri, Z.~Feng, and S.~Bansal, ``Reachability analysis for
  black-box dynamical systems,'' in {\em 2025 IEEE International Conference on
  Robotics and Automation (ICRA)}, pp.~3552--3558, IEEE, 2025.

\bibitem{bansal2021deepreach}
S.~Bansal and C.~J. Tomlin, ``Deepreach: A deep learning approach to
  high-dimensional reachability,'' in {\em 2021 IEEE International Conference
  on Robotics and Automation (ICRA)}, pp.~1817--1824, IEEE, 2021.

\bibitem{abate2008probabilistic}
A.~Abate, M.~Prandini, J.~Lygeros, and S.~Sastry, ``Probabilistic reachability
  and safety for controlled discrete time stochastic hybrid systems,'' {\em
  Automatica}, vol.~44, no.~11, pp.~2724--2734, 2008.

\bibitem{thorpe2019model}
A.~J. Thorpe and M.~M. Oishi, ``Model-free stochastic reachability using kernel
  distribution embeddings,'' {\em IEEE Control Systems Letters}, vol.~4, no.~2,
  pp.~512--517, 2019.

\bibitem{thorpe2021approximate}
A.~J. Thorpe, V.~Sivaramakrishnan, and M.~M. Oishi, ``Approximate stochastic
  reachability for high dimensional systems,'' in {\em 2021 American Control
  Conference (ACC)}, pp.~1287--1293, IEEE, 2021.

\bibitem{sontag2013mathematical}
E.~D. Sontag, {\em Mathematical Control Theory: Deterministic Finite
  Dimensional Systems}.
\newblock Springer New York, NY, 2013.

\bibitem{ogata2020modern}
K.~Ogata, {\em Modern Control Engineering}.
\newblock Prentice Hall, 5th~ed., 2010.

\bibitem{klyubin2005empowerment}
A.~Klyubin, D.~Polani, and C.~Nehaniv, ``Empowerment: a universal agent-centric
  measure of control,'' in {\em 2005 IEEE Congress on Evolutionary
  Computation}, vol.~1, pp.~128--135 Vol.1, 2005.

\bibitem{salge2014empowerment}
C.~Salge, C.~Glackin, and D.~Polani, ``Empowerment--an introduction,'' {\em
  Guided Self-Organization: Inception}, pp.~67--114, 2014.

\bibitem{jung2011empowerment}
T.~Jung, D.~Polani, and P.~Stone, ``Empowerment for continuous
  agent—environment systems,'' {\em Adaptive Behavior}, vol.~19, no.~1,
  pp.~16--39, 2011.

\bibitem{leibfried2019unified}
F.~Leibfried, S.~Pascual-D\'{\i}az, and J.~Grau-Moya, ``A unified {Bellman}
  optimality principle combining reward maximization and empowerment,'' in {\em
  Advances in Neural Information Processing Systems} (H.~Wallach,
  H.~Larochelle, A.~Beygelzimer, F.~d\textquotesingle Alch\'{e}-Buc, E.~Fox,
  and R.~Garnett, eds.), vol.~32, Curran Associates, Inc., 2019.

\bibitem{bharadhwaj2022information}
H.~Bharadhwaj, M.~Babaeizadeh, D.~Erhan, and S.~Levine, ``Information
  prioritization through empowerment in visual model-based {RL},'' in {\em
  International Conference on Learning Representations}, 2022.

\bibitem{becker2021exploration}
P.~Becker-Ehmck, M.~Karl, J.~Peters, and P.~van~der Smagt, ``Exploration via
  empowerment gain: Combining novelty, surprise and learning progress,'' in
  {\em ICML 2021 Workshop on Unsupervised Reinforcement Learning}, 2021.

\bibitem{gruaz2024merits}
L.~Gruaz, A.~Modirshanechi, S.~Becker, and J.~Brea, ``Merits of curiosity: A
  simulation study,'' {\em Open Mind}, vol.~9, pp.~1037--1065, 2025.

\bibitem{cao2025towards}
H.~Cao, F.~Feng, M.~Fang, S.~Dong, T.~Yang, J.~Huo, and Y.~Gao, ``Towards
  empowerment gain through causal structure learning in model-based
  reinforcement learning,'' in {\em The Thirteenth International Conference on
  Learning Representations}, 2025.

\bibitem{burda2018large}
Y.~Burda, H.~Edwards, D.~Pathak, A.~Storkey, T.~Darrell, and A.~A. Efros,
  ``Large-scale study of curiosity-driven learning,'' in {\em International
  Conference on Learning Representations}, 2019.

\bibitem{pathak2017curiosity}
D.~Pathak, P.~Agrawal, A.~A. Efros, and T.~Darrell, ``Curiosity-driven
  exploration by self-supervised prediction,'' in {\em Proceedings of the 34th
  International Conference on Machine Learning - Volume 70}, ICML'17,
  p.~2778–2787, JMLR.org, 2017.

\bibitem{capdepuy2011informational}
P.~Capdepuy, {\em Informational principles of perception-action loops and
  collective behaviours}.
\newblock PhD thesis, University of Hertfordshire, 2011.

\bibitem{myers2024learning}
V.~Myers, E.~Ellis, S.~Levine, B.~Eysenbach, and A.~Dragan, ``Learning to
  assist humans without inferring rewards,'' in {\em Advances in Neural
  Information Processing Systems}, 2024.

\bibitem{abel2025plasticity}
D.~Abel, M.~Bowling, A.~Barreto, W.~Dabney, S.~Dong, S.~Hansen, A.~Harutyunyan,
  K.~Khetarpal, C.~Lyle, R.~Pascanu, {\em et~al.}, ``Plasticity as the mirror
  of empowerment,'' {\em arXiv preprint arXiv:2505.10361}, 2025.

\bibitem{zheng2020can}
Z.~Zheng, J.~Oh, M.~Hessel, Z.~Xu, M.~Kroiss, H.~Van~Hasselt, D.~Silver, and
  S.~Singh, ``What can learned intrinsic rewards capture?,'' in {\em
  Proceedings of the 37th International Conference on Machine Learning} (H.~D.
  III and A.~Singh, eds.), vol.~119 of {\em Proceedings of Machine Learning
  Research}, pp.~11436--11446, PMLR, 2020.

\bibitem{choi2021variational}
J.~Choi, A.~Sharma, H.~Lee, S.~Levine, and S.~S. Gu, ``Variational empowerment
  as representation learning for goal-conditioned reinforcement learning,'' in
  {\em International Conference on Machine Learning}, pp.~1953--1963, PMLR,
  2021.

\bibitem{mohamed2015variational}
S.~Mohamed and D.~Jimenez~Rezende, ``Variational information maximisation for
  intrinsically motivated reinforcement learning,'' in {\em Advances in Neural
  Information Processing Systems} (C.~Cortes, N.~Lawrence, D.~Lee, M.~Sugiyama,
  and R.~Garnett, eds.), vol.~28, Curran Associates, Inc., 2015.

\bibitem{poole2019variational}
B.~Poole, S.~Ozair, A.~Van Den~Oord, A.~Alemi, and G.~Tucker, ``On variational
  bounds of mutual information,'' in {\em International Conference on Machine
  Learning}, pp.~5171--5180, PMLR, 2019.

\bibitem{reizinger2025skill}
P.~Reizinger, B.~Mucs{\'a}nyi, S.~Guo, B.~Eysenbach, B.~Sch{\"o}lkopf, and
  W.~Brendel, ``Skill learning via policy diversity yields identifiable
  representations for reinforcement learning,'' {\em arXiv preprint
  arXiv:2507.14748}, 2025.

\bibitem{puterman2014markov}
M.~L. Puterman, {\em Markov Decision Processes: Discrete Stochastic Dynamic
  Programming}.
\newblock John Wiley \& Sons, 1994.

\bibitem{dayan1993improving}
P.~Dayan, ``Improving generalization for temporal difference learning: The
  successor representation,'' {\em Neural Computation}, vol.~5, no.~4,
  pp.~613--624, 1993.

\bibitem{pong2018temporal}
V.~Pong, S.~Gu, M.~Dalal, and S.~Levine, ``Temporal difference models:
  Model-free deep rl for model-based control,'' {\em arXiv preprint
  arXiv:1802.09081}, 2018.

\bibitem{nair2018visual}
A.~V. Nair, V.~Pong, M.~Dalal, S.~Bahl, S.~Lin, and S.~Levine, ``Visual
  reinforcement learning with imagined goals,'' in {\em Advances in Neural
  Information Processing Systems}, vol.~31, 2018.

\bibitem{krass2002achieving}
D.~Krass and O.~J. Vrieze, ``Achieving target state-action frequencies in
  multichain average-reward markov decision processes,'' {\em Mathematics of
  Operations Research}, vol.~27, no.~3, pp.~545--566, 2002.

\bibitem{dufour2022maximizing}
F.~Dufour and T.~Prieto-Rumeau, ``Maximizing the probability of visiting a set
  infinitely often for a countable state space markov decision process,'' {\em
  Journal of Mathematical Analysis and Applications}, vol.~505, no.~2,
  p.~125639, 2022.

\bibitem{yu2020meta}
T.~Yu, D.~Quillen, Z.~He, R.~Julian, K.~Hausman, C.~Finn, and S.~Levine,
  ``Meta-world: A benchmark and evaluation for multi-task and meta
  reinforcement learning,'' in {\em Proceedings of the Conference on Robot
  Learning} (L.~P. Kaelbling, D.~Kragic, and K.~Sugiura, eds.), vol.~100 of
  {\em Proceedings of Machine Learning Research}, pp.~1094--1100, PMLR, 2020.

\bibitem{cover1999elements}
T.~M. Cover, {\em Elements of Information Theory}.
\newblock John Wiley \& Sons, 1999.

\bibitem{rioul2023interplay}
O.~Rioul, ``The interplay between error, total variation, alpha-entropy and
  guessing: {Fano} and {Pinsker} direct and reverse inequalities,'' {\em
  Entropy}, vol.~25, no.~7, p.~978, 2023.

\bibitem{tebbe1968uncertainty}
D.~Tebbe and S.~Dwyer, ``Uncertainty and the probability of error (corresp.),''
  {\em IEEE Transactions on Information theory}, vol.~14, no.~3, pp.~516--518,
  1968.

\bibitem{sason2013entropy}
I.~Sason, ``Entropy bounds for discrete random variables via maximal
  coupling,'' {\em IEEE Transactions on Information Theory}, vol.~59, no.~11,
  pp.~7118--7131, 2013.

\end{thebibliography}
 
}
\end{document}